\newtheorem{theorem}{Theorem}
\newtheorem{lemma}{Lemma}
\newtheorem{corollary}{Corollary}
\newtheorem{proposition}{Proposition}
\newtheorem{definition}{Definition}
\DeclareMathOperator*{\nth}{^{\text{th}}}
\DeclareMathOperator*{\argmax}{arg\,max}
\DeclareMathOperator*{\argmin}{arg\,min}
\DeclareMathOperator{\E}{\mathbb{E}}
\renewcommand{\Pr}{\operatorname{\mathbb{P}}\nolimits}
\newcommand{\ind}[1]{\mathop{{\bf 1}\left[#1\right]}}
\newcommand{\reals}{\mathbb{R}}
\newcommand{\cF}{\mathcal{F}}
\newcommand{\loss}{\ell}
\newcommand{\bigmid}{\mathrel{}\middle|\mathrel{}}
\newcommand{\Woe}{W}
\newcommand{\woe}{w} 
\newcommand{\cE}{\mathcal{E}}
\newcommand{\elfRH}{\textsc{FPL-ELF}}
\newcommand{\elfFH}{\textsc{FPL-SELF}} 
\newcommand{\banditelfRH}{\textsc{FPL-ELF-$\varepsilon$}}
\newcommand{\boldbanditelfRH}{\textsc{FPL-ELF-$\bm{\varepsilon}$}}
\newcommand{\banditelfFH}{\textsc{FPL-SELF-$\varepsilon$}}
\newcommand{\fh}{\textsc{self}}
\newcommand{\gelfRH}{\textsc{General-FPL-ELF}}
\newcommand{\bmin}{\lambda_{\min}}
\newcommand{\bmax}{\lambda_{\max}}
\newcommand{\bzero}{\lambda_0}
\newcommand{\babs}{\lambda}
\newcommand{\diff}{\mathrm{d}} 
\newcommand{\Rin}{R_{\mathrm{in}}} 
\newcommand{\Rout}{R_{\mathrm{out}}} 
\newcommand{\lPoi}{l_{\mathrm{Poi}}} 
\newcommand{\uPoi}{u_{\mathrm{Poi}}}
\newcommand{\locallPoi}{l'_{\mathrm{Poi}}}
\newcommand{\localuPoi}{u'_{\mathrm{Poi}}}
\newcommand{\Przero}{\Pr_0}
\newcommand{\Ezero}{\E_0}
\newcommand{\strictlytruthful}{\text{strictly truthful}}
\newcommand{\truthful}{\text{truthful}}
\newcommand{\truthfulness}{\text{truthfulness}}
\newcommand{\Truthfulness}{\text{Truthfulness}}
\title{No-Regret Incentive-Compatible Online Learning under Exact Truthfulness with Non-Myopic Experts}
\author{
Junpei Komiyama\thanks{Stern School of Business, New York University, \texttt{junpei@komiyama.info}}
\qquad
Nishant A.~Mehta\thanks{Computer Science, University of Victoria, \texttt{nmehta@uvic.ca}}
\qquad
Ali Mortazavi\thanks{Computer Science, University of Victoria, \texttt{alithemorty@gmail.com}}
}
\begin{document}

\maketitle

\begin{abstract}
We study an online forecasting setting in which, over $T$ rounds, $N$ strategic experts each report a forecast to a mechanism, the mechanism selects one forecast, and then the outcome is revealed. In any given round, each expert has a belief about the outcome, but the expert wishes to select its report so as to maximize the total number of times it is selected. 
The goal of the mechanism is to obtain low belief regret: the difference between its cumulative loss (based on its selected forecasts) and the cumulative loss of the best expert in hindsight (as measured by the experts' beliefs). 
We consider exactly truthful mechanisms for non-myopic experts, meaning that truthfully reporting its belief strictly maximizes the expert's subjective probability of being selected in any future round. Even in the full-information setting, it is an open problem to obtain the first no-regret exactly truthful mechanism in this setting. 
We develop the first no-regret mechanism for this setting via an online extension of the Independent-Event Lotteries Forecasting Competition Mechanism (I-ELF). By viewing this online I-ELF as a novel instance of Follow the Perturbed Leader (FPL) with noise based on random walks with loss-dependent perturbations, we obtain $\tilde{O}(\sqrt{T N})$ regret. Our results are fueled by new tail bounds for Poisson binomial random variables that we develop. We extend our results to the bandit setting, where we give an exactly truthful mechanism obtaining $\tilde{O}(T^{2/3} N^{1/3})$ regret; this is the first no-regret result even among approximately truthful mechanisms.
\end{abstract}

\section{Introduction}

A forecasting competition is an information-elicitation mechanism 
that takes in $N$ experts' probability forecasts for multiple outcomes and awards a cash prize to a single expert.
Supposing that 
each expert has a belief about (i.e., a probability distribution over) each outcome, 
the mechanism wishes to maximize the probability of awarding the prize to the expert with the best \emph{beliefs}, regardless of what the expert actually reported. More precisely, 
given a loss function that assigns a loss to any probablity forecast under any outcome realization, the goal of the mechanism is to maximize the probability that it awards the prize to an expert whose beliefs have cumulative loss that is approximately minimal. 
Recent works have significantly advanced our understanding of the design of forecasting competition mechanisms, showing both \emph{(exactly) truthful} mechanisms \citep{witkowski2018incentive,witkowski2023incentive} and \emph{approximately truthful} mechanisms \citep{frongillo2021efficient}, complete with event complexity guarantees; 
briefly, the \emph{event complexity} of a mechanism is the number of outcomes required to guarantee that the mechanism outputs, with some fixed confidence $1 - \delta$, an expert whose beliefs induce cumulative loss within $\varepsilon$ of the minimum.

Forecasting competitions naturally extend to the online setting where an online mechanism maintains a probability distribution over the experts. Here, in each of $T$ rounds: the experts forecast one outcome, the mechanism forms its own report by using its distribution over experts (e.g., selecting one expert's report at random), and the outcome is finally revealed. The simplest online payment rule selects a single winner after the final round.\footnote{Alternatively, one can award a prize at the end of each round; our work encompasses this setting as well.} The goal of the mechanism is to obtain low \emph{belief regret}, which is the difference between the cumulative loss of the mechanism (based on its reports) and the cumulative loss of the best expert in hindsight (as measured by the expert's beliefs). This online setting was recently studied by \citet{freeman2020no} and \citet{frongillo2021efficient} (a related setting\footnote{\citet{roughgarden2017online} also considered this problem but in a slightly, yet importantly, different setting where the incentive of a (myopic) expert is to maximize its unnormalized weight (assigned by a weighted-based mechanism), whereas we consider the incentive to be the probability of being selected.} was introduced earlier by \citet{roughgarden2017online}). However, while \citet{freeman2020no} briefly discussed the case of non-myopic experts, their belief regret bounds were restricted to myopic experts rather than experts that aim to maximize their total future payoff; the myopic experts assumption is particularly problematic when a prize is only awarded after the final round of the game. \citet{frongillo2021efficient} do consider non-myopic experts, but they focus on approximately (strictly) truthful mechanisms and use a weaker notion of incentive compatibility as compared to \citet{witkowski2023incentive}. 
\Truthfulness{}\footnote{We use ``\truthful{}'' to mean honest reporting is an optimal strategy and ``\strictlytruthful{}'' to mean honest reporting is a uniquely optimal strategy.} is arguably of interest in its own right: if an expert has no incentive to spend resources on deciding how to strategically report, it may instead use those resources in other ways, including investing in obtaining better-informed beliefs. 
It is an open problem of some importance \cite[Section 5]{freeman2020no} to obtain the first mechanism that is no-regret and \truthful{} for non-myopic experts. 

In this work, we resolve the open problem: we present the first no-regret mechanisms for the online forecasting competition setting that are \truthful{} for non-myopic experts; in fact, our mechanisms are \strictlytruthful{}. We do this not only for the full-information setting --- which was described above --- but also for the bandit feedback setting that we soon describe below. 
In the bandit setting, nothing was previously known about no-regret learning for non-myopic experts, even when considering mechanisms that are not \truthful{}; 
prior work by \citet{freeman2020no} and \citet{zimmert2024productive} was restricted to myopic experts.

Our mechanism for the full-information setting, \elfRH{}, is essentially an online extension of the Independent-Event Lotteries Forecasting Competition Mechanism (I-ELF) of \citet{witkowski2023incentive}; the mechanism ELF-X, also essentially an online extension of I-ELF, was already proposed as a candidate no-regret algorithm (without regret analysis) by \citet{freeman2020no}. 
We show that the expected regret of \elfRH{} is of order $O(\sqrt{T N} \log T)$ for $T \ge N$. 
Unfortunately, we suspect that a dependence of at least $\Omega(\sqrt{T N})$ is fundamental for any \truthful{} mechanism (under non-myopic experts), although at present we lament that we do not have a lower bound; we discuss our conjecture in more detail in Section~\ref{sec:discussion}. 
Conceptually, a major contribution of our work is showing how the regret of this online extension of I-ELF can be viewed as a form of Follow the Perturbed Leader, a fundamental online learning algorithm paradigm dating back to \citet{hannan1957approximation}. Specifically, we show that \elfRH{} is a sophisticated generalization of the algorithm Prediction by Random-Walk Perturbation \citep{devroye2013prediction}. Our most major technical contribution is showing how to extend the ingenious regret analysis of \citet{devroye2013prediction} to bound the regret of \elfRH{}. Whereas the prior analysis could proceed via concentration and anti-concentration results for binomial random variables, our analysis is significantly more challenging due to the need to establish similar results for Poisson binomial random variables.

In the bandit setting, in each round, the mechanism selects a single expert and observes the report only of that expert. Consequently, once the outcome for the round is revealed, the mechanism only knows the loss of the selected expert. In this setting, it becomes natural to pay an expert a fixed sum of money in each round it is selected. Pragmatically, only the selected expert needs to be aware of its belief, a useful prospect if experts form their beliefs based on investing in research. 
Our mechanism \banditelfRH{} --- an \emph{exploration-separated} version of \elfRH{} --- is the first-ever no-regret learning algorithm for non-myopic experts under bandit feedback. 
Moreover, just like our mechanism for the full-information setting, our mechanism for the bandit setting is \truthful{} for non-myopic experts. 
Our bound on the expected regret is of order $O\left( T^{2/3} N^{1/3} \log T \right)$. The $T^{2/3}$ appears (rather than $\sqrt{T}$) because our mechanism is exploration-separated; we believe that this dependence on $T$ is unimprovable for any \truthful{} mechanism. As we discuss in Section~\ref{sec:discussion}, the $T^{2/3}$ factor can be mitigated if the forecaster receives an additional report at each round.

The sequel of the paper progresses as follows. The next section formally presents the problem setting. We then present our results for the full-information setting in Section~\ref{sec:full-info} and the bandit setting in Section~\ref{sec:bandit}. In Section~\ref{sec:regret-analysis}, we provide a sketch of our regret analysis for both feedback settings. Finally, we conclude in Section~\ref{sec:discussion} with a discussion of extensions and open problems.

\section{The Incentive-Compatible Online Forecasting Game}

We now formally present the problem of incentive-compatible online forecasting with non-myopic experts, starting with the case of full-information feedback. This game involves $N$ experts, each of whom has beliefs related to $T$ binary\footnote{It is straightforward to extend all our results to the case of finite outcome spaces.} outcomes that will be revealed sequentially. The actual structure of the beliefs --- which we inherit from \citet{witkowski2023incentive} --- is sophisticated; we defer a full description of the belief structure until later in this section. 
In each round $t$, each expert $j$ provides a probability forecast, or \emph{report}, $r_{j,t} \in [0, 1]$ for that round's outcome. An online mechanism $M$ maintains a probability distribution over the experts, assigning probability $p_{j,t}$ to expert $j$ in round $t$; here, $p_{j,t}$ is based upon all the information revealed prior to round $t$ as well as any internal randomization used by the mechanism thus far. In each round, the mechanism plays its own report $\hat{r}_t$ either by selecting an expert $I_t \sim p_t$ at random and setting $\hat{r}_t = r_{I_t,t}$ or by hedging to form an aggregated report $\hat{r}_t = \sum_{j=1}^N p_{j,t} r_{j,t}$. 

\begin{algorithm}[t]
\DontPrintSemicolon
For $j \in [N]$, Nature selects belief distribution $P^{(j)} \in \left( \Delta \bigl( \{0, 1\} \times [0,1]^{N-1} \bigr) \right)^T$ \;
Mechanism initializes $p_1$\;
\For{$t\leftarrow 1$ \KwTo $T$}{
  Mechanism draws $I_t \sim p_t$ \;
  Expert $I_t$ gets payoff \$1 \;
  \emph{Full-information feedback:} \;
  \quad Each expert $j \in [N]$ selects and reveals report $r_{j,t} \in [0, 1]$ \;
  \emph{Bandit feedback:} \;
  \quad Expert $I_t$ selects and reveals report $r_{I_t,t} \in [0, 1]$ \;
  Mechanism selects report $\hat{r}_t \in [0, 1]$ \;
  Nature reveals outcome $o_t \in \{0, 1\}$ and Mechanism suffers loss $\loss(\hat{r}_t, o_t)$ \;
  Mechanism sets $p_{t+1}$\;
}
Expert $I_{T+1}$ gets payoff \$1\;
  
\caption{Online Forecasting Protocol with Full-Information or Bandit Feedback}
\end{algorithm}

We adopt incentives of a general form that was introduced by \citet{frongillo2021efficient}. 
In round $t$, each expert $j$ wishes to maximize the incentive
\begin{align} 
\sum_{t'=t+1}^{T+1} \alpha_j(t' \mid t) \ind{I_{t'} = j} , \label{eqn:general-incentive}
\end{align}
where, as before, $I_t \sim p_t$. 
In the above expression, 
we assume, for all $j \in [N]$ and all $t \in [T]$, that $\alpha_j(t' \mid t) \geq 0$
for all $t' \in\{t + 1, t + 2, \ldots, T + 1\}$ with the inequality being strict for at least one value of $t'$. 
Note that $\alpha_j(t' \mid t)$ is the value that expert $j$, when in round $t$, assigns to being selected in some future round $t'$. As mentioned by \citet{frongillo2021efficient}, the form \eqref{eqn:general-incentive} can capture time-inconsistent preferences since $\alpha_j(t' \mid t)$ can vary freely with $t$.

A myopic expert \citep{freeman2020no,zimmert2024productive} is a special version of such an expert that wishes to maximize the incentive for the next round. Namely, 
$\alpha_j(t' \mid t)$ is $1$ for $t' = t+1$ and is zero for any other $t'$. 
Another type expert that may be of importance is an ``all's well that ends well'' expert, for whom $\alpha_j(t' \mid t)$ is $1$ for $t' = T+1$ and is zero for any other $t'$. 
More generally, there can be other ``single-target'' experts that target just one round, such as $\alpha_j(t' \mid t)$ being $1$ for some particular $t' > t$ and zero anywhere else.

In our case, we consider a truly non-myopic expert that can target any round $t'$. Namely, our goal is to provide a guarantee for any $(\alpha_j(t' \mid t))_{t,t'}$. A sufficient and necessary condition of incentive compatibility across all non-myopic experts is equivalent to simultaneously ensuring incentive compatibility for all single-target experts. This is because maximizing the incentives at any future round implies maximizing the incentives for any weighted sum of the future rounds. 

One important aspect of each expert $j$'s belief for a given round $t$ is that the expert maintains a marginal distribution, specified by success probability $b_{j,t} \in [0, 1]$, over that round's binary outcome $o_t$. We assume that the experts' beliefs are determined by an oblivious adversary and hence are fixed before the game begins. Let $\loss \colon [0, 1] \times \{0, 1\} \rightarrow [0, 1]$ be a loss function that is \emph{strictly proper} \citep{gneiting2007strictly,buja2005loss}. Briefly, a loss function is strictly proper if, for all beliefs $b \in [0, 1]$ and all reports $r \in [0, 1]$ such that $r \neq b$, we have
\begin{align*}
\E_{o \sim \mathrm{Bernoulli}(b)} \left[ \loss(b, o) \right] 
< \E_{o \sim \mathrm{Bernoulli}(b)} \left[ \loss(r, o) \right] .
\end{align*}
 The goal of the mechanism is to achieve low \emph{belief regret}, defined as the mechanism's regret against the best expert in hindsight when that expert is evaluated according to its internal beliefs:
\begin{align*}
R_T = \sum_{t=1}^T \loss(\hat{r}_t, o_t) - \min_{j \in [N]} \sum_{t=1}^T \loss(b_{j,t}, o_t) .
\end{align*}

Naturally, if experts wildly misreport their beliefs, meaning that $r_{j,t}$ can be far from $b_{j,t}$, in general the mechanism has no hope of guaranteeing low belief regret. An important subclass of mechanisms are those that are \emph{\strictlytruthful{}}, 
meaning that it is strictly in an expert's best interest (according to its beliefs) to report honestly, i.e., to select $r_{j,t} = b_{j,t}$. To formalize this, we introduce our online generalization of a notion of incentive compatibility that was first introduced by \citet{witkowski2023incentive}.

\subsection{Online incentive compatibility under belief independence}

Focusing on the one-shot game setting, \citet{witkowski2023incentive} introduced a strong notion of incentive compatibility that they called \emph{incentive compatibility under belief independence}, itself a relaxation of the concept of \emph{robust incentive compatibility} that they also introduced. We will extend incentive compatibility under belief independence to the online setting. Let us first revisit the definitions for the one-shot game. 

\begin{definition}[Belief Independence \citep{witkowski2023incentive}] \label{def:belief-indep}
For any expert $i \in [N]$, let $D^{(i)}$ be a joint probability distribution over the outcome sequence $o_1, \ldots, o_T$ and the sequence of reports of the other experts $r_{-i,1}, \ldots, r_{-i,T}$. We say that $D^{(i)}$ is \emph{belief independent} if it factorizes across rounds; that is, $D^{(i)}$ is a product of distributions $D^{(i)}_t$, each of which specifies expert $i$'s distribution over $(o_t, r_{-i,t})$.
\end{definition}

\begin{definition}[Incentive Compatibility under Belief Independence \citep{witkowski2023incentive}] \label{def:one-shot-game-ic}
We say that a mechanism $M$ is \emph{strictly incentive compatible under belief independence} (hereafter, ``IC-BI'') 
if, for all experts $i \in [N]$ and all belief independent distributions $D^{(i)}$ over both the outcome sequence $o_{1:T}$ and reports of the other experts $r_{-i,1:T}$, the following holds:
For all reports $r_{i,1:T} \neq b_{i,1:T}$ of expert $i$ that are nontruthful in at least one round,
\begin{align}
\Pr_{r_{-i,1:T}, o_{1:T} \sim D^{(i)}} 
\left(
  M(r_{-i,1:T}, b_{i,1:T}, o_{1:T}) = i 
\right) 
> 
\Pr_{r_{-i,1:T}, o_{1:T} \sim D^{(i)}} 
\left(
  M(r_{-i,1:T}, r_{i,1:T}, o_{1:T}) = i 
\right) . \label{eqn:ic-bi}
\end{align}
\end{definition}

Another notion of incentive compatibility commonly considered in the literature is \emph{immutable-belief incentive compatibility} \citep{lambert2015axiomatic,chen2019randomized,frongillo2021efficient}, in which experts model only the outcomes (and not the other experts' reports). It is not hard to see that IC-BI is strictly stronger than immutable-belief incentive compatibility, meaning that all IC-BI mechanisms are immutable-belief incentive compatible while the reverse is not true in general (see \cite[Appendix A]{witkowski2023incentive}). Note that all our results are for (online extensions of) IC-BI.

Shifting to the online setting, an online mechanism is essentially a mechanism that allows for a flexible stopping time. 
\begin{definition}[Online Mechanism] \label{def:online-mechanism}
We say that $M$ is an \emph{online mechanism} if it is a randomized mapping (that is, having its own internal randomization) of the form
\begin{align*}
M \colon \bigcup_{T' \in \{0, 1, \ldots, T\}} \left( [0, 1]^N \times \{0, 1\} \right)^{T'} \rightarrow [N] .
\end{align*}
\end{definition}

Because we consider an online notion of incentive compatibility, our definitions below need to hold for a flexible time horizon $T' \leq T$. Indeed, if an expert must report truthfully in an arbitrary round $t$ in order to maximize their expected payment given just after an arbitrary future round $T'$, then it follows that in every round, truthfully reporting strictly maximizes an expert's expected future cumulative payoff. We now formalize this notion of incentive compatibilty; note that the bandit setting calls for a slight adjustment compared to the full-information setting since it only makes sense to condition on the past reports that were actually made.

\begin{definition}[Online Incentive Compatibility under Belief Independence] \label{def:ic}
In the full-information setting, we say that an online mechanism $M$ is \emph{online strictly incentive compatible under belief independence} (hereafter, ``Online IC-BI'') if, for all experts $i \in [N]$ and all belief independent distributions $D^{(i)}$ over both the outcome sequence $o_{1:T}$ and reports $r_{-i,1:T}$ of the other experts, the following holds:

For all time horizons $T' \leq T$, all rounds $t$ with any history $r_{1:t-1}, o_{1:t-1}$, for all reports $r_{i,t:T'} \neq b_{i,t:T'}$ of expert $i$ that are nontruthful in at least one upcoming round,
\begin{align}
\begin{aligned}
&
\Pr_{r_{-i,t:T'}, o_{t:T'} \sim D^{(i)}} 
\left(
  M(r_{-i,1:T'}, r_{i,1:t-1}, b_{i,t:T'}, o_{1:T'}) = i \mid r_{1:t-1}, o_{1:t-1} 
\right) \\
&> 
\Pr_{r_{-i,t:T'}, o_{t:T'} \sim D^{(i)}} 
\left(
  M(r_{-i,1:T'}, r_{i,1:t-1}, r_{i,t:T'}, o_{1:T'}) = i \mid r_{1:t-1}, o_{1:t-1} 
\right) . 
\end{aligned}
\label{eqn:online-ic-bi}
\end{align}
\end{definition}

Just as IC-BI is strictly stronger than immutable-belief incentive compatibility, Online IC-BI is strictly stronger than the analogously defined online version of immutable-belief incentive compatibility (assuming that each expert believes the outcomes are independent).

From a bandit algorithm's perspective, the history at the start of round $t$ consists of $(I_s, r_{I_s,s}, o_s)_{s \in [t-1]}$. However, from an expert's perspective, the history is potentially richer if the expert also somehow obtains knowledge of the algorithm's random variable realizations. As a concrete example, in each round, our algorithm for the bandit setting randomly decides whether that round is an ``exploration round'' or an ``exploitation round'' (see Section~\ref{sec:bandit-alg} for more information). 
The notion of incentive compatibility we present below is robust to the presence of experts who have this richer notion of history. We assume that this additional information consists only of those random variable realizations that will be used by the mechanism in future rounds, as other realizations that are immediately discarded are not part of the algorithm's persistent state. Formally, we suppose that in any round $t$, this information $\sigma_t$ is an element of an information set $\mathcal{S}$. Collectively, we refer to $\sigma_{1:t}$ as the algorithm's \emph{transcript} at the end of round $t$.

\begin{definition}[Bandit Mechanism] \label{def:bandit-mechanism}
For a mechanism $M$, let $\mathcal{S}$ be the algorithm's per-round information set. We say that $M$ is a \emph{bandit mechanism} if it is a randomized mapping (that is, having its own internal randomization) of the form
\begin{align*}
M \colon \bigcup_{T' \in \{0, 1, \ldots, T\}} \left( \mathcal{S} \times [N] \times [0, 1] \times \{0, 1\} \right)^{T'} \rightarrow [N] .
\end{align*}
\end{definition}

In the definition below, let $e_j \in \reals^N$ be the $j\nth$ standard basis vector for any $j \in [N]$.

\begin{definition}[Bandit Online Incentive Compatibility under Belief Independence] \label{def:bandit-ic}
In the bandit setting, we say that an online mechanism $M$ is \emph{bandit online strictly incentive compatible under belief independence} (hereafter, ``Bandit Online IC-BI'') if, for all experts $i \in [N]$ and all belief independent distributions $D^{(i)}$ over the outcome sequence $o_{1:T}$ and reports $r_{-i,1:T}$ of the other experts, the following holds:

For all time horizons $T' \leq T$, all rounds $t$ with any history $(I_s, r_{I_s,s}, o_s)_{s \in [t-1]}$ and transcript $\sigma_{1:t-1}$, for all reports $r_{i,t:T'} \neq b_{i,t:T'}$ of expert $i$ that are nontruthful in at least one upcoming round,
\begin{align}
\begin{aligned}
&
\Pr_t
\left(
  M(
    \sigma_{1:t-1}, 
    I_{1:T'}, 
    ((r_{-i,s}, b_{i,s}) \cdot e_{I_s})_{s \in \{t, \ldots, T'\}}, 
    o_{1:T'}
  ) 
  = i
\right) \\
&> 
\Pr_t 
\left(
  M(
    \sigma_{1:t-1}, 
    I_{1:T'}, 
    ((r_{-i,s}, r_{i,s}) \cdot e_{I_s})_{s \in \{t, \ldots, T'\}}, 
    o_{1:T'}
  ) 
 = i
\right) ,
\end{aligned}
\label{eqn:bandit-online-ic-bi}
\end{align}
where $\Pr_t$ is defined as 
$\Pr_t(\cdot) := \Pr_{r_{-i,t:T'}, o_{t:T'} \sim D^{(i)}} ( \cdot \mid (I_s, r_{I_s,s}, o_s)_{s \in [t-1]}, \sigma_{1:t-1} )$.
\end{definition}

Note that if a mechanism is Bandit Online IC-BI, then for all transcripts $\sigma_{1:t-1}$, even experts with access to $\sigma_{1:t-1}$ are strictly incentivized to report truthfully in round $t$. Mathematically, the strict inequality \eqref{eqn:bandit-online-ic-bi} holds when conditioning on arbitrary $\sigma_{1:t-1}$. 
Now, consider an expert that does not know $\sigma_{1:t-1}$ but has a subjective belief over $\sigma_{1:t-1}$. Such an expert is strictly incentivized to be truthful since strict inequality in \eqref{eqn:bandit-online-ic-bi} still holds even when taking an expectation over $\sigma_{1:t-1}$ on both sides.

As we will see in Sections~\ref{sec:full-info}~and~\ref{sec:bandit} respectively, each of \elfRH{} (for the full-information setting) and \banditelfRH{} (for the bandit setting) is incentive compatible under its appropriate notion of online incentive compatibility.

\section{Full-information Setting} \label{sec:full-info}

For $j \in [N]$ and $t \in [T]$, let $\loss_{j,t}$ denote $\loss(r_{j,t}, o_t)$. 
We now present \elfRH{} (Algorithm~\ref{alg:simpelf-rh}), our algorithm for the full-information setting. In the remainder of this section, we explain how we derived \elfRH{} from I-ELF \citep{witkowski2023incentive}, show that \elfRH{} is online strictly incentive compatible under belief independence, and bound this algorithm's expected regret. 
Our regret analysis proceeds by observing that the expected regret of \elfRH{} is equal to that of \elfFH{} (Algorithm~\ref{alg:simpelf-fh}, a stabilized version of \elfRH{} to be presented below); we then bound the expected regret of \elfFH{}.

\begin{figure}[t]
\begin{minipage}[t]{0.515\linewidth}
\begin{algorithm}[H]
\DontPrintSemicolon
For $j \in [N]$, draw $\Woe_{j,0} \sim \mathrm{Uniform}([-\frac{1}{4 N}, \frac{1}{4N}])$\;
\For{$t\leftarrow 1$ \KwTo $T$}{
  Select expert $\displaystyle I_t = \argmin_{j \in [N]} \sum_{s=0}^{t-1} \Woe_{j,s}$\;
  Observe losses $\loss_{j,t}$ for all $j \in [N]$\;
  For $j \in [N]$, draw $\Woe_{j,0} \sim \mathrm{Uniform}([-1/4, 1/4])$\; 
  \For{$s\leftarrow 1$ \KwTo $t$}{
    Draw $C_s \sim \mathrm{Uniform}([N])$ \;
    For $j \in [N]$, draw $\Woe_{j,s}$ as
    $\begin{cases}
      \mathrm{Bernoulli} \left( \frac{1}{2} + \frac{1}{4} \loss_{j,s} \right) & \text{if } j = C_s \\
      0 & \text{if } j \in [N] \setminus \{C_s\}
    \end{cases}$ \;
  }
}
\caption{\label{alg:simpelf-rh} \elfRH{}}
\end{algorithm}
\end{minipage}
\hfill
\begin{minipage}[t]{0.475\linewidth}
\begin{algorithm}[H]
\DontPrintSemicolon
For $j \in [N]$, draw $\Woe_{j,0} \sim \mathrm{Uniform}([-\frac{1}{4 N}, \frac{1}{4N}])$\;
\For{$t\leftarrow 1$ \KwTo $T$}{
  Select expert $\displaystyle I_t = \argmin_{j \in [N]} \sum_{s=0}^{t-1} \Woe_{j,s}$\;
  Observe losses $\loss_{j,t}$ for all $j \in [N]$\;
  Draw $C_t \sim \mathrm{Uniform}([N])$ \;
  For $j \in [N]$, draw $\Woe_{j,t}$ as 
  $\begin{cases}
    \mathrm{Bernoulli} \left( \frac{1}{2} + \frac{1}{4} \loss_{j,t} \right) & \text{if } j = C_t \\ 
    0 & \text{if } j \in [N] \setminus \{C_s\} 
    \end{cases}$ \;
    \vspace{2.378em}
}
\caption{\label{alg:simpelf-fh} \elfFH{}}
\end{algorithm}
\end{minipage}
\end{figure}

\subsection{From I-ELF to \elfRH{}}

The Independent-Event Lotteries Forecasting Competition Mechanism (I-ELF) is a mechanism for the (one-shot game) forecasting competition setting. When adapted to losses, it works as follows. For each outcome $o_t$, a lottery is run that awards precisely one point (an internal win) to one expert as follows: the probability that expert $j$ gets the point is
\begin{align}
\frac{1}{N} \left( 1 - \loss(r_{j,t}, o_t) + \frac{1}{N-1} \sum_{k \in [N] \setminus \{j\}} \loss(r_{k,t}, o_t) \right) . \label{eqn:I-ELF}
\end{align}
The winner of the competition is the expert with the most points, with ties broken uniformly at random. 
As shown by \citet{freeman2020no}, any mechanism for the one-shot game setting admits a natural extension to the online forecasting competition setting: in each round $t$, run the one-shot game mechanism for the past $t-1$ rounds and set $I_t$ to be the winner of the competition. \citet{freeman2020no} applied this extension to a slightly modified version\footnote{\citet{freeman2020no} slightly modified I-ELF by changing the sum in \eqref{eqn:I-ELF} to be over \emph{all} experts and hence also changing the normalization factor from $\frac{1}{N-1}$ to $\frac{1}{N}$; as we describe in Appendix~\ref{app:elf-x}, this change is necessary for no-regret learning.} of I-ELF and called the result ELF-X.\footnote{In 2020, only the conference version \citep{witkowski2018incentive} was in existence, and in that version the algorithm ``I-ELF'' from \cite{witkowski2023incentive} was called ``ELF''.}

We follow the same approach for extending a mechanism from the one-shot game to the online setting, but we first modify the one-shot game mechanism (I-ELF mechanism) in three ways. The first modification is for simplicity: we drop the summation term in \eqref{eqn:I-ELF}, giving the remaining probability to a ``dummy'' expert. The second modification is for technical reasons: we change the term $1 - \loss(r_{j,t}, o_t)$ to $\frac{1}{2} - \frac{1}{4} \loss(r_{j,t}, o_t)$; if $\Woe_{j,t}$ is the indicator random variable that expert $j$ gets a point for outcome $t$, then our modification ensures that the effect of the loss is to change the variance of $\Woe_{j,t}$ by at most a factor of $4$. Finally, we switch from a traditional ``happy lottery'' to a ``sad lottery'', in which getting a point is a bad thing; we refer to these points as ``sad points'', which may also be thought of as units of ``woe''. In the course of our regret analysis, this change turned out to be essential\footnote{We could instead consider a happy lottery with $N-1$ to $N$ winners, which is equivalent to a sad lottery with at most $1$ loser. We chose the latter as it is conceptually simpler.} (see our discussion in Section~\ref{sec:general-fpl-analysis}). To enact this change, we further change $\frac{1}{2} - \frac{1}{4} \loss(r_{j,t}, o_t)$ to $\frac{1}{2} + \frac{1}{4} \loss(r_{j,t}, o_t)$ and declare the winner of the competition to be the expert with the \emph{least} sad points (equivalently, the least woe), again with ties broken uniformly at random. 
In homage to Jorge Luis Borges, one might dub this sad lottery as a ``Babylonian Lottery'', since winning the lottery can lead to punishment \citep{borges1956ficciones}. 
\elfRH{} (Algorithm~\ref{alg:simpelf-rh}) extends this modified version of I-ELF to the online setting. For our incentive compatibility and regret analyses, we found it convenient to express (for any round $t$) the random variable $\Woe_{j,s}$ via a generative process. This can be seen in Algorithm~\ref{alg:simpelf-rh}: we first draw the \emph{candidate} $C_s$ (the candidate winner of the sad lottery), and, conditional on an expert being the candidate, the expert has a 
$\frac{1}{2} + \frac{1}{4} \loss(r_{C_s,s}, o_s)$ chance of winning the sad lottery.

\subsection{Incentive compatibility} \label{sec:full-info-ic}

We now present our incentive compatibility result for \elfRH{}. 

\begin{theorem} \label{thm:full-info-IC}
\elfRH{} is Online IC-BI.
\end{theorem}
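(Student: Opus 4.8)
The plan is to exploit that, in \elfRH{}, expert $i$'s report in a round $s$ enters the mechanism \emph{only} through expert $i$'s own perturbation $\Woe_{i,s}$; that a larger loss makes $\Woe_{i,s}$ stochastically larger, hence makes expert $i$ a (weakly) worse candidate for the \emph{least}-woe winner; and that strict properness of $\loss$ makes truthful reporting the \emph{unique} minimizer of the expected loss under expert $i$'s belief. Throughout, fix an expert $i$, a horizon $T'\le T$, a round $t\le T'$, a belief-independent $D^{(i)}$, and a history $r_{1:t-1},o_{1:t-1}$. The first observation is structural: $M$ applied to a length-$T'$ transcript equals the expert $I_{T'+1}=\argmin_{j\in[N]}\sum_{s=0}^{T'}\Woe_{j,s}$ selected by \elfRH{}, and the perturbations entering this argmin are re-drawn from scratch --- so there is no cross-round coupling to track, and nothing from rounds $<t$ enters except the now-fixed losses $\loss(r_{j,s},o_s)$ for $s<t$. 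I would couple the Bernoulli draws to independent uniforms $V_s\sim\mathrm{Uniform}([0,1])$ via $\Woe_{j,s}=\ind{j=C_s}\,\ind{V_s\le \tfrac12+\tfrac14\loss(r_{j,s},o_s)}$, and record that, since $\tfrac12+\tfrac14\loss(r,o)\in[\tfrac12,\tfrac34]$ for all $r,o$ while the relevant tie-break perturbations satisfy $\Woe_{j,0}\in[-\tfrac14,\tfrac14]$ (we always have $T'\ge1$, so the $[-\tfrac1{4N},\tfrac1{4N}]$ scale used only for $I_1$ never matters), the $N$ totals are almost surely distinct, making the argmin almost surely unique.

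Second, a pointwise monotonicity step. With all of the mechanism's randomness, the other experts' reports, the outcomes, and expert $i$'s reports in rounds $\ne s$ held fixed, raising $\loss(r_{i,s},o_s)$ can only flip $\Woe_{i,s}$ from $0$ to $1$, which can only increase expert $i$'s total woe and hence only destroy the event that $i$ is the strict minimizer. Conditioning on $C_s=i$ and on all randomness except $(V_s,o_s)$, let $\psi(0),\psi(1)\in\{0,1\}$ be the resulting value of $\ind{M=i}$ when $\Woe_{i,s}$ is forced to $0$ or $1$; the above gives $\psi(1)\le\psi(0)$. Integrating out $V_s$ and then $o_s$ --- whose $D^{(i)}_s$-marginal is $\mathrm{Bernoulli}(b_{i,s})$ since $D^{(i)}$ encodes expert $i$'s beliefs --- yields the contribution $\tfrac12\bigl(\psi(0)+\psi(1)\bigr)+\tfrac14\,\E_{o_s\sim\mathrm{Bernoulli}(b_{i,s})}[\loss(r_{i,s},o_s)]\,\bigl(\psi(1)-\psi(0)\bigr)$, and averaging over the rest shows that the selection probability $\Pr(M=i)$ is an affine, \emph{non-increasing} function of $\E_{o_s}[\loss(r_{i,s},o_s)]$, for every round $s\in\{t,\dots,T'\}$ and for arbitrary fixed values of expert $i$'s other reports. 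By strict properness, replacing any single report $r_{i,s}$ with the truthful $b_{i,s}$ therefore weakly increases $\Pr(M=i)$.

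Third, the strictness step, obtained from a positive-probability ``pivotal'' event that does not depend on any of expert $i$'s reports. Pick a round $s_0\in\{t,\dots,T'\}$ in which the deviation is nontruthful and an expert $j_0\ne i$, and let $E$ be the event that $C_{s_0}=i$, that $C_s=j_0$ for every $s\in[T']\setminus\{s_0\}$, and that $\Woe_{j_0,s}=0$ for every such $s$. Since $\tfrac12+\tfrac14\loss(r_{j_0,s},o_s)<1$, $E$ has positive probability, and on $E$ every expert's woe outside round $s_0$ vanishes, so $M=i$ iff $\Woe_{i,0}+\Woe_{i,s_0}<\Woe_{j,0}$ for all $j\ne i$: this fails when $\Woe_{i,s_0}=1$ (the left side is then at least $\tfrac34>\tfrac14\ge\Woe_{j,0}$) and, when $\Woe_{i,s_0}=0$, holds exactly on the further event $G=E\cap\{i=\argmin_j\Woe_{j,0}\}$, of conditional probability $1/N$. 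On $G$ (now with $s=s_0$ in the notation of the preceding paragraph) we have $\psi(0)=1$ and $\psi(1)=0$, so there the contribution equals $\tfrac12-\tfrac14\,\E_{o_{s_0}\sim\mathrm{Bernoulli}(b_{i,s_0})}[\loss(r_{i,s_0},o_{s_0})]$, which strict properness makes strictly larger at $r_{i,s_0}=b_{i,s_0}$ than at any $r\ne b_{i,s_0}$; combined with weak monotonicity on $G^{c}$, replacing $r_{i,s_0}$ with $b_{i,s_0}$ strictly increases $\Pr(M=i)$, regardless of the other reports. Interpolating from the deviation $r_{i,t:T'}$ to the truthful profile $b_{i,t:T'}$ by replacing expert $i$'s reports one round at a time, each such replacement weakly increases $\Pr(M=i)$ (by the preceding paragraph) while the replacement at $s_0$ does so strictly, which is precisely the strict inequality \eqref{eqn:online-ic-bi}.

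The main obstacle is this strictness step: weak optimality of truthfulness is essentially bookkeeping once one notes that ``least woe wins'' and couples the Bernoullis to uniforms, but the real work is producing a pivotal event whose probability is bounded away from $0$ \emph{uniformly} over every configuration of the other experts' reports and the outcomes, and over all of expert $i$'s remaining reports. Freezing the candidate to $j_0\ne i$ in every round but $s_0$ is exactly what decouples the pivotal event from those quantities, leaving the single perturbation $\Woe_{i,s_0}$ --- governed by $\loss(r_{i,s_0},o_{s_0})$ --- as the only thing that matters there.
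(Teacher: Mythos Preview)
Your argument is correct, and it takes a genuinely different route from the paper's own proof.

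The paper proves a more general result (Theorem~\ref{thm:full-info-IC-general}) by fixing a single decision round $t$ and decomposing the selection probability according to which expert (or the dummy expert $0$) wins the round-$t$ lottery. This gives an explicit affine expression $c_i - \sum_{k \neq i}(c_i - c_k)\Pr(W_{k,t}=1) - (c_i - c_0)\Pr(W_{0,t}=1)$, and the work is to show the strict ordering $c_i < c_0 \le c_k$ (Proposition~\ref{prop:woe-ci-c0}). Your approach instead couples each Bernoulli draw to a uniform and argues \emph{pointwise} monotonicity: raising expert $i$'s loss can only raise $\Woe_{i,s_0}$ and hence only kill the event $\{M=i\}$. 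You then exhibit a pivotal event (your $G$) on which $\psi(0)=1$, $\psi(1)=0$, and interpolate round-by-round. The two pivotal constructions are morally the same --- both force all other woes to vanish and let the tie-break decide --- but your monotone-coupling viewpoint is more direct for \elfRH{} specifically, whereas the paper's lottery-winner decomposition is what allows the proof to extend to the full \gelfRH{} family (where, for $\rho > 0$, expert $i$'s loss also shifts $\Pr(W_{k,t}=1)$ for $k \neq i$, and a pure coupling argument would not go through so cleanly).

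One small point to tighten: when you ``integrate out $V_s$ and then $o_s$'' while conditioning on ``all randomness except $(V_s,o_s)$'', that conditioning includes $r_{-i,s}$, and under $D^{(i)}$ the pair $(o_s, r_{-i,s})$ may be dependent, so the conditional law of $o_s$ need not be $\mathrm{Bernoulli}(b_{i,s})$. The fix is already implicit in what you wrote: on $\{C_s = i\}$ the variables $\psi(0),\psi(1)$ do not depend on $r_{-i,s}$, so you may marginalize $r_{-i,s}$ out first, after which belief independence does give $o_s \sim \mathrm{Bernoulli}(b_{i,s})$ conditionally on the rest. State this explicitly.
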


That \elfRH{} is Online IC-BI should not be surprising. Indeed, \citet{witkowski2023incentive} previously showed that I-ELF is IC-BI, and while the style of presentation of our proof is quite different, the high-level ideas are the same. In Appendix~\ref{app:ic}, we present a proof of Theorem~\ref{thm:full-info-IC} out of an abundance of caution combined with our desire for a more explicit incentive compatibility proof than the one provided by \citet{witkowski2023incentive}. In fact, we prove incentive compatibility for a more general class of mechanisms that, in addition to including \elfRH{}, also includes the online extension (as described earlier) of I-ELF.

\subsection{Regret} \label{sec:full-info-regret}

A key insight toward our analysis of \elfRH{}'s expected regret is that the algorithm can be viewed as an instance of Follow the Perturbed Leader (FPL) whose perturbations are defined by a random walk \citep{devroye2013prediction,devroye2015random}. Such FPL algorithms take the form
\begin{align*}
I_t = \argmin_{j \in [N]} \sum_{s=0}^{t-1} \left( \loss_{j,s} + X_{j,s} \right) ,
\end{align*}
where each per-round perturbation $X_{j,s}$ is a zero-mean noise random variable and we adopt the convention that $\loss_{j,0} = 0$. 
Let $\tilde{\loss}_{j,t} := \loss_{j,t} + X_{j,t}$ be the perturbed loss of expert $j \in [N]$ in round $t \in [T]$. 
To see how \elfRH{} is an instance of FPL, observe that since
\begin{align}
\Woe_{j,t} \sim \mathrm{Bernoulli} \left( \frac{1}{N} \left( \frac{1}{2} + \frac{1}{4} \loss_{j,s} \right) \right) , \label{eqn:woe-simple-elf}
\end{align}
the choice $\tilde{\loss}_{j,t} = 4 N \cdot \Woe_{j,t} - 2$ ensures that 
$X_{j,t} = \tilde{\loss}_{j,t} - \loss_{j,t}$ is zero-mean, as required; also, for ``round $0$'' we trivially take $\tilde{\loss}_{j,0} = X_{j,0} = 4 N \cdot \Woe_{j,0}$. 
As we consider a lottery where at most one expert receives a sad point $W_{j,t} = 1$, the random variables $X_{j,t}$ and  $\tilde{\loss}_{j,t}$ are at a scale of $O(N)$. Therefore, the perturbed leader $I_t$ can be expressed as
\begin{align*}
I_t = \argmin_{j \in [N]} \sum_{s=0}^{t-1} \left( \loss_{j,s} + X_{j,s} \right) = \argmin_{j \in [N]} \sum_{s=0}^{t-1} \frac{\loss_{j,s} + X_{j,s} + 2}{4N}
= \argmin_{j \in [N]} \sum_{s=0}^{t-1} W_{j,s} ,
\end{align*}
which is the expert with the least woe, recovering our algorithm's winning criteria. The initial numeric noise $W_{j,0} \in \left[\frac{1}{4 N}, \frac{1}{4 N} \right]$, which is the only non-integer component among $(W_{j,s})_{s=0}^{t-1}$, breaks ties.

Interestingly, unlike all but one previous work \citep{vanerven2014follow}, the per-round perturbations $X_{j,t}$ depend on the losses $\loss_{j,t}$. Also, unlike all but one previous work \citep{li2018sampled}, the noise random variables $X_{1, t}, \ldots, X_{N,t}$ are dependent since $\Woe_{j,t} = 1$ implies that $\Woe_{k,t} = 0$ for all $k \in [N] \setminus \{j\}$. At a high level, we try to follow the regret analysis of \citet{devroye2013prediction}. However, in their work, the random variables $(X_{j,t})_{j \in [N], t \in [T]}$ are i.i.d., with each $X_{j,t}$ taking values $-1/2$ and $1/2$ with equal probability. This noise structure allows them to use concentration and anti-concentration properties of the binomial distribution. In contrast, we essentially have to deal with Poisson binomial random variables since the interesting part of the cumulative perturbed losses are uncentered sums of Bernoulli random variables, each of which can have a different success probability. Also, our noise random variables $X_{j,t}$ are at a much larger scale, requiring additional care in some steps of the analysis.

Our incentive compatibility analysis crucially relies upon redrawing all previous sad point ($\Woe_{j,s}$) random variables each round.\footnote{While we also redraw the candidates ($C_s$), this does not appear to be necessary for \elfRH{} to be Online IC-BI.} 
However, for the regret analysis, it is much simpler to analyze a different algorithm --- dubbed \elfFH{} (``SELF'' stands for ``Stabilized ELF'' or ``Static ELF'') 
and presented in Algorithm~\ref{alg:simpelf-fh} --- which never redraws any previously drawn candidate and sad point random variables. 
Suppose momentarily that all the losses are oblivious (i.e., selected by an oblivious adversary). The reader can verify that in every round, \elfRH{} and \elfFH{} induce the same marginal distribution over the selected expert $I_t$. Consequently, they have the same expected regret; this observation is standard and can also be found in the fundamental work of \citet{kalai2005efficient}. Now, our assumption that all outcomes as well as all experts' beliefs are selected by an oblivious adversary, combined with our result Theorem~\ref{thm:full-info-IC} that \elfRH{} is Online IC-BI, implies (due to truthful reporting) that under \elfRH{} the losses are oblivious. Consequently, since we actually run \elfRH{} (and hence have oblivious losses), to control the expected regret of \elfRH{} it suffices to  analyze the expected regret of \elfFH{} when the latter encounters the aforementioned oblivious losses. 

\elfFH{}'s regret is simpler to analyze because, like previous analyses of the regret of instances of FPL, our approach to bound the regret involves bounding the number of leader changes; here, the term ``leader change'' is shorthand for the \emph{perturbed} leader $I_t$ changing from one round to the next. The stability induced from freezing past random draws should make clear that \elfFH{} should enjoy far fewer leader changes than \elfRH{}.

We now present our expected regret bound for \elfRH{}. We provide an overview of our regret analysis in Section~\ref{sec:regret-analysis}. 

\begin{theorem} \label{thm:full-info-regret}
Assume that each expert's belief distribution satisfies belief independence and that $T \ge N$.
Then the expected regret of \elfRH{} is bounded as
\begin{align*}
\E \left[ \sum_{t=1}^T \loss_{I_t,t} - \min_{j \in [N]} \sum_{t=1}^T \loss_{j,t} \right] 
= O \left( \sqrt{T N} \log T \right) .
\end{align*}
\end{theorem}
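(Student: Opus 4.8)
The plan is to prove the bound for \elfFH{} run on oblivious losses; by the reduction described just before the theorem statement, this equals the expected regret of \elfRH{}. Writing $g_0(j) = X_{j,0} = 4N\Woe_{j,0}$ and $g_s(j) = \loss_{j,s} + X_{j,s} = 4N\Woe_{j,s} - 2$ for $s \ge 1$, the round-$t$ choice of \elfFH{} is $I_t = \argmin_{j}\sum_{s=0}^{t-1} g_s(j) = J_{t-1}$, where $J_t := \argmin_j \sum_{s=0}^{t} g_s(j) = \argmin_j \sum_{s=0}^t \Woe_{j,s} = I_{t+1}$ (all ties broken by the tiny offsets $\Woe_{\cdot,0}\in[-\tfrac1{4N},\tfrac1{4N}]$). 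I would first apply the classical ``be the leader'' inequality to $g_0,\dots,g_T$ to get $\sum_{t=0}^T g_t(J_t) \le \sum_{t=0}^T g_t(j^\star)$ for the in-hindsight best expert $j^\star$, then use $\loss_{I_t,t} = g_t(I_t) - X_{I_t,t}$ and $g_t(I_t) - g_t(J_t) = 4N(\Woe_{I_t,t}-\Woe_{I_{t+1},t})$ and rearrange to obtain
\begin{align*}
R_T \;\le\; \bigl(g_0(j^\star)-g_0(J_0)\bigr)\;+\;\sum_{t=1}^T X_{j^\star,t}\;-\;\sum_{t=1}^T X_{I_t,t}\;+\;4N\sum_{t=1}^T\bigl(\Woe_{I_t,t}-\Woe_{I_{t+1},t}\bigr).
\end{align*}
The first term is at most $2$ deterministically. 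Since the losses are oblivious, $j^\star$ is a fixed index and $\E[X_{j^\star,t}]=0$; and since $I_t$ is measurable w.r.t.\ rounds $1,\dots,t-1$ while $X_{I_t,t}$ involves only the fresh round-$t$ draws $(C_t,\Woe_{C_t,t})$, also $\E[X_{I_t,t}]=0$. (It is essential that the perturbation index here is $I_t$, not the one-step-lookahead leader $I_{t+1}=J_t$, whose perturbation is correlated with the round-$t$ draws; this is exactly why we carry the $4N\Woe$-scaling through the decomposition rather than doing the usual clean-loss telescoping.) Hence $\E[R_T] \le 2 + 4N\sum_{t=1}^T \E[\Woe_{I_t,t}-\Woe_{I_{t+1},t}]$.

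The next step is to bound the leader-change term. Because at most one expert is the candidate $C_t$ and only that expert's woe changes in round $t$, one has $\Woe_{I_t,t}-\Woe_{I_{t+1},t} \le \ind{C_t=I_t,\ \Woe_{C_t,t}=1,\ I_{t+1}\ne I_t}$: the perturbed leader changes only when the current leader is itself the candidate and receives a sad point. Conditioning on the history through round $t-1$ — which determines $I_t$, the cumulative woes $U_{j,t}:=\sum_{s=0}^{t-1}\Woe_{j,s}$, and thus the gap $\Gamma_t$ between the two smallest of the $U_{j,t}$ — we get $\Pr(C_t=I_t)=1/N$, then $\Pr(\Woe_{C_t,t}=1\mid C_t=I_t)=\tfrac12+\tfrac14\loss_{I_t,t}\le\tfrac34$, and finally adding $1$ to the leader's woe dethrones it only if $\Gamma_t\le 1$. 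Thus $\E[\Woe_{I_t,t}-\Woe_{I_{t+1},t}]\le \tfrac{3}{4N}\Pr(\Gamma_t\le 1)$, so that
\begin{align*}
\E[R_T] \;\le\; 2 + 3\sum_{t=1}^T \Pr(\Gamma_t\le 1),
\end{align*}
and it remains to show $\sum_{t=1}^T \Pr(\Gamma_t\le 1) = O(\sqrt{TN}\log T)$.

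The core estimate is $\Pr(\Gamma_t\le 1) = \tilde O(\sqrt{N/t})$ for $t$ at least of order $N\log T$. I would condition on the candidate sequence $C_{1:t-1}$, making the $U_{j,t}$ independent, with $U_{j,t} = \Woe_{j,0} + \sum_{s:\,C_s=j}\Woe_{j,s}$ a Poisson binomial over the rounds in which $j$ was the candidate (success parameters in $[\tfrac12,\tfrac34]$) plus a tiny offset. A Chernoff bound shows that, outside an event of polynomially small probability, every expert was the candidate $\Omega(t/N)$ times, so each $U_{j,t}$ is a Poisson binomial over $\Omega(t/N)$ terms with variance $\Omega(t/N)$; our new anti-concentration bounds for Poisson binomial random variables then give $\Pr(U_{j,t}\in\{v,v+1\})=O(\sqrt{N/t})$ uniformly in $v$, and a standard order-statistics computation for independent variables upgrades this to $\Pr(\Gamma_t\le 1)=\tilde O(\sqrt{N/t})$. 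Summing, the rounds $t\le cN\log T$ contribute $O(N\log T) = O(\sqrt{TN}\log T)$ (here using $T\ge N$), and the rounds $t>cN\log T$ contribute $\tilde O(\sqrt N)\sum_t t^{-1/2} = \tilde O(\sqrt{TN})$, which gives the theorem.

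The main obstacle is the third paragraph: obtaining sufficiently sharp anti-concentration for the gap between the two smallest order statistics of $N$ independent Poisson binomial random variables whose parameters are adversarially (and unequally) chosen and whose supports have only $\Omega(t/N)$ points. Unlike the i.i.d.\ binomial setting of \citet{devroye2013prediction}, one must (i) prove a uniform point-mass bound of order $1/\sqrt{\text{variance}}$ for Poisson binomials, (ii) control the randomness in — and disparity between — the candidate-set sizes so the conditional variances are simultaneously $\Omega(t/N)$, and (iii) push this through the order-statistics argument without an extra factor of $N$ from a naive union bound over pairs of experts. Steps (i) and (iii) are precisely where the new Poisson binomial tail bounds advertised in the introduction are required, and they, together with the large ($O(N)$) scale of the perturbations in step (i) of paragraph one, are what make the analysis substantially more delicate than in prior random-walk-perturbation analyses.
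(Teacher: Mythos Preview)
Your first two paragraphs are correct and in fact slightly tighter than the paper's route. By fixing $j^\star$ as the deterministic minimizer of the (oblivious) cumulative loss and using $\E[X_{j^\star,t}]=0$, you bypass the $\E\bigl[\max_j\sum_t X_{j,t}\bigr]$ term that the paper carries via Lemma~\ref{lemma:E-max-abs}, so the additive $16\sqrt{TN\log N}$ in Corollary~\ref{cor:full-info-pre-bound} never appears. Your second paragraph is essentially the paper's Lemma~\ref{lemma:leader-change-to-lead-pack} together with the extra factor $\Pr(W_{C_t,t}=1\mid C_t=I_t)\le 3/4$, and your event $\{\Gamma_t\le 1\}$ is exactly the paper's $\{|A_t|>1\}$.

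The divergence, and the gap, is in paragraph three. You condition on the candidate sequence $C_{1:t-1}$, obtaining \emph{independent} Poisson binomials $U_{j,t}$ over $n_j\approx t/N$ trials with success parameters in $[\tfrac12,\tfrac34]$, and then appeal to ``a standard order-statistics computation''. There is no such standard computation that avoids the extra factor of $N$: pointwise anti-concentration $\max_v\Pr(U_{j,t}=v)=O(\sqrt{N/t})$ fed into the natural bound $\Pr(|A_t|>1)\le\sum_j\Pr(j\in A_t\setminus\{\text{leader}\})$ gives only $O(N\sqrt{N/t})$, as you yourself flag in item (iii). The paper's way around this is \emph{not} an order-statistics trick but the Devroye--Lugosi--Neu shift argument: it proves a tail-ratio bound (Lemma~\ref{lem_poibin_tail_ratio}), namely $p_t(k-1)/p_t(k)\ge 1-Q_{t,k}$ with $Q_{t,k}=O\bigl(\tfrac{\log t}{\sqrt{\lPoi t}}+\tfrac{|k|}{\lPoi t}\bigr)$, and uses it to lower-bound $\Pr(|A_t|=1)$ by shifting the index $k\mapsto k{+}1$ in a sum that telescopes to $1$ (Theorem~\ref{thm:lead-pack-prob-bound}). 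This ratio bound is the genuinely new Poisson-binomial ingredient, obtained via the separation lemma (Lemma~\ref{lem_operation}) that reduces Poisson-binomial tails to binomial tails.

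So your conditioning buys honest independence among the $U_{j,t}$ --- which sidesteps a dependence subtlety the paper's marginal approach must handle --- but it does not buy you the lead-pack bound. You would still need a Poisson-binomial tail-ratio lemma (now in the $[\tfrac12,\tfrac34]$ parameter regime rather than the near-zero regime the paper treats) and the shift argument, neither of which you have sketched; and you would additionally have to manage the randomness in the trial counts $n_j$ via the Chernoff layer you mention. In short, the reduction to $\sum_t\Pr(\Gamma_t\le 1)$ is clean and arguably cleaner than the paper's, but the claimed $\tilde O(\sqrt{N/t})$ control of that probability is exactly the hard theorem (Theorem~\ref{thm:lead-pack-prob-bound}), not a consequence of anti-concentration plus order statistics.
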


The standard (non-incentive compatible) setting can be viewed as our problem setting but where all experts are always honest. In this standard setting, the minimax regret is of order $\Theta(\sqrt{T \log N})$. In contrast, both terms in our regret bound pay a much higher price in terms of $N$. This extra factor comes from the fact that in each round, precisely one expert is selected as the candidate, and only the candidate's loss is used by the algorithm (which is similar to what happens in bandit feedback). Suppose instead that in a round $s$, for each expert $j$ we independently awarded the expert a sad point with probability $\frac{1}{2} + \frac{1}{4} \loss_{j,s}$. Then in the standard setting we could remove the $\sqrt{N}$ factor. However, having these independent draws, all of which involve the same outcome $o_s$, would result in a mechanism that is not Online IC-BI. We discuss this in more detail in Section~\ref{sec:discussion}. 
Regarding the $\log T$ factor, this stems from the analysis of the Poisson binomial tail lower bound (Section \ref{subsec_taillower}). We believe this is an artifact of our analysis. Nevertheless, removing this factor is nontrivial.

\section{Bandit Setting} \label{sec:bandit}

In this section, we introduce our bandit mechanism, \banditelfRH{} (Algorithm~\ref{alg:simpelf-rh.es}).

\subsection[\banditelfRH{}]{\boldbanditelfRH{}} \label{sec:bandit-alg}

\banditelfRH{} is an exploration-separated version of \elfRH{}. 
Exploration-separated algorithms partition the sequence of rounds into 
\emph{exploration rounds} and \emph{exploitation rounds} (see Definition 1.2 of \citet{babaioff2009characterizing}). 
In \banditelfRH{}, the variable $\cE$ collects the exploration rounds. In these rounds, the algorithm selects an expert uniformly at random. Let $\cE_T$ refer to the final value of $\cE$, i.e., \banditelfRH{}'s set of exploration rounds among rounds 1 through $T$. If a round is not an exploration round, then we call it an exploitation round. In an exploitation round, the algorithm uses data collected during previous exploration rounds. 
For \banditelfRH{}, it will often be convenient to use the convention that $\Woe_s = \mathbf{0}$ (i.e., $\Woe_{j,s} = 0$ for all $j \in [N]$) if round $s$ is an exploitation round.

We have the following incentive compatibility result for \banditelfRH{}.
\begin{theorem} \label{thm:bandit-IC}
\banditelfRH{} is Bandit Online IC-BI.
\end{theorem}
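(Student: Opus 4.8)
The plan is to reduce the incentive-compatibility question for \banditelfRH{} to the already-established Online IC-BI property of \elfRH{} (Theorem~\ref{thm:full-info-IC}), conditioning on the transcript $\sigma_{1:t-1}$ so that the exploration/exploitation schedule and the identities of the selected experts become fixed. First I would unpack the definition of the transcript for this particular algorithm: $\sigma_{1:t-1}$ records exactly those internal random-variable realizations that persist into future rounds — the exploration-round indicators (equivalently, the set $\cE$), the draws $C_s$, the Bernoulli sad-point outcomes already generated, and the tie-breaking variables $\Woe_{j,0}$. Conditioning on $\sigma_{1:t-1}$ therefore fixes, for every round $s \le t-1$, whether $s$ was an exploration or exploitation round and (in exploration rounds) which expert $I_s$ was selected; moreover it fixes the rule by which future rounds $s \in \{t,\dots,T'\}$ will be classified and, in exploration rounds, how $I_s$ is drawn (uniformly, independently of reports).

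The key observation is the usual exploration-separated decoupling: the winner map $M$ of \banditelfRH{} depends on expert $i$'s reports $r_{i,t:T'}$ only through the losses $\loss_{i,s} = \loss(r_{i,s},o_s)$ in those exploration rounds $s \in \cE_{T'} \cap \{t,\dots,T'\}$ in which $i$ was actually selected, since in an exploitation round $\Woe_s = \mathbf{0}$ and in an exploration round only the selected expert's loss feeds the sad-point lottery. Consequently, conditioning on $\sigma_{1:t-1}$ and on the outcome of the (report-independent) choices $I_s = i$ versus $I_s \ne i$ in future exploration rounds, the map from $i$'s reports to its subjective probability of ultimately winning is — up to relabeling which rounds "count" — exactly an instance of the full-information ELF winner map restricted to the subsequence of rounds where $i$ is the candidate. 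I would then invoke belief independence: since $D^{(i)}$ factorizes across rounds, the conditional distribution of $(o_s, r_{-i,s})$ in each relevant future round, given the history and $\sigma_{1:t-1}$, is still a product of per-round distributions, so the hypotheses of the full-information argument (i.e., the argument behind Theorem~\ref{thm:full-info-IC}) apply verbatim to this subsequence. This yields, for any transcript $\sigma_{1:t-1}$ on which $i$ is selected in at least one future round where a nontruthful report occurs, the strict inequality \eqref{eqn:bandit-online-ic-bi}; and if no such future selection occurs, truthful versus nontruthful reports induce identical win probabilities, so at worst equality holds for that transcript. Averaging over the residual randomness (the report-independent future exploration draws, plus $o_{t:T'}, r_{-i,t:T'} \sim D^{(i)}$), the strict inequality on a positive-probability event combined with equality elsewhere gives the strict inequality in \eqref{eqn:bandit-online-ic-bi} overall.

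There are two points that need care and that I expect to be the main obstacles. The first is making the "restricted subsequence equals a full-information ELF instance" claim rigorous: one must check that the sad-point generative process in \banditelfRH{} on exploration rounds, once we condition on the candidate sequence $(C_s)$, reduces precisely to the per-candidate Bernoulli$\bigl(\tfrac12 + \tfrac14 \loss_{i,s}\bigr)$ structure analyzed in the proof of Theorem~\ref{thm:full-info-IC}, including the role of the tie-breaking $\Woe_{j,0}$ draws, and that the argmin-of-cumulative-woe winner rule is unchanged. The second, more delicate, point is handling the "subjective belief over $\sigma_{1:t-1}$" aspect flagged in the paragraph after Definition~\ref{def:bandit-ic}: I would establish the strict inequality pointwise in $\sigma_{1:t-1}$ first (for every transcript consistent with the given bandit history), and only then note that, because strictness holds for each conditioning value, it is preserved under any expert-subjective mixture over transcripts — so no separate argument is needed for experts with partial knowledge of the algorithm's internal coins. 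The cleanest write-up will likely route everything through the "general FPL" mechanism class mentioned after Theorem~\ref{thm:full-info-IC}, proving a single lemma that any exploration-separated instantiation of that class is Bandit Online IC-BI whenever the base mechanism is Online IC-BI, and then applying it to \banditelfRH{}.
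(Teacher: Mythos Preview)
Your high-level instinct---that the bandit proof should mirror the full-information one---is right, and the paper indeed proceeds by essentially re-running the decomposition argument behind Theorem~\ref{thm:full-info-IC} directly in the bandit setting (decomposing the round-$t$ lottery into the events $W_{i,t}=1$, $W_{k,t}=1$, $W_{0,t}=1$, invoking belief independence to factor, and then showing $c_i < c_0$ via an analogue of Proposition~\ref{prop:woe-ci-c0}). What the paper does \emph{not} do is treat Theorem~\ref{thm:full-info-IC} as a black box and reduce to it; it reproves the structural inequalities in the bandit setting with the appropriate conditioning.

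Your proposed reduction has a genuine gap at the step where you claim that, after conditioning on the candidate sequence, the residual problem ``is exactly an instance of the full-information ELF winner map restricted to the subsequence of rounds where $i$ is the candidate.'' It is not. First, the winner in round $T'+1$ depends on \emph{every} expert's cumulative woe over \emph{all} exploration rounds, not only those where $C_s = i$; you cannot isolate a sub-game on the rounds where $i$ is the candidate. Second, even if you look at the full exploration subsequence, \elfRH{} (and the mechanism class covered by Theorem~\ref{thm:full-info-IC-general}) has the candidates $C_s$ drawn \emph{uniformly at random} each time the winner is computed, whereas in your conditioned bandit problem the candidates are \emph{fixed} (they are the past $I_s$ values). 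The proof of Theorem~\ref{thm:full-info-IC} relies on computing $\Pr(W_{k,t}=1)$ with $C_t$ random; with $C_t$ frozen, those probabilities collapse (e.g.\ $\Pr(W_{k,t}=1)=0$ for $k\neq C_t$), so the argument does not apply ``verbatim''---you would have to redo the $c_i < c_0$ step in this fixed-candidate setting. That step does go through (the tie-breaking noise $\Woe_{j,0}$ still gives positive probability to a margin in $(0,1]$), but it is a re-derivation, not a reduction, and at that point you have essentially reproduced the paper's direct proof with an unnecessary extra layer of conditioning.

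A smaller inaccuracy: your description of the transcript $\sigma_{1:t-1}$ is off. In \banditelfRH{}, the sad-point draws $\Woe_{j,s}$ and the tie-breaking $\Woe_{j,0}$ are redrawn fresh in every exploitation round, so they are not persistent state and do not belong in $\sigma_{1:t-1}$; the only persistent information is the exploration set $\cE_{t-1}$ (the candidates $C_s$ for $s\in\cE_{t-1}$ already appear in the bandit history as $I_s$). The paper's proof conditions precisely on $\cE_{t-1}$ and $(I_s, r_{I_s,s}, o_s)_{s \in [t-1]}$.
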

The proof of this result is quite similar to the proof of Theorem~\ref{thm:full-info-IC}. For completeness, we give a proof in Appendix~\ref{app:ic-bandit}.

To our knowledge, \banditelfRH{} is the first bandit algorithm for this problem that satisfies a notion of incentive compatibility for nonmyopic experts; even for approximate truthfulness, we are not aware of previous results. The idea of randomly interleaving exploration and exploitation rounds has previously been used for other incentive-compatible bandit problems \citep{babaioff2009characterizing}.

\subsection{Regret} \label{sec:bandit-regret}

Similar to the full-information setting, we will not directly analyze the expected regret of \banditelfRH{}. We instead control its expected regret by upper bounding the expected regret of \banditelfFH{} (Algorithm~\ref{alg:simpelf-fh.es}), a stabilized (or static) version of \banditelfRH{}. 
\banditelfFH{} can also be viewed as an instance of FPL with random-walk perturbation by setting $\Woe_{i,t} = 0$ for all exploitation rounds. Indeed, Section \ref{sec:regret-analysis} provides an analysis that applies to the full-information and bandit settings simultaneously. 
To see how \banditelfFH{} is an instance of FPL, 
let 
$X_{j,t} =
\tilde{\loss}_{j,t} - \loss_{j,t} 
= \frac{4N}{\varepsilon} W_{j,t} - 2 - \loss_{j,t}$. Then we see that
\begin{align*}
I_t \in \argmin_{j \in [N]} \sum_{s=0}^{t-1} \tilde{\loss}_{j,s} = 
\argmin_{j \in [N]} 
\sum_{s=0}^{t-1}
\left(
\frac{4N}{\varepsilon} W_{j,s} 
-
2
\right)
=
\argmin_{j \in [N]}  
\sum_{s=0}^{t-1}
W_{j,s}. 
\end{align*}

For the regret analysis, as we now argue, it suffices to analyze the regret of the \banditelfFH{}. Let $\delta_0$ be the Dirac measure at 0. 
In \banditelfFH{}, round $t$ is an exploitation round if $C_t = 0$; otherwise, it is an exploration round. Let $\cE_T^\fh$ be \banditelfFH{}'s set of exploration rounds among rounds 1 to $T$. 
It is easy to see that the probability that a given round is an exploration round is equal under both algorithms.

\begin{figure}[t]
\begin{minipage}[t]{0.53\linewidth}
\begin{algorithm}[H]
\DontPrintSemicolon
\KwIn{$\varepsilon \in [0, 1]$}
Set $\cE = \{\}$\;
\For{$t\leftarrow 1$ \KwTo $T$}{
  Draw $E \sim \mathrm{Bernoulli}(\varepsilon)$\;
  \uIf{$E = 1$}{
    Draw $C_t \sim \mathrm{Uniform}([N])$\;
    Select expert $I_t = C_t$\;
    Set $\cE = \cE \cup \{t\}$\;
  }
  \Else{
  For $j \in [N]$, draw $\Woe_{j,0} \sim \mathrm{Uniform}([-\frac{\varepsilon}{4N}, \frac{\varepsilon}{4N}])$\;
    \For{$s \in \cE$}{
      For $j \in [N]$, draw $\Woe_{j,s}$ as 
      $\begin{cases}
        \mathrm{Bernoulli} \left( \frac{1}{2} + \frac{1}{4} \loss_{j,s} \right) & \text{if } j = C_s \\ 
        0 & \text{if } j \in [N] \setminus \{C_s\} 
      \end{cases}$ \;
    } 
    Select expert $\displaystyle I_t = \argmin_{j \in [N]} \left\{ \Woe_{j,0} + \sum_{s \in \cE} \Woe_{j,s} \right\}$ \;
  }
}
\caption{\label{alg:simpelf-rh.es} \banditelfRH}
\end{algorithm}
\end{minipage}
\hfill
\begin{minipage}[t]{0.459\linewidth}
\begin{algorithm}[H]
\DontPrintSemicolon
\KwIn{$\varepsilon \in [0, 1]$}
For $j \in [N]$, draw $\Woe_{j,0} \sim \mathrm{Uniform}([-\frac{\varepsilon}{4N}, \frac{\varepsilon}{4N}])$\;
\For{$t\leftarrow 1$ \KwTo $T$}{
  Select expert $\displaystyle I_t = \argmin_{j \in [N]} \sum_{s=0}^{t-1} \Woe_{j,s}$\;
  Draw $C_t \sim (1 - \varepsilon) \cdot \delta_0 + \varepsilon \cdot \mathrm{Uniform}([N])$ \;
  For $j \in [N]$, draw $\Woe_{j,t}$ as 
  $\begin{cases}
    \mathrm{Bernoulli} \left( \frac{1}{2} + \frac{1}{4} \loss_{j,t} \right) & \text{if } j = C_t \\ 
    0 & \text{if } j \in [N] \setminus \{C_t\} 
    \end{cases}$ \;
    \vspace{8.362em}
}
\caption{\label{alg:simpelf-fh.es} \banditelfFH}
\end{algorithm}
\end{minipage}
\end{figure}

Let us see how the expected regret of \banditelfRH{} can be upper bounded via the expected regret of \banditelfFH{}. 
The next proposition (whose proof is in the appendix for completeness) establishes that, conditional on a round being an exploitation round, the marginal distribution of $I_t$ is the same under both algorithms. This allows us to analyze the regret of \banditelfRH{} in exploitation rounds using the behavior of \banditelfFH{} in the same rounds. 
Let $\Pr$ and $\Pr^\fh$ be the probability operators under \banditelfRH{} and \banditelfFH{} respectively.

\begin{proposition} \label{prop:cond-rh-to-fh}
For any $j \in [N]$ and $t \in [T]$,
\begin{align*} 
\Pr \left( I_t = j \mid t \notin \cE_T \right) 
= \Pr^\fh \left( I_t = j \right) .
\end{align*}
\end{proposition}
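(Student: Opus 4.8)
The plan is to condition, in both algorithms, on the set $A\subseteq[t-1]$ of exploration rounds that occur before round $t$, argue that conditioned on this set the two algorithms induce the same law on $I_t$ in an exploitation round, and then observe that $A$ itself has the same law under both algorithms.

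First I would dispose of the right-hand side: in \banditelfFH{} the expert $I_t$ is selected before $C_t$ is drawn, and $C_t$ is drawn independently of everything from rounds $1,\dots,t-1$; hence $\ind{t\in\cE_T^\fh}=\ind{C_t\neq 0}$ is independent of $I_t$, so $\Pr^\fh(I_t=j)=\Pr^\fh(I_t=j\mid t\notin\cE_T^\fh)$, and it suffices to match the two conditional laws. Next, observe that in \banditelfRH{} the event $t\notin\cE_T$ is exactly $\{E=0\}$ in round $t$, where the coins $E$ are i.i.d.\ $\mathrm{Bernoulli}(\varepsilon)$ and independent of rounds $1,\dots,t-1$, and likewise $t\notin\cE_T^\fh$ is $\{C_t=0\}$, independent of rounds $1,\dots,t-1$. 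So conditioning on the round-$t$ outcome leaves the joint law of rounds $1,\dots,t-1$ unchanged, and by the law of total probability it remains to prove: (i) $\cE_{t-1}$ (under \banditelfRH{}) and $\cE_{t-1}^\fh$ (under \banditelfFH{}) have the same distribution; and (ii) for every $A$, conditioned on the set of past exploration rounds being $A$, the expert selected in an exploitation round has the same law $\mu_A$ under both algorithms.

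Claim (i) is immediate: in \banditelfRH{} each round $s$ is an exploration round independently with probability $\varepsilon$ (via the coin $E$ in round $s$), and in \banditelfFH{} round $s$ is an exploration round iff $C_s\neq 0$, again independently with probability $\varepsilon$; so $\cE_{t-1}$ and $\cE_{t-1}^\fh$ are both products of $\mathrm{Bernoulli}(\varepsilon)$ inclusions over $[t-1]$. For (ii), fix $A$. In the exploitation branch of \banditelfRH{}, $I_t=\argmin_{j\in[N]}\{\Woe_{j,0}+\sum_{s\in A}\Woe_{j,s}\}$ with the $\Woe_{j,0}$ i.i.d.\ $\mathrm{Uniform}([-\varepsilon/4N,\varepsilon/4N])$ and, for $s\in A$, the candidate $C_s$ recorded in exploration round $s$ uniform on $[N]$ and $\Woe_{j,s}\sim\mathrm{Bernoulli}(\tfrac12+\tfrac14\loss_{j,s})$ if $j=C_s$ and $\Woe_{j,s}=0$ otherwise; the re-drawing of these $\Woe$'s each exploitation round, though crucial for incentive compatibility, is irrelevant here since only the round-$t$ marginal of $I_t$ is at issue. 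In \banditelfFH{}, $I_t=\argmin_{j\in[N]}\sum_{s=0}^{t-1}\Woe_{j,s}$; the exploitation rounds $s<t$ contribute $\Woe_{\cdot,s}=\mathbf{0}$, so the sum collapses to $\Woe_{j,0}+\sum_{s\in A}\Woe_{j,s}$, where the $\Woe_{j,0}$ are the same i.i.d.\ uniforms and, for each $s\in A$, the conditional law of $C_s$ given $C_s\neq 0$ is exactly $\mathrm{Uniform}([N])$, with the same $\mathrm{Bernoulli}$ rule for $\Woe_{j,s}$. Hence the two perturbed-sum vectors have the same distribution, and $\argmin$ is a fixed measurable function of that vector (and is almost surely unique, since $|\Woe_{j,0}-\Woe_{k,0}|<\tfrac12$ rules out a tie between distinct experts unless $\Woe_{j,0}=\Woe_{k,0}$), so $I_t$ has the same law $\mu_A$ under both algorithms. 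Combining, $\Pr(I_t=j\mid t\notin\cE_T)=\sum_A\Pr(\cE_{t-1}=A)\,\mu_A(j)=\sum_A\Pr^\fh(\cE_{t-1}^\fh=A)\,\mu_A(j)=\Pr^\fh(I_t=j\mid t\notin\cE_T^\fh)=\Pr^\fh(I_t=j)$.

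I expect the main obstacle to be the bookkeeping in step (ii): checking that ``conditioning a past round of \banditelfFH{} on being an exploration round'' collapses the mixture $(1-\varepsilon)\delta_0+\varepsilon\,\mathrm{Uniform}([N])$ to $\mathrm{Uniform}([N])$ for every $s\in A$, and that the additional randomization in \banditelfRH{} (re-drawing all past $\Woe$'s and drawing a fresh $\Woe_{j,0}$ each exploitation round) does not alter the round-$t$ marginal of $I_t$. The remaining pieces are routine conditioning and independence arguments.
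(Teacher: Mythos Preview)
Your proposal is correct and follows essentially the same idea as the paper's proof: both establish that the perturbed-sum vector $(\Woe_{j,0}+\sum_{s\le t-1}\Woe_{j,s})_{j\in[N]}$ has the same law under the two algorithms, whence the $\argmin$ does too. The paper's argument is terser---it adopts the convention $\Woe_s=\mathbf{0}$ for exploitation rounds and then observes that each $\Woe_s$ admits an identical generative description under either algorithm---whereas you make the matching explicit by first conditioning on the exploration set $A=\cE_{t-1}$, showing $A$ is distributed identically under both algorithms, and then comparing the conditional laws given $A$; you also spell out why conditioning on $t\notin\cE_T$ does not disturb the past, a step the paper leaves implicit.
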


The reader may have noted that \banditelfFH{}, which is used only in the analysis, violates the bandit feedback model. Indeed, in some rounds, this algorithm selects an expert $I_t$ but uses feedback from another expert $C_t$. The proof of Proposition~\ref{prop:cond-rh-to-fh} shows that despite this violation of bandit feedback, in any round $t$, each algorithm's statistics $\left( \sum_{s=0}^{t-1} \Woe_{j,s} \right)_{j \in [N]}$ have the same law. 
That is to say, the violation of the bandit feedback model for \banditelfFH{} (which never actually needs to be run) does not pose a problem.

The above proposition implies the following relationship between the expected regret of the two algorithms. 
Let $\E$ and $\E^\fh$ be the expectation under \banditelfRH{} and \banditelfFH{} respectively. 
\begin{lemma} \label{lemma:ali-from-A-to-B}
Let the losses be in the range $[0, 1]$. Then
\begin{align*}
\E \left[ \sum_{t=1}^T \loss_{I_t,t} - \min_{j \in [N]} \sum_{t=1}^T \loss_{j,t} \right] 
\leq \E^\fh \left[ \sum_{t=1}^T \loss_{I_t,t} - \min_{j \in [N]} \sum_{t=1}^T \loss_{j,t} \right] + T \varepsilon .
\end{align*}
\end{lemma}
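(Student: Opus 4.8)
The plan is to decompose the mechanism's cumulative expected loss $\E\bigl[\sum_{t=1}^T \loss_{I_t,t}\bigr]$ round by round according to whether round $t$ is an exploration round or an exploitation round of \banditelfRH{}, to bound the exploration rounds by the trivial inequality $\loss_{I_t,t}\le 1$, and to handle the exploitation rounds by invoking Proposition~\ref{prop:cond-rh-to-fh}, which matches each exploitation round of \banditelfRH{} (in the distribution of $I_t$, and hence in expected loss) with the corresponding round of \banditelfFH{}. Since the losses are oblivious — fixed before the game by the oblivious adversary, and equal to $\loss(b_{j,t},o_t)$ by the truthfulness guaranteed by Theorem~\ref{thm:bandit-IC} — the comparator $\min_{j\in[N]}\sum_{t=1}^T\loss_{j,t}$ is a deterministic constant that is identical under both algorithms and can simply be subtracted at the end.

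In more detail, for each fixed $t\in[T]$ I would write
\[
\E[\loss_{I_t,t}] = \Pr(t\in\cE_T)\,\E[\loss_{I_t,t}\mid t\in\cE_T] + \Pr(t\notin\cE_T)\,\E[\loss_{I_t,t}\mid t\notin\cE_T].
\]
Because each round of \banditelfRH{} is independently flagged as an exploration round with probability $\varepsilon$, we have $\Pr(t\in\cE_T)=\varepsilon$ and $\Pr(t\notin\cE_T)=1-\varepsilon$. For the exploration term, $\loss_{I_t,t}\le1$ gives $\Pr(t\in\cE_T)\,\E[\loss_{I_t,t}\mid t\in\cE_T]\le\varepsilon$. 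For the exploitation term, since the $\loss_{j,t}$ are deterministic,
\[
\E[\loss_{I_t,t}\mid t\notin\cE_T] = \sum_{j\in[N]}\loss_{j,t}\,\Pr(I_t=j\mid t\notin\cE_T) = \sum_{j\in[N]}\loss_{j,t}\,\Pr^\fh(I_t=j) = \E^\fh[\loss_{I_t,t}],
\]
using Proposition~\ref{prop:cond-rh-to-fh}; and since the losses are nonnegative, $(1-\varepsilon)\,\E^\fh[\loss_{I_t,t}]\le\E^\fh[\loss_{I_t,t}]$. Summing over $t\in[T]$ yields $\E\bigl[\sum_{t=1}^T\loss_{I_t,t}\bigr]\le T\varepsilon + \E^\fh\bigl[\sum_{t=1}^T\loss_{I_t,t}\bigr]$, and subtracting the common deterministic comparator $\min_{j\in[N]}\sum_{t=1}^T\loss_{j,t}$ from both sides gives the stated bound.

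The only point requiring care — and what I regard as the main (and rather minor) obstacle — is justifying that the loss sequence may be treated as a fixed, non-random object shared by both algorithms, so that (i) $\loss_{j,t}$ can be pulled out of the conditional expectation when rewriting $\E[\loss_{I_t,t}\mid t\notin\cE_T]$ as a probability-weighted sum, and (ii) the comparator term cancels exactly between the two sides. This follows from the standing oblivious-adversary assumption together with Theorem~\ref{thm:bandit-IC}, which forces truthful reports $r_{j,t}=b_{j,t}$ and hence fixed losses; with this in place the remainder is the elementary decomposition above, and Proposition~\ref{prop:cond-rh-to-fh} does the substantive work.
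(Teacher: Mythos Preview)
Your proposal is correct and follows essentially the same approach as the paper: decompose $\E[\loss_{I_t,t}]$ round by round into exploration and exploitation parts, bound the exploration contribution by $\varepsilon$ per round using $\loss_{I_t,t}\le 1$, and invoke Proposition~\ref{prop:cond-rh-to-fh} to identify the exploitation conditional expectation with $\E^\fh[\loss_{I_t,t}]$, then sum and subtract the common comparator. Your extra care in justifying that the losses are deterministic (via obliviousness and Theorem~\ref{thm:bandit-IC}) is a point the paper leaves implicit.
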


\begin{proof}
Observe that
\begin{align*}
&\E \left[ \sum_{t=1}^T \loss_{I_t,t} \right] \\
&= \E \left[ \sum_{t=1}^T \ind{t \notin \cE_T} \loss_{I_t,t} \right]
      + \E \left[ \sum_{t=1}^T \ind{t \in \cE_T} \loss_{I_t,t} \right] \\
&= \sum_{t=1}^T \Pr(t \notin \cE_T) \cdot \E \left[ \loss_{I_t,t} \mid t \notin \cE_T \right]
      + \sum_{t=1}^T \Pr(t \in \cE_T) \cdot \E \left[ \loss_{I_t,t} \mid t \in \cE_T \right] \\
&= (1 - \varepsilon) \sum_{t=1}^T \E \left[ \loss_{I_t,t} \mid t \notin \cE_T \right]
      + \sum_{t=1}^T \Pr(t \in \cE_T) \cdot \E \left[ \loss_{I_t,t} \mid t \in \cE_T \right] \\
&\leq \sum_{t=1}^T \E \left[ \loss_{I_t,t} \mid t \notin \cE_T \right]
      + T \varepsilon.
\end{align*}
Now, from Proposition~\ref{prop:cond-rh-to-fh}, it holds for all $t \in [T]$ that
\begin{align*}
\E \left[ \loss_{I_t,t} \mid t \notin \cE_T \right] = \E^\fh \left[ \loss_{I_t,t} \right],
\end{align*}
concluding the proof.
\end{proof}

Our expected regret bound for \banditelfRH{} is below; we defer the remainder of the analysis to Section~\ref{sec:regret-analysis-bandit}.

\begin{theorem} \label{thm:bandit-regret}
Assume that each expert's belief distribution satisfies belief independence and that $T \ge N$. 
Then taking $\varepsilon = (N/T)^{1/3}$, the expected regret of \banditelfRH{} is bounded as
\begin{align*}
\E \left[ \sum_{t=1}^T \loss_{I_t,t} - \min_{j \in [N]} \sum_{t=1}^T \loss_{j,t} \right] 
= O \left( T^{2/3} N^{1/3} \log T\right) .
\end{align*}
\end{theorem}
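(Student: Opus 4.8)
The plan is to combine Lemma~\ref{lemma:ali-from-A-to-B} with a regret bound for \banditelfFH{}, and then optimize the exploration parameter $\varepsilon$. By Lemma~\ref{lemma:ali-from-A-to-B}, it suffices to bound $\E^\fh\left[\sum_{t=1}^T \loss_{I_t,t} - \min_{j\in[N]}\sum_{t=1}^T \loss_{j,t}\right]$ and then add the $T\varepsilon$ exploration penalty. For the \banditelfFH{} regret, I would invoke the unified FPL-with-random-walk-perturbation analysis promised in Section~\ref{sec:regret-analysis-bandit}: as the excerpt notes, \banditelfFH{} is an instance of FPL where $\Woe_{j,t}$ is nonzero only when $C_t \neq 0$, i.e.\ only on the (in expectation) $\varepsilon T$ rounds where a nonzero candidate is drawn. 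The key structural point is that the cumulative perturbation $\sum_{s=0}^{t-1}\Woe_{j,s}$ is, up to the tie-breaking term $\Woe_{j,0}$, a sum of at most $\sim\varepsilon t$ Bernoulli random variables (those rounds $s\le t-1$ with $C_s = j$), so the effective ``number of perturbation steps'' accumulated by round $T$ is of order $\varepsilon T$ rather than $T$, while the per-step scale of the unnormalized perturbed loss $\tilde\loss_{j,t} = \frac{4N}{\varepsilon}\Woe_{j,t} - 2$ is inflated to $O(N/\varepsilon)$.

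Next I would track how the full-information bound of Theorem~\ref{thm:full-info-regret} degrades under this rescaling. In the full-information case the analysis (bounding leader changes via concentration/anti-concentration of Poisson binomial sums, as sketched in Section~\ref{sec:regret-analysis}) yields $O(\sqrt{TN}\log T)$. Here, the random walk driving the perturbed leader effectively runs for $\Theta(\varepsilon T)$ active steps, each contributing a Bernoulli increment scaled by $4N/\varepsilon$; carrying the Section~\ref{sec:regret-analysis} argument through with these substitutions — replacing the horizon $T$ by the active-step count $\varepsilon T$ in the random-walk part and accounting for the $1/\varepsilon$ blow-up of the noise scale (equivalently, a $1/\varepsilon$ shrinkage of the normalized increment $\Woe_{j,t}\in\{0,1/4N\cdot(\text{scale})\}$ relative to the loss gap) — should give a \banditelfFH{} regret bound of order $\frac{1}{\varepsilon}\sqrt{TN}\log T$, or more precisely $O\!\left(\sqrt{TN/\varepsilon}\,\log T\right)$ after the Poisson binomial tail bounds are applied with the correct variance. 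I would state this intermediate bound as a lemma and reduce its proof to the general analysis of Section~\ref{sec:regret-analysis-bandit}.

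Finally, combining via Lemma~\ref{lemma:ali-from-A-to-B} gives
\begin{align*}
\E\left[\sum_{t=1}^T \loss_{I_t,t} - \min_{j\in[N]}\sum_{t=1}^T \loss_{j,t}\right]
= O\!\left(\frac{\sqrt{TN}\log T}{\varepsilon}\right) + T\varepsilon,
\end{align*}
and balancing the two terms by setting $\varepsilon \asymp (N/T)^{1/4}$... — here I need to be careful about which power of $\varepsilon$ actually appears, since the target $T^{2/3}N^{1/3}$ with $\varepsilon = (N/T)^{1/3}$ forces the \banditelfFH{} term to be $\Theta(T^{2/3}N^{1/3}\log T)$ when $\varepsilon = (N/T)^{1/3}$, i.e.\ the exploitation regret must scale like $\frac{1}{\varepsilon}\cdot(\text{something of order }\varepsilon\cdot T^{2/3}N^{1/3})$; more plausibly the \banditelfFH{} regret is $O\big((TN)^{1/2}\varepsilon^{-1/2}\log T\big)$ is wrong and the correct dependence (obtained by redoing the leader-change count with $\Theta(\varepsilon T)$ active steps and noise scale $N/\varepsilon$) is $O\!\big(\sqrt{TN/\varepsilon}\,\log T\big)$, so that with $\varepsilon=(N/T)^{1/3}$ this is $O\!\big(T^{2/3}N^{1/3}\log T\big)$ and $T\varepsilon = T^{2/3}N^{1/3}$ matches. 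The main obstacle is precisely this bookkeeping: correctly propagating both the $\Theta(\varepsilon T)$ reduction in active random-walk steps and the $\Theta(N/\varepsilon)$ inflation of the perturbation scale through the Poisson binomial concentration and anti-concentration estimates of Section~\ref{sec:regret-analysis}, so that the exponents come out to give $\sqrt{TN/\varepsilon}$ and the final choice $\varepsilon = (N/T)^{1/3}$ balances the exploitation regret against the $T\varepsilon$ exploration cost. Everything else — the reduction through Lemma~\ref{lemma:ali-from-A-to-B} and the final algebra — is routine.
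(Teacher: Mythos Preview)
Your high-level plan matches the paper's: reduce via Lemma~\ref{lemma:ali-from-A-to-B} to the regret of \banditelfFH{} plus $T\varepsilon$, bound the former through the unified FPL analysis of Section~\ref{sec:regret-analysis}, then optimize $\varepsilon$. You also eventually land on the correct target $O(\sqrt{TN/\varepsilon}\,\log T)$ for the \banditelfFH{} regret.

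However, you arrive at this exponent by reverse-engineering from the theorem statement rather than by derivation, and your heuristic of ``$\varepsilon T$ active steps at scale $N/\varepsilon$'' does not by itself pin down whether the answer is $\sqrt{TN}/\varepsilon$ or $\sqrt{TN/\varepsilon}$ --- you visibly try both. The paper's bookkeeping is clean and resolves exactly the obstacle you flag. In Corollary~\ref{cor:general-pre-regret-bound} the leader-change contribution is $D_{\tilde\ell}\sum_t \Pr(C_t = I_t)\,\Pr(|A_t|>1)$; for \banditelfFH{} one has $D_{\tilde\ell}=4N/\varepsilon$ while $\Pr(C_t=I_t)=\varepsilon/N$, and these \emph{cancel}, leaving the prefactor $4$ independent of $\varepsilon$ (this is Corollary~\ref{cor:bandit-pre-bound}). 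All of the $\varepsilon$-dependence then sits inside the lead-pack probability: Theorem~\ref{thm:lead-pack-prob-bound} is applied with $\lPoi=\varepsilon/(2N)$, yielding $\Pr(|A_t|>1)=O\bigl((\log t+\sqrt{\log N})\sqrt{N/(\varepsilon t)}\bigr)$ and hence $\sum_t\Pr(|A_t|>1)=O(\sqrt{TN/\varepsilon}\,\log T)$. The maximal-noise term from Lemma~\ref{lemma:E-max-abs} with $q=\varepsilon/N$ gives the matching $O(\sqrt{TN\log N/\varepsilon})$. With $\varepsilon=(N/T)^{1/3}$ both of these and $T\varepsilon$ are $O(T^{2/3}N^{1/3}\log T)$.
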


In the classic, adversarial multi-armed bandit problem --- where incentive compatibility is of no concern --- the minimax expected regret for oblivious adversaries is of order $\Theta(\sqrt{T N})$. In contrast, \banditelfRH{} pays $T^{2/3} N^{1/3} \sqrt{\log T}$ in terms of $T$. This price of the extra multiplicative $\tilde{\Theta}((T/N)^{1/6})$ factor is because the mechanism is exploration separated. It is unclear if this price is avoidable here. We discuss this further in Section~\ref{sec:discussion}.

\section{Regret Analysis} \label{sec:regret-analysis}

\subsection{A general regret analysis for candidate-based FPL-algorithms} \label{sec:general-fpl-analysis}

We begin by introducing a general algorithm and analysis framework that will allow us to analyze the expected regret of our algorithms for both the full-information setting and the bandit feedback setting.

For all $j \in [N]$ and $t \in [T]$, let $\loss_{j,t} \in [0, 1]$ be the loss of expert $j$ in round $t$. For convenience, we also introduce a fictional round 0 where each expert $j$ suffers zero loss $\loss_{j,0} = 0$. Let $X_t = (X_{t,j})_{j \in [N]}$ be a noise random vector satisfying $\E [ X_{j,t} \mid \cF_{t-1} ] = \mathbf{0}$, where $\cF_{t-1}$ is the sigma algebra generated from the history from rounds $1$ through $t-1$. 
Note that for distinct $i, j \in [N]$, we do \emph{not} assume that the noise random variables $X_{i,t}$ and $X_{j,t}$ are conditionally independent given $\cF_{t-1}$. 

We define the perturbed loss $\tilde{\loss}_{j,t} := \loss_{j,t} + X_{j,t}$. 
In round $t$, FPL selects any expert $I_t$ satisfying
\begin{align*}
I_t \in \argmin_{j \in [N]} \sum_{s=0}^{t-1} \tilde{\loss}_{j,s} = \argmin_{j \in [N]} \sum_{s=0}^{t-1} \Woe_{j,s}.
\end{align*}

As already shown in Section~\ref{sec:full-info-regret} and Section~\ref{sec:bandit-regret}, the FPL formulation above generalizes \elfFH{} (for the full-information setting) and \banditelfFH{} (for the bandit setting). 
Note that the probability of exploration is $\varepsilon$ and \banditelfFH{} does not update itself during exploitation rounds, which is reflected in the scaling factor of $1/\varepsilon$. A key 
observation
is that if we set $\varepsilon = 1$, \banditelfFH{} is exactly the same as \elfFH{}. Therefore, we can unify the analysis of these two algorithms. 
Major notation for the analysis can be found in Appendix~\ref{app:notation}.

To analyze FPL's
regret, it will be convenient to introduce a pseudo-algorithm (i.e., an unimplementable algorithm) called Be the Perturbed Leader (BPL), which selects expert $I_{t+1}$ in round $t$.

\begin{lemma} \label{lemma:bpl}
For all $j \in [N]$ and $t \in [T]$, assume that $\tilde{\loss}_{j,t} \in [\bmin, \bmax]$, and define the perturbed loss diameter $D_{\tilde{\loss}} := \bmax - \bmin$. 
Assume for all $j \in [N]$ that $|\tilde{\loss}_{j,0}| \leq \bzero$. Then
\begin{align*}
\E \left[ \sum_{t=1}^T \loss_{I_t,t} - \min_{j \in [N]} \sum_{t=1}^T \loss_{j,t} \right] 
\leq D_{\tilde{\loss}} \sum_{t=1}^T \Pr \left( I_{t+1} \neq I_t \right) + \E \left[ \max_{j \in [N]} \sum_{t=1}^T X_{j,t} \right] + 3 \bzero .
\end{align*}
\end{lemma}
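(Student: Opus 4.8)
The plan is to prove Lemma~\ref{lemma:bpl} via the classical ``Be the Leader'' telescoping argument adapted to the perturbed and stochastic setting. The backbone is the following decomposition:
\begin{align*}
\sum_{t=1}^T \loss_{I_t,t} - \min_{j} \sum_{t=1}^T \loss_{j,t}
= \underbrace{\sum_{t=1}^T (\loss_{I_t,t} - \loss_{I_{t+1},t})}_{\text{(A): stability term}}
+ \underbrace{\sum_{t=1}^T \loss_{I_{t+1},t} - \min_j \sum_{t=1}^T \loss_{j,t}}_{\text{(B): BPL-vs-best term}} .
\end{align*}
For term (B), I would pass to the perturbed losses. By definition $I_{t+1} = \argmin_{j}\sum_{s=0}^{t}\tilde\loss_{j,s}$, so BPL is literally ``Be the Leader'' on the perturbed loss sequence $\tilde\loss_{\cdot,0},\tilde\loss_{\cdot,1},\dots$. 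The standard Be-the-Leader inequality (provable by induction on $T$) then gives $\sum_{t=0}^T \tilde\loss_{I_{t+1},t} \le \min_j \sum_{t=0}^T \tilde\loss_{j,t}$. Rearranging and using $\tilde\loss_{j,t} = \loss_{j,t} + X_{j,t}$ (and peeling off the round-$0$ terms, which contribute $\tilde\loss_{I_1,0}$ on the left and $\min_j \tilde\loss_{j,0}$ on the right, both of magnitude at most $\bzero$), I get
\begin{align*}
\sum_{t=1}^T \loss_{I_{t+1},t} - \min_j \sum_{t=1}^T \loss_{j,t}
\le \max_j \sum_{t=1}^T X_{j,t} - \sum_{t=1}^T X_{I_{t+1},t} + 2\bzero .
\end{align*}
Taking expectations, the term $\E[\sum_{t=1}^T X_{I_{t+1},t}]$ needs to be controlled: here I would use that $I_{t+1}$ is $\cF_t$-measurable is \emph{not} quite enough since $X_{t}$ is $\cF_t$-measurable too; instead I observe that $I_{t+1}$ depends on $X_t$, but $X_{I_{t+1},t}$ is still bounded. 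Actually the cleanest route is to keep the round-$t$ perturbation out of the BPL comparator: since adding a constant (in $j$) to all losses does not change the argmin, I can equally compare against $\argmin_j \sum_{s=0}^{t-1}\tilde\loss_{j,s}$, i.e.\ shift the Be-the-Leader telescoping so that only perturbations through round $t-1$ appear, making $I_{t+1}$-indexed perturbations of round $t$ disappear; then $\E[\sum_t X_{I_t,t}\mid \cF_{t-1}]$-type terms vanish by the zero-mean assumption. I expect to need to be careful here to get exactly the stated form with $\E[\max_j \sum_t X_{j,t}]$ and no leftover $\E[\sum_t X_{I_t,t}]$ term; the paper's convention $\tilde\loss_{j,0}=X_{j,0}$ and the boundedness $|\tilde\loss_{j,0}|\le\bzero$ are what absorb the boundary into the $3\bzero$ (one $\bzero$ from round-$0$ left term, one from round-$0$ right term, one spare).

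For term (A), the stability term, I would bound it deterministically by the number of leader changes scaled by the perturbed-loss diameter. When $I_t = I_{t+1}$ the summand is $0$. When $I_t \neq I_{t+1}$, I claim $\loss_{I_t,t} - \loss_{I_{t+1},t} \le D_{\tilde\loss}$; more carefully, the right comparison is not the raw loss range $[0,1]$ but $D_{\tilde\loss} = \bmax-\bmin$, and the way to see it is via the optimality defining $I_t$ and $I_{t+1}$: $\sum_{s=0}^{t-1}\tilde\loss_{I_t,s} \le \sum_{s=0}^{t-1}\tilde\loss_{I_{t+1},s}$ while $\sum_{s=0}^{t}\tilde\loss_{I_{t+1},s} \le \sum_{s=0}^{t}\tilde\loss_{I_t,s}$; subtracting yields $\tilde\loss_{I_{t+1},t} \le \tilde\loss_{I_t,t}$, hence $\loss_{I_{t+1},t} \le \loss_{I_t,t} + (X_{I_t,t}-X_{I_{t+1},t})$, and then I bound the loss gap by the perturbed-loss range when there is a change; combined with $\loss_{I_t,t}-\loss_{I_{t+1},t}\le 1 \le D_{\tilde\loss}$ or the sharper perturbed-range bound, I get $(\mathrm{A}) \le D_{\tilde\loss}\sum_{t=1}^T \ind{I_{t+1}\neq I_t}$. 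Taking expectations gives $D_{\tilde\loss}\sum_{t=1}^T \Pr(I_{t+1}\neq I_t)$.

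The main obstacle I anticipate is handling the perturbation-of-the-current-round terms cleanly — that is, making sure that after the Be-the-Leader telescoping, the only stochastic object left is $\E[\max_j \sum_{t=1}^T X_{j,t}]$ and not a residual $\E[\sum_t X_{I_{t},t}]$ or $\E[\sum_t X_{I_{t+1},t}]$ term, since $I_{t+1}$ is correlated with $X_t$ (they share $\cF_t$). The resolution is to organize the telescoping so that $I_{t+1}$ is compared using cumulative perturbed loss through round $t-1$ only (legitimate since a $j$-independent shift doesn't move the argmin, and this is exactly the $\argmin$ that FPL uses), which pushes all the round-$t$ perturbations into the deterministic stability term (A) where they are harmless, while the $\max_j$ over the full horizon of perturbations is what survives in (B). Assembling (A) $+$ (B) and collecting the three boundary contributions of size $\bzero$ then yields the stated bound.
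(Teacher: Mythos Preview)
Your decomposition into (A) and (B) is the right skeleton, and you correctly identify the obstacle: after Be-the-Leader on the perturbed losses, the residual noise term is $-\sum_t X_{I_{t+1},t}$, and $I_{t+1}$ is correlated with $X_t$ so it does not vanish in expectation. However, your proposed resolution contains a genuine error. You write that ``adding a constant (in $j$) to all losses does not change the argmin'' and therefore $I_{t+1}$ can ``equally'' be taken as $\argmin_j \sum_{s=0}^{t-1}\tilde\loss_{j,s}$. But $\tilde\loss_{j,t}$ is \emph{not} constant in $j$; that argmin is $I_t$, not $I_{t+1}$. So this step does not go through as stated, and the rest of your plan for handling the noise term remains hand-wavy.

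The paper avoids this tangle by running the entire argument at the level of $\tilde\loss$ rather than $\loss$. Concretely, Be-the-Leader gives $\sum_{t=0}^T \tilde\loss_{I_{t+1},t}\le \sum_{t=0}^T \tilde\loss_{j,t}$; subtracting from $\sum_{t=0}^T \tilde\loss_{I_t,t}$ (note: $I_t$, not $I_{t+1}$) gives
\[
\sum_{t=0}^T \tilde\loss_{I_t,t}-\sum_{t=0}^T \tilde\loss_{j,t}\ \le\ \sum_{t=0}^T(\tilde\loss_{I_t,t}-\tilde\loss_{I_{t+1},t})\ \le\ D_{\tilde\loss}\sum_{t=1}^T \ind{I_{t+1}\neq I_t}+2\bzero,
\]
where the last step is the trivial range bound on $\tilde\loss$ (no need for your detour through optimality conditions, and no need to assume $D_{\tilde\loss}\ge 1$). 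Only now do you expand $\tilde\loss=\loss+X$: the left side becomes $\sum_{t} \loss_{I_t,t}-\sum_t\loss_{j,t}+\sum_t X_{I_t,t}-X_{j,0}-\sum_t X_{j,t}$. The surviving noise is $-\sum_{t} X_{I_t,t}$, and since $I_t$ is $\cF_{t-1}$-measurable, $\E[X_{I_t,t}\mid\cF_{t-1}]=0$ kills it. The three $\bzero$'s come from the two round-0 terms above plus the $X_{j,0}$ left over after taking $\max_j$. This is exactly what you were groping toward when you said ``push the round-$t$ perturbations into the stability term,'' but the clean way to execute it is to never separate $\loss$ from $\tilde\loss$ until after the telescoping.
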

\begin{proof}
For any $j \in [N]$, we have from Lemma 3.1 of \citet{cesa2006prediction} (who attribute it to \citet{hannan1957approximation}; see also the work of \citet{kalai2005efficient}) that
\begin{align*}
\sum_{t=0}^T \tilde{\loss}_{I_{t+1},t} \leq \sum_{t=0}^T \tilde{\loss}_{j,t} .
\end{align*}
Therefore, 
\begin{align*}
\sum_{t=0}^T \tilde{\loss}_{I_t,t} - \sum_{t=0}^T \tilde{\loss}_{j,t} 
\leq \sum_{t=0}^T \tilde{\loss}_{I_t,t} - \sum_{t=0}^T \tilde{\loss}_{I_{t+1},t} 
\leq D_{\tilde{\loss}} \sum_{t=1}^T \ind{I_{t+1} \neq I_t} + 2 \bzero .
\end{align*}
Unpacking notation and recalling that $\loss_{j,0} = 0$ for all $j \in [N]$ gives
\begin{align*}
\sum_{t=1}^T \loss_{I_t,t} - \sum_{t=1}^T \loss_{j,t} 
\leq D_{\tilde{\loss}} \sum_{t=1}^T \ind{I_{t+1} \neq I_t} + 2 \bzero + X_{j,0} + \sum_{t=1}^T X_{j,t} - \sum_{t=0}^T X_{I_t,t} .
\end{align*}
Finally, taking the maximum over $j$ on both sides and then taking the expectation gives
\begin{align*}
\E \left[ \sum_{t=1}^T \loss_{I_t,t} - \min_{j \in [N]} \sum_{t=1}^T \loss_{j,t} \right] 
\leq D_{\tilde{\loss}} \sum_{t=1}^T \Pr \left( I_{t+1} \neq I_t \right) + \E \left[ \max_{j \in [N]} \sum_{t=1}^T X_{j,t} \right] + 3 \bzero ,
\end{align*}
where we used the fact that $\E \left[ X_{I_t,t} \mid \cF_{t-1} \right] = 0$ since $I_t$ is $\cF_{t-1}$-measurable.
\end{proof}

In the above lemma, the event $[I_{t+1} \neq I_t]$ is referred to as a ``leader change''. In order to bound the expected number of leader changes, we adopt the notion of the \emph{lead pack} from \citet{devroye2013prediction}. In round $t$, the lead pack $A_t$ is the set of experts that potentially can ``take the lead'' in round $t+1$, i.e., 
become 
the perturbed leader $I_{t+1}$. Formally, we have
\begin{align*}
A_t = \left\{ j \in [N] \colon \sum_{s=0}^{t-1} \Woe_{j,s} < \min_{k \in [N]} \sum_{s=0}^{t-1} \Woe_{k,s} + 1 \right\} .
\end{align*}

A standard analysis \citep{devroye2013prediction,vanerven2014follow} would bound the probability of a leader change by the probability that the lead pack is of size greater than one. In previous works, such a bound would suffice, but a unique aspect of our algorithms means that we need to go beyond this standard bound. Specifically, this aspect of our algorithms is that they are ``candidate-based''.

\begin{definition}[Candidate-based] \label{def:candidate}
We say that an FPL algorithm is \emph{candidate-based} if there exists a lower bound $\bmin \in \reals$ such that, in each round $t$:
\begin{itemize}
\item using independent randomness, a single expert $C_t$ is drawn from $[N] \cup \{0\}$;
\item $\Pr(C_t = j) = \Pr(C_t = k)$ for all distinct $j, k \in [N]$;
\item $C_t$ satisfies $\tilde{\loss}_{C_t,t} \geq \bmin$, and all $j \in [N] \setminus \{C_t\}$ satisfy $\tilde{\loss}_{j,t} = \bmin$.
\end{itemize}
\end{definition}

The next lemma uses a simple argument to greatly improve upon the standard bound.
\begin{lemma} \label{lemma:leader-change-to-lead-pack}
Take the setup of the previous lemma. In addition, assume that the FPL algorithm is candidate-based with lower bound $\bmin$ (Definition~\ref{def:candidate}). Then for all $t \in [T]$,
\begin{align*}
\Pr(I_{t+1} \neq I_t) \leq \Pr \left( C_t = I_t \right) \cdot \Pr \left( |A_t| > 1 \right) .
\end{align*}
\end{lemma}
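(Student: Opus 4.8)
The plan is to decompose a leader change into two independent-enough ingredients: the fresh randomness $\Woe_{C_t,t}$ of the candidate can only affect the leader if (i) the candidate happens to be the current leader, $C_t = I_t$, and (ii) the lead pack $A_t$ already contains more than one expert, so that perturbing the leader's running total can actually hand the lead to someone else. First I would argue the inclusion of events $[I_{t+1} \neq I_t] \subseteq [C_t = I_t] \cap [|A_t| > 1]$. For the $[C_t = I_t]$ part: by the candidate-based property, $\tilde{\loss}_{j,t} = \bmin$ for every $j \neq C_t$, so the running sums $\sum_{s=0}^{t}\Woe_{j,s}$ of all non-candidate experts shift by the same constant $\bmin/(4N)$ (equivalently, by a common additive amount) going from round $t$ to round $t+1$, while only $\sum_{s=0}^{t}\Woe_{C_t,s}$ can shift by a larger amount. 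Hence the ordering among $[N]\setminus\{C_t\}$ is unchanged, and if $C_t \neq I_t$ then $I_t$ (who is among the non-candidates) still has the strictly smallest running total — using here that ties are already broken by the $\Woe_{j,0}$ noise — so $I_{t+1} = I_t$. For the $[|A_t|>1]$ part: if $|A_t| = 1$ then $A_t = \{I_t\}$, and for the leader to change we would need $\sum_{s=0}^{t}\Woe_{I_t,s}$ to exceed $\min_{k}\sum_{s=0}^{t-1}\Woe_{k,s} + \Woe_{C_t,t}$-type quantities; but since $\tilde{\loss}_{I_t,t}$ adds at most $D_{\tilde\loss}$ worth of woe and every other $k$ was more than $1$ behind (in the $\Woe$ scale, the threshold defining $A_t$) while $\tilde\loss \le \bmax$ forces $\Woe_{j,t}\le 1$, no other expert can overtake, so again $I_{t+1}=I_t$. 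Combining, $[I_{t+1}\neq I_t]\subseteq [C_t=I_t]\cap[|A_t|>1]$.

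Next I would turn the set inclusion into the claimed product bound by exploiting independence. The key point is that $C_t$ is drawn "using independent randomness" (Definition~\ref{def:candidate}), so conditionally on $\cF_{t-1}$ — which determines both $I_t$ and $A_t$ — the event $[C_t = I_t]$ is independent of $[|A_t| > 1]$. Concretely, I would write
\begin{align*}
\Pr(I_{t+1}\neq I_t)
\le \Pr\bigl([C_t = I_t]\cap[|A_t|>1]\bigr)
= \E\Bigl[\ind{|A_t|>1}\,\Pr\bigl(C_t = I_t \mid \cF_{t-1}\bigr)\Bigr].
\end{align*}
Since the symmetry condition $\Pr(C_t=j)=\Pr(C_t=k)$ for all distinct $j,k\in[N]$ holds with $C_t$ independent of $\cF_{t-1}$, the conditional probability $\Pr(C_t = I_t\mid\cF_{t-1})$ equals the unconditional constant $\Pr(C_t = I_t)$ — note $I_t\in[N]$ always, so this is the common per-expert candidate probability — and pulls out of the expectation, yielding $\Pr(I_{t+1}\neq I_t)\le \Pr(C_t=I_t)\cdot\E[\ind{|A_t|>1}] = \Pr(C_t=I_t)\cdot\Pr(|A_t|>1)$, as desired.

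The main obstacle I anticipate is not the independence bookkeeping but getting the deterministic set inclusion airtight, in particular the $[|A_t|>1]$ half: one has to be careful about exactly which round's woe totals define $A_t$ versus which define $I_{t+1}$, and to check that the candidate-based scaling ($\tilde\loss_{j,t}\in[\bmin,\bmax]$, hence $\Woe_{j,t}\in[0,1]$ after the affine normalization, with the "$+1$" in the definition of $A_t$ matching this unit range) is exactly what makes "outside the lead pack $\Rightarrow$ cannot overtake" go through. A secondary subtlety is the tie-breaking: the argument that non-candidates keep their relative order needs the $\Woe_{j,0}\in[-\tfrac{\varepsilon}{4N},\tfrac{\varepsilon}{4N}]$ perturbations to ensure the $\argmin$ is a.s. unique, so that "strictly smallest" is well-defined and preserved. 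Once those deterministic facts are nailed down, the probabilistic step is routine conditioning on $\cF_{t-1}$.
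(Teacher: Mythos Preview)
Your proposal is correct and follows essentially the same argument as the paper: establish that a leader change forces both $C_t = I_t$ (otherwise all experts shift by the common amount $\bmin$ and the ordering is preserved) and $|A_t|>1$ (otherwise no one is close enough to overtake), then use the independence and symmetry of $C_t$ to factor the probability. The paper presents this by first conditioning on $|A_t|>1$ and then bounding $\Pr(I_{t+1}\neq I_t\mid |A_t|>1)\le \Pr(C_t=I_t\mid |A_t|>1)=\Pr(C_t=I_t)$, whereas you prove the set inclusion first and condition on $\cF_{t-1}$; these are equivalent. One small slip: in the $\Woe$-scale the non-candidates' running sums shift by $0$, not by $\bmin/(4N)$, but your parenthetical ``a common additive amount'' is the operative fact and the argument goes through unchanged.
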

\begin{proof}
Since there can be a leader change only if the lead pack contains at least two experts, we have
\begin{align*}
\Pr \left( I_{t+1} \neq I_t \right) 
= \Pr \left( |A_t| > 1 \right) \cdot \Pr \left( I_{t+1} \neq I_t \mid |A_t| > 1 \right) .
\end{align*}
Next, from the nonnegativity of the perturbed losses, observe that the only way expert $I_t$ can be the perturbed leader in round $t$ but fail to be the perturbed leader in round $t+1$ (so that $I_{t+1} \neq I_t$) is if $\tilde{\loss}_{I_t,t} > \bmin$. The latter can happen only if $I_t$ is selected to be the candidate in round $t$ (i.e., $C_t = I_t$). Therefore, 
\begin{align*}
\Pr \left( I_{t+1} \neq I_t \mid |A_t| > 1 \right) 
\leq \Pr \left( C_t = I_t \mid |A_t| > 1 \right) 
= \Pr \left( C_t = I_t \right) ,
\end{align*}
where the second inequality is because $C_t$ depends on independent randomness and $C_t$ takes all values in $[N]$ with equal probability.
\end{proof}

The use of a sad lottery is vital to the proof of the previous lemma. Briefly, in a sad lottery, the leader is guaranteed to maintain its lead whenever it is \emph{not} selected as the candidate; this non-selection event happens with probability $\Omega(1 - \frac{1}{N})$. On the other hand, in a happy lottery, the leader is only guaranteed to maintain its lead whenever it \emph{is} selected as the candidate (so that no other expert could have been selected as the candidate), and this selection event happens with probability $O(\frac{1}{N})$. As a result, it seems that the use of a sad lottery reduces the number of leader changes and hences leads to an algorithm that is more stable.

Before applying Lemma~\ref{lemma:leader-change-to-lead-pack} in Lemma~\ref{lemma:bpl}, we present a technical lemma to control the second term in Lemma~\ref{lemma:bpl}; the proof uses more or less standard ideas from empirical process theory.

\begin{lemma} \label{lemma:E-max-abs} 
Let $q$ be a variable that can depend only on $N$ and $T$. 
Let $(X_{j,s})_{j \in [N], s \in [t]}$ be centered random variables satisfying $|X_{j,s}| \leq \frac{\babs}{q}$ for positive value $\babs$, with joint law satisfying the following properties:
\begin{itemize}
\item $X_1, \ldots, X_t$ are independent;
\item for each $(j, s)$, with marginal probability at least $1 - q$, it holds that $|X_{j,s}| \leq \babs$;
\end{itemize}
Assume that $t \geq \frac{3}{q} \log \left( \frac{N}{\sqrt{q}} \right)$. 
Then
\begin{align*}
\E \left[ \max_{j \in [N]} \left| \sum_{s=1}^t X_{j,s} \right| \right] \leq 4 \babs \sqrt{\frac{t \log (2 N)}{q}} .
\end{align*}
\end{lemma}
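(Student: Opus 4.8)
The plan is to bound the expectation of the maximum of $N$ sums via a union bound over the $N$ experts, combined with a subgaussian-type tail bound for each individual sum $S_{j} := \sum_{s=1}^t X_{j,s}$. The core difficulty is that each $|X_{j,s}|$ is only bounded by the large quantity $\babs/q$, so a naive application of Hoeffding (which would give a $\babs/q$-scale bound per coordinate, i.e. a factor $1/q$ instead of $1/\sqrt{q}$) is far too lossy. The trick is to exploit the second hypothesis: for each $(j,s)$, the "bad event" $\{|X_{j,s}| > \babs\}$ has probability at most $q$, so on a typical sample the increments are actually at scale $\babs$, not $\babs/q$.

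First I would split each increment as $X_{j,s} = X_{j,s}\ind{|X_{j,s}| \le \babs} + X_{j,s}\ind{|X_{j,s}| > \babs} =: Y_{j,s} + Z_{j,s}$. For the truncated part $Y_{j,s}$, each term is bounded by $\babs$ in absolute value, but $Y_{j,s}$ is no longer centered; however its mean is small: $|\E Y_{j,s}| = |\E Z_{j,s}| \le \frac{\babs}{q}\cdot q = \babs$. Hmm — that bound is too weak; better to note $|\E Z_{j,s}| = |\E[X_{j,s}\ind{|X_{j,s}|>\babs}]| \le \frac{\babs}{q}\Pr(|X_{j,s}|>\babs) \le \babs$, and we will carry the total drift $\sum_s |\E Z_{j,s}| \le t\babs$ as a lower-order term, absorbed since the target bound is of order $\babs\sqrt{t\log(2N)/q}$ and the assumption $t \ge \frac{3}{q}\log(N/\sqrt q)$ guarantees $\sqrt{t/q}$ dominates any constant-order slack — actually one should be more careful here, so an alternative cleaner route is: apply a Bernstein/Bennett-type inequality directly to the untruncated sum, using that the \emph{conditional variance} of each $X_{j,s}$ is controlled. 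Indeed $\E[X_{j,s}^2] \le \babs^2 \Pr(|X_{j,s}| \le \babs) + \frac{\babs^2}{q^2}\Pr(|X_{j,s}| > \babs) \le \babs^2 + \frac{\babs^2}{q^2}\cdot q = \babs^2 + \frac{\babs^2}{q} \le \frac{2\babs^2}{q}$.

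With this variance bound $\sigma^2 := \sum_{s=1}^t \E[X_{j,s}^2] \le \frac{2t\babs^2}{q}$ and the uniform bound $|X_{j,s}| \le \babs/q =: b$, Bernstein's inequality gives, for each fixed $j$ and any $u>0$,
\begin{align*}
\Pr(|S_j| > u) \le 2\exp\!\left( - \frac{u^2/2}{\sigma^2 + bu/3} \right).
\end{align*}
Choosing $u = C\babs\sqrt{\frac{t\log(2N)}{q}}$ for a suitable absolute constant $C$, the "variance term" $\sigma^2 = O(t\babs^2/q)$ dominates the "range term" $bu/3 = O\!\big(\frac{\babs^2}{q}\sqrt{\frac{t\log(2N)}{q}}\big)$ precisely when $\sqrt{t/q} \gtrsim \sqrt{\log(2N)/q}/\!\sqrt{q}$, i.e. when $t \gtrsim \frac{1}{q}\log(2N)$ — which is exactly what the hypothesis $t \ge \frac{3}{q}\log(N/\sqrt q)$ delivers (up to constants, since $\log(N/\sqrt q) = \log N + \tfrac12\log(1/q) \asymp \log(2N)$ in the relevant regime). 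Under that condition the exponent is at least of order $\log(2N)$ with room to spare, so $\Pr(|S_j| > u) \le \frac{1}{N}$ or better; a union bound over $j \in [N]$ then shows $\max_j |S_j| \le u$ with probability close to $1$, and the leftover contribution to the expectation from the complementary event is controlled by the deterministic bound $\max_j|S_j| \le t\babs/q$ times a tiny probability, which is lower order. Tracking constants carefully and verifying that everything fits under the clean final constant $4$ is the main bookkeeping obstacle; I would handle it by being slightly generous in the exponent (use $t \ge \frac{3}{q}\log(N/\sqrt q)$ to get an extra factor in the exponent) and integrating the tail $\E[\max_j|S_j|] = \int_0^\infty \Pr(\max_j|S_j| > u)\,du$ rather than using a single threshold, which automatically produces the $\sqrt{\log(2N)}$ scaling with a controlled constant.

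The step I expect to be the genuine obstacle is matching the constant $4$ and confirming that the range term in Bernstein never dominates — i.e., that the hypothesis on $t$ is exactly strong enough. If Bernstein's constants prove awkward, a fallback is to use the truncation decomposition above together with Hoeffding on the bounded part $Y_{j,s} - \E Y_{j,s}$ (giving a clean $\babs$-scale bound), plus a separate crude bound on $\sum_s |Z_{j,s}|$ using that $\E\sum_s |Z_{j,s}| \le t\babs$ and that, more sharply, $\Pr(Z_{j,s} \ne 0 \text{ for some } s) \le tq$, so with the assumption $t = O(\mathrm{poly})$ one can argue the truncated sum equals the full sum with high probability; this route trades a slightly messier argument for more transparent constants.
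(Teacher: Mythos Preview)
Your approach is correct and genuinely different from the paper's. You go straight to a Bernstein-type tail bound for each fixed $j$, using the key observation that $\E[X_{j,s}^2] \le \babs^2 + \babs^2/q \le 2\babs^2/q$, then union-bound over $j$ and integrate the tail. The paper instead first symmetrizes with Rademacher signs, applies Massart's finite-class lemma conditionally on $X$ to obtain $\E[\max_j|S_j|] \le \sqrt{2\log(2N)}\,\E[\max_j \|X_j\|_2]$, and then bounds $\E[\max_j \|X_j\|_2]$ by a multiplicative Chernoff bound on the \emph{number} of indices $s$ where $|X_{j,s}|$ is large. Your route is more direct and avoids the symmetrization machinery entirely; the paper's route cleanly decouples the ``max over $j$'' from the per-$j$ scale and makes the role of the hypothesis $t \ge \tfrac{3}{q}\log(N/\sqrt{q})$ very transparent (it is used exactly once, to make the Chernoff failure probability at most $\sqrt{q}$). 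Both methods hinge on the same structural fact---that the effective per-term scale is $\babs/\sqrt{q}$ rather than $\babs/q$---and both land at the right order; the constant bookkeeping you flag as the obstacle is real, and in fact a crude run of your Bernstein route gives a leading constant around $2\sqrt{2}$ plus lower-order slack, which sits comfortably under $4$ once the subexponential tail (for $u \gtrsim t\babs$) is dispatched using the hypothesis on $t$.
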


Let us give an idea of how this result will be used in our regret analysis. The variable $q$ corresponds to the probability that a given expert gets a sad point in a given round (i.e., $W_{j,s} = 1$), which is at most $1 / N$ in the full-information setting and at most $\varepsilon / N$ in the bandit setting. When an expert gets a sad point, the corresponding noise $X_{j,s}$ can be at a scale of $\babs / q$ due to importance weighting, and in the more likely event that $W_{j,s} = 0$, the noise magnitude is at most $\babs$; here, $\babs$ may be thought of as a moderate constant (in all our applications, $\babs$ is at most 4).

Applying Lemmas~\ref{lemma:leader-change-to-lead-pack}~and~\ref{lemma:E-max-abs} in Lemma~\ref{lemma:bpl} immediately gives the following general regret bound that is suitable for both our full-information and bandit algorithms.

\begin{corollary} \label{cor:general-pre-regret-bound}
Take the setup of Lemma~\ref{lemma:bpl}. In addition, assume that the FPL algorithm is candidate-based with lower bound $\bmin$ (Definition~\ref{def:candidate}). Further assume that $(X_{j,s})_{j \in [N], s \in [t]}$ satisfy the conditions given in Lemma~\ref{lemma:E-max-abs} with constants $\babs$ and $q$. Then if $T \geq \frac{3}{q} \log \left( \frac{N}{\sqrt{q}} \right)$,
\begin{align*}
\E \left[ \sum_{t=1}^T \loss_{I_t,t} - \min_{j \in [N]} \sum_{t=1}^T \loss_{j,t} \right] 
\leq D_{\tilde{\loss}} \sum_{t=1}^T \Pr( C_t = I_t ) \cdot \Pr \left( |A_t| > 1 \right) 
       + 4 \babs \sqrt{\frac{T \log N}{q}} 
       + 3 \bzero.
\end{align*}
\end{corollary}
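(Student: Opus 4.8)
The plan is to obtain Corollary~\ref{cor:general-pre-regret-bound} as a direct composition of the three preceding lemmas, substituting into the bound of Lemma~\ref{lemma:bpl} the estimates supplied by Lemma~\ref{lemma:leader-change-to-lead-pack} and Lemma~\ref{lemma:E-max-abs}. First, since by hypothesis the FPL algorithm is candidate-based with lower bound $\bmin$, Lemma~\ref{lemma:leader-change-to-lead-pack} applies and yields $\Pr(I_{t+1} \neq I_t) \le \Pr(C_t = I_t) \cdot \Pr(|A_t| > 1)$ for every $t \in [T]$. Summing over $t$ and multiplying by $D_{\tilde{\loss}}$ turns the leader-change term of Lemma~\ref{lemma:bpl} into $D_{\tilde{\loss}} \sum_{t=1}^T \Pr(C_t = I_t)\, \Pr(|A_t| > 1)$.

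Second, bound the perturbation term by $\E\bigl[\max_{j \in [N]} \sum_{t=1}^T X_{j,t}\bigr] \le \E\bigl[\max_{j \in [N]} \bigl| \sum_{t=1}^T X_{j,t} \bigr|\bigr]$ and invoke Lemma~\ref{lemma:E-max-abs} with $t = T$ and the stated constants $\babs$ and $q$. The conditions on $(X_{j,s})_{j \in [N], s \in [T]}$ assumed in the corollary are exactly those required by Lemma~\ref{lemma:E-max-abs}, and the horizon hypothesis $T \ge \frac{3}{q} \log(N/\sqrt{q})$ is precisely what licenses its application. This gives $\E\bigl[\max_{j} \bigl| \sum_{t=1}^T X_{j,t} \bigr|\bigr] \le 4\babs \sqrt{T \log(2N)/q}$, which we record as $4\babs\sqrt{T \log N / q}$ after absorbing the additive $\log 2$ into the logarithm (harmless for $N \ge 2$). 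Plugging both substitutions back into Lemma~\ref{lemma:bpl} and carrying the additive $3\bzero$ over unchanged produces the claimed inequality.

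I expect no real obstacle here: the corollary is a bookkeeping assembly of Lemmas~\ref{lemma:bpl}, \ref{lemma:leader-change-to-lead-pack}, and~\ref{lemma:E-max-abs}. The only things to check carefully are that the candidate-based hypothesis of the corollary coincides verbatim with that of Lemma~\ref{lemma:leader-change-to-lead-pack}, that the assumptions placed on the noise vectors match those of Lemma~\ref{lemma:E-max-abs}, and that the stated condition on $T$ is exactly the one needed to apply Lemma~\ref{lemma:E-max-abs} at $t = T$.
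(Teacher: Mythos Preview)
Your proposal is correct and matches the paper's approach exactly: the corollary is obtained by substituting the bounds from Lemma~\ref{lemma:leader-change-to-lead-pack} and Lemma~\ref{lemma:E-max-abs} directly into Lemma~\ref{lemma:bpl}. Your observation about the $\log(2N)$ versus $\log N$ discrepancy is also accurate; the paper silently drops the factor of~$2$ when stating the corollary.
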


This regret bound is incomplete until we bound the two terms in the summation on the right-hand side. The term $\Pr(C_t = I_t)$ will be easy to control. The term $\Pr(|A_t| > 1)$ is the most challenging to analyze and is what we focus on next.

\subsection{Analyzing the lead pack} \label{sec:lead-pack}

In this section, we analyze the probability of having a lead pack size larger than one by boiling it down to Poisson binomial large deviations. To handle both the full-information setting and the bandit setting, we introduce $\lPoi$, which represents the minimum probability such that an expert gets a sad point. In the case of the full-information setting, $\lPoi = 1/(2N)$. In the case of the bandit setting, $\lPoi = \varepsilon/(2N)$ since the expert only gets a sad point during the exploration rounds (which occur at a rate of $\varepsilon$). By setting $\varepsilon = 1$ in the full-information setting, we can unify the analysis for the full-information and bandit cases.
Since $N \ge 2$, we have $\lPoi \le 1/4$. For both settings, we have
\[
\Pr[W_{i,t} = 1] \in [\lPoi, 2\lPoi].
\]

\begin{theorem}[Lead Pack] \label{thm:lead-pack-prob-bound}
Assume that 
\begin{equation}\label{ineq_tmin}
24 + \sqrt{8 \lPoi t \log (Nt/\lPoi)} < \lPoi t/24 .
\end{equation}
Then, we have
\[
\Pr(|A_t| > 1 \mid \Woe_0 = \woe_0)
\le 
\frac{C_1 \log t}{\sqrt{\lPoi t}}
+
C_2 \sqrt{\frac{t \log N}{\lPoi t}}
\]
for some universal constants $C_1, C_2 > 0$ and for any initial noise $\woe_0 = (w_{1,0},\woe_{2,0},\dots,\woe_{N,0})$.
\end{theorem}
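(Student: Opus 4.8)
The plan is to turn $\Pr(|A_t|>1)$ into a large-deviation statement about the order statistics of $N$ independent Poisson binomial random variables, and then bound that statement by combining a Chernoff lower-tail estimate for where the minimum sits, an anti-concentration (local-limit) estimate for each Poisson binomial, and the Poisson-binomial tail lower bound of Section~\ref{subsec_taillower}. \emph{Reduction.} Conditionally on $\Woe_0=\woe_0$, the partial sums $S_j:=\sum_{s=1}^{t-1}\Woe_{j,s}$, $j\in[N]$, are independent, with $S_j$ a Poisson binomial whose bit for round $s$ has success probability in $[\lPoi,2\lPoi]$ (obtained by folding the uniform candidate draw $C_s$ into the conditional sad-point draw); conditioning on $\Woe_0$ is harmless since $\Woe_0$ is drawn independently of the later rounds. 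Because each $|w_{j,0}|<1$ while the $S_j$ are integer-valued, the event $\{|A_t|>1\}$ forces at least two of the $S_j$ to lie within $1$ of $\min_k S_k$. Hence it suffices to bound $\Pr(S_{(2)}\le S_{(1)}+1)$ for the order statistics $S_{(1)}\le S_{(2)}\le\dots$, and this bound carries no dependence on $\woe_0$, which is exactly the uniformity the statement demands.

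\emph{Localizing the minimum.} Put $\bar\mu:=(t-1)\lPoi$, so $\bar\mu\le\min_j\E S_j$ and $\max_j\E S_j\le 2\bar\mu$. A Poisson-binomial Chernoff bound (variance at most mean) together with a union bound over experts shows that for a band $I^\ast$ of width $\Theta\big(\sqrt{\bar\mu\log(Nt/\lPoi)}\big)$ around $\bar\mu$ one has $\Pr(S_{(1)}\notin I^\ast)$ bounded by a term of the claimed order (in fact much smaller, of order $\sqrt{N\lPoi/t}$ or better). Hypothesis~\eqref{ineq_tmin} is precisely what makes the lower end of $I^\ast$ at least $\bar\mu/2$ and makes $\bar\mu$ large enough to run the local-limit estimates below; it is the analogue of the ``$t$ large'' requirement in the random-walk-perturbation analysis of \citet{devroye2013prediction}.

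\emph{The pack is spread out --- the crux.} On $\{S_{(1)}\in I^\ast\}$ I would decompose over the value $m=S_{(1)}$ and the identity $J$ of a minimizer. Conditionally on $S_J=m$ and $S_k\ge m$ for all $k\ne J$, the variables $(S_k)_{k\ne J}$ remain independent, so the chance that some other expert falls in $\{m,m+1\}$ is at most $\sum_{k\ne J}\Pr(S_k\in\{m,m+1\}\mid S_k\ge m)$; moreover summing the prefactor $\Pr(S_J=m,\ S_k\ge m\ \forall k\ne J)$ over $m$ and $J$ telescopes to at most $1$. So the task reduces to bounding, for $m$ over $I^\ast$ (and with the negligible ``two minimizers tie at $m$'' case handled separately), that conditional sum, weighted by $\Pr(S_{(1)}=m)$. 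The naive uniform atom bound $\Pr(S_k=m)\le C/\sqrt{\bar\mu}$ is too weak: summed over $k$ it costs a factor polynomial in $N$, because the relevant $m$ sit $\Theta(\sqrt{\log N})$ standard deviations below each mean $\mu_k$. Instead I would use a Poisson-binomial \emph{local} limit estimate, so $\Pr(S_k=m)$ carries its Gaussian-density decay in $m$, together with the Poisson-binomial tail \emph{lower} bound of Section~\ref{subsec_taillower}: the tail lower bound shows $\sum_k\Pr(S_k<m)$ stays of constant order over $I^\ast$, and the ratio of a local probability to the lower tail at a point $\Theta(\sqrt{\log N})$ standard deviations below the mean is $\Theta(\sqrt{\log N})$, so the conditional sum is of order $\sqrt{(\log N)/(\lPoi t)}$ on the part of $I^\ast$ below the means, while on the part above the means $\Pr(S_{(1)}=m)$ decays fast enough (from the smallest-mean expert) to make that contribution harmless even with the crude bound on the conditional sum. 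Assembling the pieces and absorbing constants yields the stated bound, the extra $\log t$ in the first term being the (conjecturally removable) price of invoking the tail lower bound with enough uniformity over $m$.

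\emph{Main obstacle.} Step~3 is the hard part: forcing the spread of the lead pack to depend only \emph{logarithmically}, rather than polynomially, on $N$. A pairwise union bound over experts loses a full factor of $N$; what rescues the argument is precisely that the relevant levels $m$ lie deep in the lower tails of a family of \emph{non-identical} Poisson binomials --- the regime where the binomial-specific concentration and anti-concentration arguments of \citet{devroye2013prediction} do not transfer and where this paper's new two-sided Poisson-binomial tail bounds are needed.
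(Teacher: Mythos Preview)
Your proposal takes a genuinely different route from the paper and contains one clear error and one underdeveloped step.

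\textbf{Independence is false.} You assert that the partial sums $S_j=\sum_{s=1}^{t-1}W_{j,s}$ are independent across $j$. They are not: in each round a single candidate $C_s$ is drawn and only $W_{C_s,s}$ can equal $1$, so $(W_{1,s},\dots,W_{N,s})$ has at most one nonzero coordinate. ``Folding the candidate draw into the Bernoulli'' gives the correct \emph{marginal} law of $W_{j,s}$ but does not decouple experts; the $S_j$'s are negatively associated, and your order-statistic factorisation $\Pr(S_J=m)\prod_{k\ne J}\Pr(S_k\ge m)$ is not valid as stated. (Conditioning on $(C_s)_s$ would restore independence, but then each $S_j$ is a sum over a random subset $\{s:C_s=j\}$ of Bernoulli$(\tfrac12+\tfrac14\ell_{j,s})$'s --- a different object from the Poisson binomial with rates in $[\lPoi,2\lPoi]$ you describe.)

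\textbf{The paper uses the Devroye--Lugosi--Neu index shift, not a direct atom/hazard bound.} The paper lower-bounds $\Pr(|A_t|=1)$: it decomposes over $j$ and over the level $k$ of $Z_{j,t-1}$, then invokes the Poisson-binomial \emph{ratio} bound $q_{j,t-1}(k)/q_{j,t-1}(k+1)\ge 1-Q_{t-1,k+1}$ (Lemma~\ref{lem_ratio_inswitch}, proved as Lemma~\ref{lem_poibin_tail_ratio}) to shift $k\mapsto k+1$. After the shift the double sum collapses to $\sum_k(1-Q_{t-1,k+1})\,\Pr\bigl(k<\min_{j\in S_t}Z_{j,t-1}\le k+1\bigr)=1-\E[Q]$, and the $|k|$-part of $Q$ is handled by $\E\bigl|\min_{j\in S_t}Z_{j,t-1}\bigr|\le\E\max_j|Z_{j,t-1}|$ via Lemma~\ref{lemma:E-max-abs}. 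The tail lower bound of Section~\ref{subsec_taillower} is used solely to \emph{derive the ratio bound}; you never invoke that bound. Your crux instead tries to control $\sum_{k}\Pr(S_k\in\{m,m+1\}\mid S_k\ge m)$ via local-limit estimates plus ``the tail lower bound shows $\sum_k\Pr(S_k<m)$ stays of constant order'' --- but a tail \emph{lower} bound gives lower bounds on the summands, not an upper bound on the sum, and since the means $\mu_k$ can differ by $\Theta(\lPoi t)$ (i.e., $\Theta(\sqrt{\lPoi t})$ standard deviations) there is no uniform ``$\Theta(\sqrt{\log N})$ standard deviations below each mean'' regime to appeal to. The shift trick bypasses all of this heterogeneity by reducing to a single expectation of $|Z|$ for the leader.
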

Theorem \ref{thm:lead-pack-prob-bound} immediately yields the marginalized probability 
\[
\Pr(|A_t| > 1) = \E\left[ \Pr(|A_t| > 1 | \Woe_0 = \woe_0) \right]
\le 
\frac{C_1 \log t}{\sqrt{\lPoi t}}
+
C_2 \sqrt{\frac{t \log N}{\lPoi t}},
\]
which we use in Section \ref{sec:regret-analysis-full-info} and Section \ref{sec:regret-analysis-bandit}.

\begin{proof}[Proof sketch of Theorem \ref{thm:lead-pack-prob-bound}] 
Due to page constraints, this section only describes the core idea of the proof.
The complete proof is shown in Appendix \ref{subsec:lead-pack-prob-bound}.

In the proof, we show an equality 
\[
\sum_{s=0}^{t-1} W_{i,s} = 
\frac{\varepsilon}{4 N} \sum_{s=1}^{t-1} \loss_{i,s-1} + W_{0,i} + Z_{i,t-1} + \mathrm{Const}
\]
where $\frac{\varepsilon}{4 N} \sum_{s=1}^{t-1} \loss_{i,s-1}$ is a normalized total loss, $W_{0,i}$ is an initial noise random variable, 
and $Z_{i,t-1}$ is a Poisson binomial random variable (i.e., a sum of Bernoulli random variables with inhomogeneous success probabilities).
Since we consider an oblivious adversary, we can assume that the loss matrix $(l_{i,t})_{i \in [N], t \in [T]}$ is fixed. 
Moreover, we consider a conditional probability given the initial tie-breaking noise $W_0 = w_0$. Therefore, 
\[
L_{i,t-1} := \sum_{s=1}^{t-1} \loss_{i,s-1} + \frac{4 N}{\varepsilon} w_{i,0},
\]
can be viewed as a constant, and 
\[
I_t := \argmin_{i \in [N]} \sum_{s=1}^{t-1} W_{i,s} 
= \argmin_{i \in [N]} \left\{ \frac{\varepsilon}{4 N} L_{i,t-1} + Z_{i,t-1} \right\} .
\]
Our goal is to bound the probability of having a non-single lead pack (i.e., $\Pr[|A_t| > 1]$) against any constant matrix $(L_{i,t})_{i \in [N], t \in [T]}$.

Overall, the proof follows similar steps as ``B. Bounding the number of switches'' in \citep{devroye2013prediction} for a symmetric binomial random variable.\footnote{\citet{devroye2013prediction} considered the case that each noise is $\pm 1$ with a fifty-fifty probability.} 
As we generalize it to Poisson binomial random variables, our proof is much more involved than theirs. In particular, unlike symmetric binomial random variable, a closed-form formula on the tail ratio is unavailable, our results rely on lower bounds, and we need more careful discussions related to this fact.

To conclude this sketch, let us describe the core idea in bounding the Poisson binomial tail. 
We reduce the problem of getting a tail bound for a Poisson binomial distribution to that of getting a tail bound for a binomial distribution 
by using the following \textit{separation lemma}, which states that moving two of the parameters in opposite directions reduces the tail probability. Repeating this operation homogenizes the parameters; all but one parameter are $\lPoi$ or $\uPoi$. If we set the ceiling to $\uPoi=1$, we obtain a binomial distribution with parameter $\lPoi$ and a set of deterministic values of ones.
\begin{figure}[h]
\hspace{-16em}\includegraphics[width=1\textwidth]{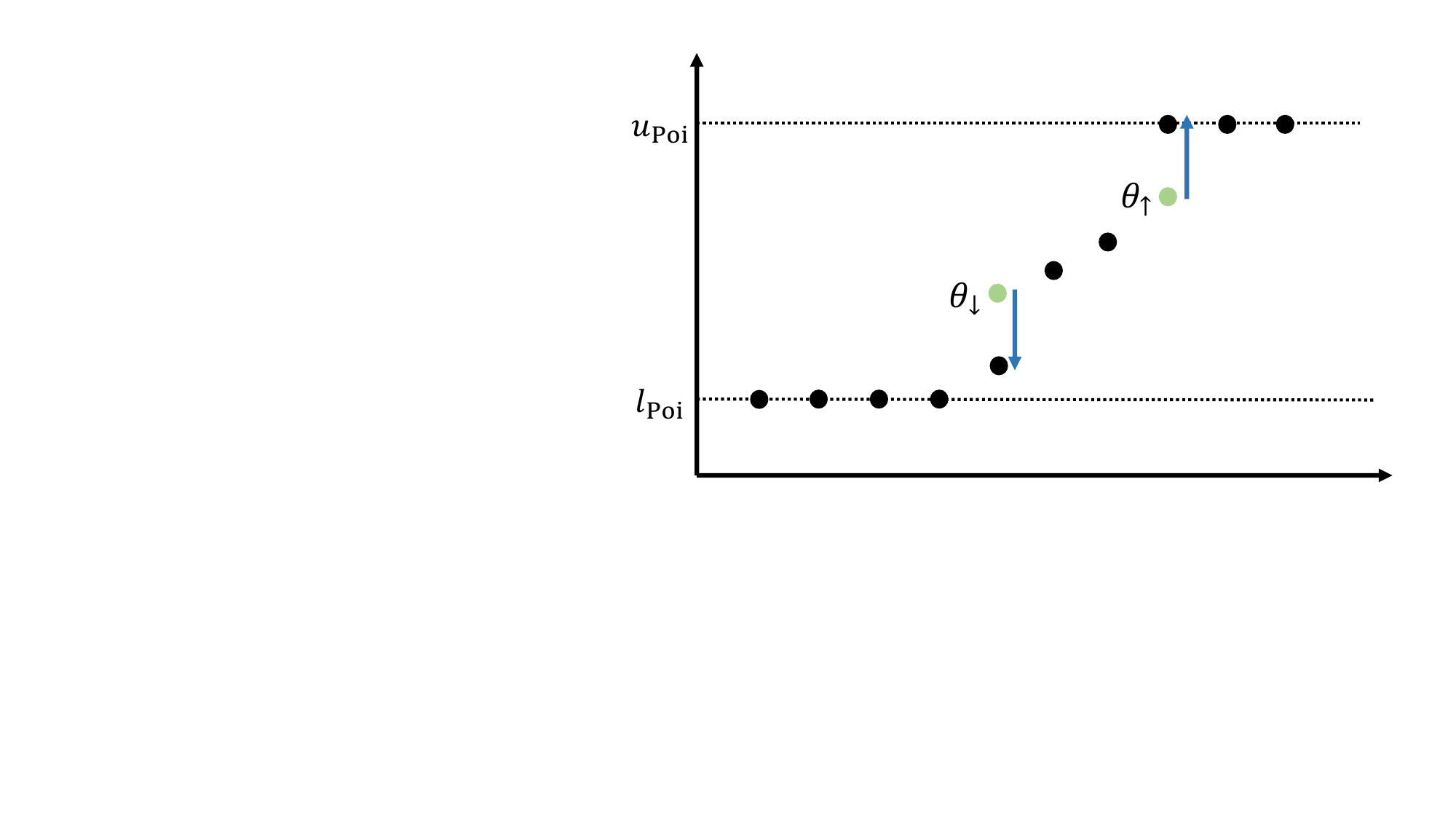}
\vspace{-8em}
  \caption{Illustration of the separation lemma, which is formalized in Lemma \ref{lem_operation} in the Appendix~\ref{sec_poisson}. Black dots represent the Poisson binomial parameters in non-decreasing order. We move two parameters $\theta_{\downarrow},\theta_{\uparrow}$ for the same distance until one of them hits the floor ($\lPoi$) or the ceiling ($\uPoi$).}
  \label{fig:separation_main}
\end{figure}
\end{proof} 

\subsection{Applying general results for full-information setting}  \label{sec:regret-analysis-full-info}

Everything is in place to analyze the expected regret of \elfFH{}. For all $j \in [N]$, set $\tilde{\loss}_{j,t} = 4 N \cdot \Woe_{j,t} - 2$ when $t \in [T]$ and $\tilde{\loss}_{j,0} = 4 N \cdot \Woe_{j,0}$. Then we have $\bmin = -2$ and $\bmax = 4 N - 2$, so that $D_{\tilde{\loss}} = 4 N$; also, $\bzero = 1$. In addition, we can set $q$ and $\babs$ (originally from Lemma~\ref{lemma:E-max-abs}) as $q = \frac{1}{N}$ and $\babs = 4$. The next corollary is immediate from Corollary~\ref{cor:general-pre-regret-bound}.

\begin{corollary} \label{cor:full-info-pre-bound}
The expected regret of \elfFH{} is bounded as
\begin{align*}
\E \left[ \sum_{t=1}^T \loss_{I_t,t} - \min_{j \in [N]} \sum_{t=1}^T \loss_{j,t} \right] 
\leq 4 \sum_{t=1}^T \Pr \left( |A_t| > 1 \right) 
       + 16 \sqrt{T N \log N} 
       + 3 .
\end{align*}
\end{corollary}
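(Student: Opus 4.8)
The plan is to verify that \elfFH{} fits the candidate-based FPL framework of Section~\ref{sec:general-fpl-analysis} and then simply read off all of the constants feeding into Corollary~\ref{cor:general-pre-regret-bound}; the statement follows by substitution, so the only work is bookkeeping.

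\textbf{Identifying the framework parameters.} As in Section~\ref{sec:full-info-regret}, I would set $\tilde{\loss}_{j,t} = 4N\cdot\Woe_{j,t} - 2$ for $t \in [T]$ and $\tilde{\loss}_{j,0} = 4N\cdot\Woe_{j,0}$. In \elfFH{} exactly one candidate $C_t \sim \mathrm{Uniform}([N])$ is drawn, and $\Woe_{j,t}$ is $\mathrm{Bernoulli}(\tfrac12 + \tfrac14\loss_{j,t})$ when $j = C_t$ and $0$ otherwise; hence $\tilde{\loss}_{C_t,t}\ge -2$, every $j \neq C_t$ satisfies $\tilde{\loss}_{j,t} = -2$, and $\Pr(C_t = j) = 1/N$ for all $j$. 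This is precisely Definition~\ref{def:candidate} with $\bmin = -2$. Consequently $\bmax = 4N - 2$, so $D_{\tilde{\loss}} = \bmax - \bmin = 4N$, and $|\tilde{\loss}_{j,0}| = 4N|\Woe_{j,0}| \le 4N\cdot\tfrac{1}{4N} = 1$, so $\bzero = 1$.

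\textbf{Checking the noise conditions.} Put $X_{j,t} := \tilde{\loss}_{j,t} - \loss_{j,t}$ and take $q = 1/N$, $\babs = 4$ in Lemma~\ref{lemma:E-max-abs}. Centering holds since $\E[4N\Woe_{j,t}] = 4N\cdot\tfrac1N(\tfrac12+\tfrac14\loss_{j,t}) = 2 + \loss_{j,t}$; the across-round independence of $X_1,\dots,X_t$ is immediate because \elfFH{} never redraws past randomness. If $\Woe_{j,t} = 0$ then $X_{j,t} = -2 - \loss_{j,t} \in [-3,-2]$, so $|X_{j,t}| \le 3 \le \babs$; if $\Woe_{j,t} = 1$ then $|X_{j,t}| = |4N - 2 - \loss_{j,t}| \le 4N = \babs/q$. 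Finally $\Pr(\Woe_{j,t} = 1) \le \tfrac1N\cdot\tfrac34 < \tfrac1N = q$, so with marginal probability at least $1 - q$ we are in the first case and $|X_{j,t}| \le \babs$. Thus all hypotheses of Lemma~\ref{lemma:E-max-abs} hold with these constants.

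\textbf{Assembling the bound.} Since $I_t$ is $\cF_{t-1}$-measurable while $C_t$ uses fresh uniform randomness on $[N]$, we get $\Pr(C_t = I_t) = 1/N$. Plugging $D_{\tilde{\loss}} = 4N$, $\Pr(C_t = I_t) = 1/N$, $\babs = 4$, $q = 1/N$, $\bzero = 1$ into Corollary~\ref{cor:general-pre-regret-bound} turns the first term into $4N\cdot\tfrac1N\sum_{t=1}^T\Pr(|A_t|>1) = 4\sum_{t=1}^T\Pr(|A_t|>1)$, the second into $4\cdot 4\sqrt{TN\log N} = 16\sqrt{TN\log N}$, and the third into $3$, which is exactly the claimed bound. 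There is essentially no obstacle beyond careful constant-tracking; the one subtlety worth flagging is that Corollary~\ref{cor:general-pre-regret-bound} requires $T \ge \tfrac3q\log(N/\sqrt q) = \tfrac92 N\log N$, but when $T$ falls below this threshold the regret is trivially at most $T \le 16\sqrt{TN\log N}$ (using $T < \tfrac92 N\log N$), so the stated bound holds unconditionally.
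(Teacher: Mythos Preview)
Your proof is correct and follows the same approach as the paper: apply Corollary~\ref{cor:general-pre-regret-bound} with $D_{\tilde{\loss}} = 4N$, $\Pr(C_t = I_t) = 1/N$, $\babs = 4$, $q = 1/N$, and $\bzero = 1$. Your explicit verification of the hypotheses of Lemma~\ref{lemma:E-max-abs} and your handling of the small-$T$ edge case (the condition $T \ge \tfrac{3}{q}\log(N/\sqrt{q})$ in Corollary~\ref{cor:general-pre-regret-bound}) are more careful than the paper's own proof, which simply cites Corollary~\ref{cor:general-pre-regret-bound} without addressing that condition.
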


\begin{proof}
Apply Corollary~\ref{cor:general-pre-regret-bound} with $D_{\tilde{\loss}} = 4 N$, use Lemma~\ref{lemma:leader-change-to-lead-pack} to bound $\Pr \left( I_{t+1} \neq I_t\right)$ by $\Pr(C_t = I_t) \cdot \Pr(|A_t| > 1)$, and use the fact that $\Pr(C_t = I_t) = \frac{1}{N}$.
\end{proof}

Next, we upper bound the term $\sum_{t=1}^T \Pr \left( |A_t| > 1 \right)$ in Corollary~\ref{cor:full-info-pre-bound} by applying Theorem~\ref{thm:lead-pack-prob-bound} with the choice of $\lPoi = 1/(2N)$ to get
\begin{align*}
\sum_{t=1}^T \Pr \left( |A_t| > 1 \right) 
&= \sum_{t=1}^T \mathbf{1}[24 + 4 \sqrt{ (t/N) \log(Nt/\lPoi)} > t/(48N)] 
+
\sum_{t=1}^T O\left((\log t+\sqrt{\log N})\sqrt{\frac{N}{t}}\right)\\
&= O(N\log N) + O(\sqrt{T N} \log T)\\
&= O(\sqrt{T N} \log T ) \tag{by $T \ge N$} .
\end{align*}
Using this result in Corollary~\ref{cor:full-info-pre-bound} yields Theorem~\ref{thm:full-info-regret}.

\subsection{Applying general results for bandit setting} \label{sec:regret-analysis-bandit}

Now, let us bound the expected regret of \banditelfFH{}. For all $j \in [N]$, set $\tilde{\loss}_{j,t} = \frac{4 N}{\varepsilon} \cdot \Woe_{j,t} - 2$ when $t \in [T]$ and $\tilde{\loss}_{j,0} = \frac{4 N}{\varepsilon} \Woe_{j,0}$. Then we have $\bmin = -2$ and $\bmax = \frac{4 N}{\varepsilon} - 2$, so that $D_{\tilde{\loss}} = \frac{4 N}{\varepsilon}$; also, $\bzero = 1$. Finally, we can set $q$ and $\babs$ (originally from Lemma~\ref{lemma:E-max-abs}) as $q = \frac{\varepsilon}{N}$ and $\babs = 4$. The next corollary is analogous to our corollary for the full-information setting.

\begin{corollary} \label{cor:bandit-pre-bound}
The expected regret of \banditelfFH{} is bounded as
\begin{align*}
\E^\fh \left[ \sum_{t=1}^T \loss_{I_t,t} - \min_{j \in [N]} \sum_{t=1}^T \loss_{j,t} \right] 
\leq 4 \sum_{t=1}^T \Pr^\fh \left( |A_t| > 1 \right) 
       + 16 \sqrt{\frac{T N \log N}{\varepsilon}} 
       + 3 .
\end{align*}
\end{corollary}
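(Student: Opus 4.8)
The plan is to obtain Corollary~\ref{cor:bandit-pre-bound} as a direct specialization of the general bound in Corollary~\ref{cor:general-pre-regret-bound}, exactly paralleling the proof of Corollary~\ref{cor:full-info-pre-bound} but tracking the extra factor $1/\varepsilon$ introduced by importance-weighting the perturbations. Concretely, for \banditelfFH{} one takes $\tilde{\loss}_{j,t} = \tfrac{4N}{\varepsilon}\Woe_{j,t} - 2$ for $t \in [T]$ and $\tilde{\loss}_{j,0} = \tfrac{4N}{\varepsilon}\Woe_{j,0}$, which yields $\bmin = -2$, $\bmax = \tfrac{4N}{\varepsilon} - 2$, hence $D_{\tilde{\loss}} = \tfrac{4N}{\varepsilon}$ and $\bzero = 1$, and one takes the constants of Lemma~\ref{lemma:E-max-abs} to be $q = \varepsilon/N$ and $\babs = 4$.

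Before invoking Corollary~\ref{cor:general-pre-regret-bound} I would check its hypotheses. \banditelfFH{} is candidate-based with lower bound $\bmin = -2$ (Definition~\ref{def:candidate}): in each round one candidate $C_t$ is drawn from $[N]\cup\{0\}$ with $\Pr^\fh(C_t = j) = \varepsilon/N$ for all $j \in [N]$; every non-candidate $j \in [N]\setminus\{C_t\}$ has $\Woe_{j,t} = 0$, so $\tilde{\loss}_{j,t} = -2 = \bmin$; and the candidate has $\tilde{\loss}_{C_t,t} = \tfrac{4N}{\varepsilon}\Woe_{C_t,t} - 2 \ge -2$. For Lemma~\ref{lemma:E-max-abs}, the per-round perturbation vectors $X_1,\dots,X_t$ are independent because \banditelfFH{} draws fresh randomness each round; each coordinate $X_{j,s} = \tfrac{4N}{\varepsilon}\Woe_{j,s} - 2 - \loss_{j,s}$ obeys $|X_{j,s}| \le \tfrac{4N}{\varepsilon} = \babs/q$, with the large value attained only when $\Woe_{j,s} = 1$; and since $|X_{j,s}| > \babs = 4$ forces $\Woe_{j,s} = 1$ (using $N \ge 2$ and $\varepsilon \le 1$), which happens with marginal probability at most $2\lPoi = \varepsilon/N = q$, the bound $|X_{j,s}| \le \babs$ holds marginally with probability at least $1 - q$.

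Given these facts, applying Corollary~\ref{cor:general-pre-regret-bound} (whose remaining precondition $T \ge \tfrac{3}{q}\log(N/\sqrt{q})$ I address below) produces three summands. The leader-change term is $D_{\tilde{\loss}}\sum_t \Pr^\fh(C_t = I_t)\Pr^\fh(|A_t|>1)$; bounding $\Pr^\fh(I_{t+1}\neq I_t)$ via Lemma~\ref{lemma:leader-change-to-lead-pack} and using $\Pr^\fh(C_t = I_t) = \varepsilon/N$ (since $C_t$ is independent of the leader $I_t$ and places mass $\varepsilon/N$ on each expert of $[N]$) collapses the leading constant to $\tfrac{4N}{\varepsilon}\cdot\tfrac{\varepsilon}{N} = 4$, giving $4\sum_t \Pr^\fh(|A_t|>1)$. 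The empirical-process term is $4\babs\sqrt{T\log N/q} = 16\sqrt{TN\log N/\varepsilon}$, and the initial-noise term is $3\bzero = 3$; summing these is exactly the claimed inequality. The only bookkeeping subtlety — and the step most likely to need a word of care — is the precondition $T \ge \tfrac{3}{q}\log(N/\sqrt{q}) = \tfrac{3N}{\varepsilon}\log(N^{3/2}\varepsilon^{-1/2})$, which is not implied by $T \ge N$ alone; as in the full-information development one can either state the corollary under this explicit hypothesis (to be discharged once $\varepsilon = (N/T)^{1/3}$ is fixed in the proof of Theorem~\ref{thm:bandit-regret}) or observe that when it fails the regret is trivially at most $T$ and absorb that degenerate regime into the asymptotics. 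No new ideas beyond the general framework are required.
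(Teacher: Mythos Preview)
Your proposal is correct and follows essentially the same approach as the paper's proof, which simply says to apply Corollary~\ref{cor:general-pre-regret-bound} with $D_{\tilde{\loss}} = 4N/\varepsilon$, invoke Lemma~\ref{lemma:leader-change-to-lead-pack}, and use $\Pr^\fh(C_t = I_t) = \varepsilon/N$. You add more detail in verifying the candidate-based and Lemma~\ref{lemma:E-max-abs} hypotheses and in flagging the precondition $T \ge \tfrac{3}{q}\log(N/\sqrt{q})$ (which the paper leaves implicit in both this corollary and Corollary~\ref{cor:full-info-pre-bound}), but the argument is the same.
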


\begin{proof}
Apply Corollary~\ref{cor:general-pre-regret-bound} with $D_{\tilde{\loss}} = \frac{4 N}{\varepsilon}$, use Lemma~\ref{lemma:leader-change-to-lead-pack} to bound $\Pr \left( I_{t+1} \neq I_t\right)$ by $\Pr(C_t = I_t) \cdot \Pr(|A_t| > 1)$, and use the fact that $\Pr(C_t = I_t) = \frac{\varepsilon}{N}$.
\end{proof}

To bound the expected regret of \banditelfRH{}, we plug the above regret bound into Proposition~\ref{prop:cond-rh-to-fh} with the choice $\varepsilon = (N/T)^{1/3}$. Applying Theorem~\ref{thm:lead-pack-prob-bound} with $\lPoi=\varepsilon/(2N)$ gives
\begin{align}
\sum_{t=1}^T \Pr \left( |A_t| > 1 \right) 
&= \sum_{t=1}^T \mathbf{1}[24 + 4 \sqrt{  (\varepsilon t/N) \log(Nt/\lPoi)} > \varepsilon t/(48N)] 
+
\sum_{t=1}^T O\left((\log t+\sqrt{\log N}) \sqrt{\frac{N}{\varepsilon t}}\right) \nonumber\\
&= O(T^{1/3} N^{2/3} (\log T+\sqrt{\log N}) + O(T^{2/3} N^{1/3} (\log T+\sqrt{\log N})) \nonumber\\
&=O(T^{2/3} N^{1/3} (\log T)) \tag{by $T \ge N$}
,
\end{align}
after which Theorem~\ref{thm:bandit-regret} follows.

\section{Discussion} \label{sec:discussion}

This work proposed the first no-regret mechanisms for the online forecasting competition setting that are \truthful{} for non-myopic experts. Our algorithms are based on the Independent-Event Lotteries Forecasting Competition Mechanism (I-ELF). 
To derive regret bounds, we show that I-ELF can be viewed as an instance of Follow the Perturbed Leader with random-walk perturbation. 
For our versions of FPL, the noise level is heterogeneous over the rounds, and thus a na\"ive, direct application of existing results \citep{devroye2013prediction} does not work. 
To cope with such heterogeneity, we introduced upper and lower bounds for Poisson binomial distributions, which are of independent interest.

\paragraph{Conjectured lower bound.}

Consider the full information setting. We conjecture that all Online IC-BI algorithms (informally, all \truthful{} algorithms) must have belief regret lower bounded as $\Omega(\sqrt{N T})$. We have not yet succeeded in proving this conjecture. Here, we briefly mention some approaches that may at first seem promising towards getting better regret but ultimately fail to give algorithms that are \truthful{}. 
One approach is a version of \elfRH{} that allows multiple lottery winners (i.e., multiple winners of sad points) per round. The next theorem states that this mechanism is not Online IC-BI; the proof is in Appendix~\ref{app:multiple-draws-not-ic}.
\begin{theorem} \label{thm:multiple-draws-not-ic}
Consider a modified version of \elfRH{} in which, for each round $t$, each past round $s \in [t-1]$, and each expert $j \in [N]$, the random variable $W_{j,s}$ is \emph{independently} drawn from $\mathrm{Bernoulli}(\frac{1}{2} + \frac{1}{4} \loss_{j,s})$. 
Let $N = 2$ and $T = 2$. Then for any expert $i$, there exists a belief distribution such that this modified version of \elfRH{} is not Online IC-BI.
\end{theorem}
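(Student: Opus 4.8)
The plan is to exhibit, for $N=T=2$, a concrete belief distribution $D^{(i)}$ and a concrete non-truthful report deviation that strictly increases expert $i$'s subjective probability of being selected in some round, thereby violating \eqref{eqn:online-ic-bi}. The cleanest choice is to work with the ``all's well that ends well'' target, i.e., maximize $\Pr(I_{3}=i)$ (selection just after round $T=2$), and to have expert $i$ deviate only in round $1$. First I would write out, as an explicit function of the two relevant reports $r_{i,1}, r_{-i,1}, r_{i,2}, r_{-i,2}$ and the outcomes $o_1,o_2$, the selection rule: $I_3 = \argmin_{j\in\{1,2\}}\sum_{s=0}^{2} W_{j,s}$, where now the $W_{j,s}$ for $s\in\{1,2\}$ are \emph{independent} $\mathrm{Bernoulli}(\tfrac12+\tfrac14\loss_{j,s})$ draws for \emph{every} expert (no candidate bottleneck), and $W_{j,0}$ are the tiny uniform tie-breakers. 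Conditioned on a fixed outcome sequence and on the opponent's reports, $\sum_{s}W_{i,s}$ is a sum of at most two independent Bernoullis with success probabilities $\tfrac12+\tfrac14\loss(r_{i,1},o_1)$ and $\tfrac12+\tfrac14\loss(r_{i,2},o_2)$, and similarly for the opponent; the probability that $i$ wins (has strictly fewer sad points, ties broken by the $W_{j,0}$) is a smooth function of these four numbers that I can compute exactly since each sum lives in $\{0,1,2\}$.

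The key structural point I would isolate is the following: with independent multi-winner draws across rounds, expert $i$'s round-$1$ sad-point count is the sum of an independent round-$1$ Bernoulli and an independent round-$2$ Bernoulli, and the opponent contributes its own independent such sum. The expert's probability of winning therefore depends on the round-$1$ report only through $\E_{o_1}[\,\mathbf{1}\{\cdot\}\,]$ of a \emph{non-linear} functional of $\tfrac12+\tfrac14\loss(r_{i,1},o_1)$ — specifically, through the distribution of $W_{i,1}+W_{i,2}$, which depends on products/convolutions of the per-round success probabilities rather than just their expectation. Because the strict properness of $\loss$ only pins down that $r_{i,1}=b_{i,1}$ minimizes $\E_{o_1}[\loss(r_{i,1},o_1)]$ (a linear functional), it does \emph{not} in general make $b_{i,1}$ optimal for the relevant non-linear functional. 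So the plan is: (i) pick $b_{i,1}$, $b_{i,2}$, and the opponent's (possibly randomized) reports and the belief over $(o_1,o_2, r_{-i,1:2})$ so that the opponent's total sad-point count has a distribution putting nontrivial mass on both values $1$ and $2$ (this is what makes $i$'s second-round contribution — and hence the non-linearity in round $1$ — actually matter for who wins); (ii) differentiate $i$'s win probability with respect to a round-$1$ perturbation $r_{i,1}=b_{i,1}+\eta$ at $\eta=0$ and show the derivative is nonzero, so some small deviation strictly helps. For a fully explicit and verifiable construction I would likely set $o_2$ and $b_{i,2}$, $r_{-i,2}$ so that conditioned on the round-$1$ results the round-$2$ Bernoullis are genuinely ``live'' (success probabilities strictly between $0$ and $1$), and choose $b_{i,1}$ to be an interior point like $1/2$ so the perturbation is two-sided and the sign of the derivative tells us which direction ($\eta>0$ or $\eta<0$) is the profitable lie.

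Concretely, I expect the construction to look like: both experts have belief $b_{j,2}$ fixed (say the same value) in round $2$, expert $i$ has $b_{i,1}=1/2$, the opponent $-i$ reports some fixed $r_{-i,1}$, and $D^{(i)}$ makes $o_1$ Bernoulli with parameter $b_{i,1}=1/2$ and $o_2$ deterministic — then I compute $\Pr(I_3=i)$ as a polynomial in $r_{i,1}$, observe it is a genuine (non-constant, non-linear) polynomial whose maximizer over $[0,1]$ is not $1/2$, and conclude. The bulk of the proof is just this bookkeeping: enumerate the $\{0,1,2\}\times\{0,1,2\}$ outcomes for $(\sum_s W_{i,s}, \sum_s W_{-i,s})$, attach the win indicator (with the degenerate tie-breaker giving each tied expert probability $1/2$ in the limit), take the expectation over $o_1$, and read off the coefficient of the quadratic term in $r_{i,1}$.

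The main obstacle is choosing the parameters so that the ``round-$2$ randomness is live'' — i.e., so that the non-linear dependence on the round-$1$ report does not accidentally collapse. If the opponent's total count is a.s.\ equal to some fixed value, or if $i$'s round-$2$ Bernoulli is a.s.\ $0$ or a.s.\ $1$, then $\Pr(I_3=i)$ degenerates to a linear (or constant) function of $\loss(r_{i,1},o_1)$ and strict properness would \emph{save} truthfulness — exactly the opposite of what we want. So the delicate step is verifying that, for the chosen instance, the quadratic coefficient in $r_{i,1}$ is strictly nonzero; I anticipate this forces a slightly careful (but still elementary) choice of $b_{i,2}$, $o_2$, and $r_{-i,1:2}$, and a short computation confirming $\partial_{r_{i,1}}\Pr(I_3=i)\big|_{r_{i,1}=1/2}\neq 0$. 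Everything else — that the instance satisfies belief independence (it does, since I take $D^{(i)}$ a product across the two rounds), that $\loss$ is an arbitrary fixed strictly proper loss (I can keep the argument loss-agnostic by writing $\loss(r,o)$ symbolically and only using that $\E_{o\sim\mathrm{Bernoulli}(1/2)}\loss(r,o)$ is minimized at $r=1/2$ with a strict second-order behavior), and that the deviation is non-truthful in at least one round (round $1$) — is routine.
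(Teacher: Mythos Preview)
Your high-level plan matches the paper's: compute $\Pr(I_3=i)$ explicitly by enumerating the $3\times 3$ grid of woe totals, isolate the part depending on $r_{i,1}$, and exhibit a belief for which the truthful report $r_{i,1}=b_{i,1}$ is not the maximizer. The paper carries this out and finds that the terms in $\Pr(I_3=i)$ that involve $r_{i,1}$ reduce (up to constants) to
\[
\E[\loss_{i,1}\loss_{-i,1}]\cdot \E[-\loss_{i,2}+\loss_{-i,2}] \;-\; \E[\loss_{i,1}]\cdot\bigl(\E[\loss_{i,2}\loss_{-i,2}]+6\bigr),
\]
the trouble coming from the cross term $\E_{o_1}[\loss(r_{i,1},o_1)\loss(r_{-i,1},o_1)]$, in which both factors share the same outcome $o_1$. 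So your diagnosis of ``non-linear dependence on the round-$1$ report'' is exactly right.

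Where your plan has a real gap is the finish. You propose to ``differentiate $i$'s win probability with respect to $r_{i,1}$,'' look at the ``quadratic coefficient,'' and use ``strict second-order behavior'' of the expected loss at $r=1/2$. None of this is available: $\ell$ is an arbitrary strictly proper loss with range $[0,1]$, assumed neither differentiable nor polynomial, and strict properness gives uniqueness of the minimizer but no curvature. The paper sidesteps smoothness entirely by observing that, after the enumeration, the objective is \emph{linear} in the pair $\bigl(\loss(r_{i,1},1),\,\loss(r_{i,1},0)\bigr)$: it has the form $\gamma\,\loss(r_{i,1},1)+\delta\,\loss(r_{i,1},0)$ with $\gamma,\delta<0$, so maximizing it is exactly minimizing an expected loss under the effective belief $\tilde b=\gamma/(\gamma+\delta)$. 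Strict properness alone then pins the maximizer at $\tilde b$, and the construction forces $\tilde b\neq 1/2$ by taking $r_{-i,1}\in\{0,1\}$ and invoking the strict monotonicity of the partial losses (their Proposition on $\loss(1,1)<\loss(1,0)$ or $\loss(0,0)<\loss(0,1)$). That step---reducing to a reweighted proper-scoring problem---is the missing idea in your plan.

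Your round-$2$ construction also differs. The paper takes $o_2\sim\mathrm{Bernoulli}(1/2)$ with $r_{-i,2}=o_2$ a.s., which makes $\E[-\loss_{i,2}+\loss_{-i,2}]<0$ for \emph{every} strictly proper $\ell$ (the opponent attains the pointwise-minimal loss while expert $i$, reporting $1/2$, cannot). Your suggestion of ``$o_2$ deterministic, both experts with the same $b_{j,2}$'' risks collapsing the coefficient on the cross term to zero---precisely the degeneracy you flag as the main obstacle. It is repairable (choose $r_{-i,2}$ so that $\loss_{-i,2}\neq\loss_{i,2}$), but as written your instance does not yet guarantee the cross term is live.
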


\paragraph{Multiple outcomes.}
As already mentioned by \cite{witkowski2023incentive}, an advantage of ELF-based algorithms (including our own) is that they automatically extend to the case of multiple outcomes, i.e., categorical outcomes. The only necessary restriction is that the strictly proper loss has range in $[0, 1]$. In contrast, for the previous work of  \citet{frongillo2021efficient} which also considers nonmyopic experts (although under approximate truthfulness with experts who have immutable beliefs), it is not immediately clear how to extend their results to case of multiple outcomes. Technically, to handle the Brier loss over $m$ outcomes, we would need to first rescale the loss --- whose native scale is $[0, m]$ --- and undo this scaling to state our regret bounds in terms of the natively scaled Brier loss. This scaling introduces an extra factor of $m$. It would be interesting to see if this factor can be reduced.

\paragraph{Bandit feedback with one extra observation.}
Consider the following augmented bandit protocol: in addition to selecting an expert in each round and using that expert's report (and suffering that expert's loss), the algorithm also uniformly at random selects one \emph{additional} expert. The algorithm does not suffer the loss of this additional expert, but it does observe this additional expert's report (and hence its loss). This setting fits into a  protocol studied by \citet{avner2012decoupling}, which they generally refer to as decoupling exploration and exploitation. Pragmatically, if we consider each expert as providing a report only if it is paid some fixed sum of money, then the mechanism only needs to double its payment per round compared to the bandit setting. Technically, it is not hard to see that our full-information algorithm \elfRH{} --- with one small modification --- already achieves regret $O(\sqrt{T N} \log T)$. 
Specifically, referring to \elfRH{}, the selected expert in each round is $I_t$, and the additionally selected expert is $C_t$. The modification compared to \elfRH{} is that once a candidate $C_t$ is drawn, this choice is never revised. Even so, the lottery draws $W_{j,t}$ (from $\mathrm{Bernoulli}\left( \frac{1}{2} + \frac{1}{4} \loss_{C_t,t} \right)$) are done anew each round, which is enough to preserve incentive compatibility. 

\paragraph{The necessity of the assumption of belief independence.} 

The assumption is necessary in several ways. For simplicity, suppose that $N = 2$. We consider the strategy of one of the experts, expert $i$, and refer to the other expert as expert $k$. If we do not assume belief independence, then there can be serial correlation among the losses $\loss(r_{k,1}, o_1), \loss(r_{k,2}, o_2), \ldots, \loss(r_{k,T}, o_T)$. Concretely, let us suppose that expert $k$'s loss in round 1 is either large or small. Next, according to expert $i$'s belief, it could happen that conditional on expert $k$'s loss being large in round 1, expert $k$'s loss in all future rounds is large. Hence, expert $k$ can be competitive only when expert $k$'s loss in the first round is large. With this setup, it can be advantageous for expert $i$ to obtain low loss in round $1$ conditional on expert $k$'s loss being large, i.e., to maximize its performance only when the other expert will be competitive. Technically, in round 1, expert $i$'s optimal strategy is to select the report that minimizes its (subjective) conditional expected loss, where the conditioning is on expert $k$'s loss being large in round 1. Hence, \truthfulness{} is violated.

\subsubsection*{Acknowledgements}

Nishant Mehta and Ali Mortazavi were supported by the NSERC Discovery Grant RGPIN-2018-03942. 
Much of this work happened while Nishant Mehta was visiting Junpei Komiyama at NYU Stern in the summer of 2024. The authors thank NYU Stern for hosting this visit.

\newpage

\bibliography{nonmyopic_ic_online_learning}

\newpage

\appendix

\section{Summary of notation for regret analysis}
\label{app:notation}

Let $\varepsilon = 1$ for the full-information case. Section \ref{sec:regret-analysis} provides a unified regret analysis that is applicable to both the full-information and bandit algorithms.
The relationship between the parameters is as follows. Note that $t=0$ is a hypothetical round that incurs a small continuous noise for tiebreaking with probability $1$.
\begin{align}
\loss_{i,t} &
  \in [0,1] \text{\ \ \ (the \textit{loss} of expert $i$ at round $t$, where $\loss_{i,0} = 0$)} \\
W_{i,t} &\sim 
\begin{cases}
  \mathrm{Bernoulli}\left(\frac{\varepsilon}{N}\left(\frac{1}{2}+\frac{\loss_{i,t}}{4}\right)\right) \in \{0,1\} & \text{if } t > 0 \\
  \mathrm{Uniform}([-\frac{\varepsilon}{4 N}, \frac{\varepsilon}{4 N}])  & \text{if } t = 0.
\end{cases} 
\text{\ \ \ (the \textit{woe} of expert $i$ at round $t$)} 
\\
\tilde{\loss}_{i,t} &= 
\begin{cases}
  \left(\frac{4N}{\varepsilon}W_{i,t}\right) - 2 & \text{if } t > 0 \\
  \frac{4N}{\varepsilon}W_{i,t}   & \text{if } t = 0.
\end{cases} \\
X_{i,t} &= \tilde{\loss}_{i,t} - \loss_{i,t} = 
\begin{cases}
  \frac{4N}{\varepsilon}W_{i,t} - 2 - \loss_{i,t} \in \left[-3, \frac{4N}{\varepsilon}\right] & \text{if } t > 0, \\
  \frac{4N}{\varepsilon}W_{i,t}  & \text{if } t = 0.
\end{cases} 
\text{\ \ \ (unbiased noise, $\mathbb{E}[X_{i,t}] = 0$)}
\label{ineq_xdiscuss}\\
Z_{i,t} &= \frac{\varepsilon}{4N}\sum_{s=1}^t X_{s,t}
\sim \mathrm{PBIN}\left( \left(\frac{\varepsilon}{4N}\left(\frac{1}{2} + \frac{l_{i,s}}{4}\right)\right)_{s \in [t]} \right) - \E\left[\sum_{s=1}^t \frac{\varepsilon}{4N}\left(\frac{1}{2} + \frac{l_{i,s}}{4}\right)\right]\nonumber\\
&\text{\ \ \ \ \ \ \ \ \ (centered Poisson binomial)}\\
I_t &= \underbrace{\argmin_i \sum_{s=0}^{t-1} \left(l_{i,s} + X_{i,s}\right)}_{\text{(FPL notation)}}
 = \underbrace{\argmin_i \sum_{s=0}^{t-1} \Woe_{i,s}}_{\text{(sad lottery notation)}}
\end{align}

\section{Incentive compatibility proofs} \label{app:ic}

\subsection{Full-information setting}

As mentioned in the main text, we actually will prove that a more general class of online mechanisms is Online IC-BI. This more general class, which we call \gelfRH{}, is the same as \elfRH{} (Algorithm~\ref{alg:simpelf-rh}) except for the lottery; to be precise, \gelfRH{} has more freedom in the choice of distribution of $\Woe_{j,s}$ conditional on $C_s$. \gelfRH{} has additional parameters $a_1$, $a_2$, and $\rho$ with joint range
\begin{align}
\begin{array}{c}
\displaystyle
0 \leq a_1 \leq 1 ; \\
\displaystyle
0 < a_2 \leq 1 ; \\
\displaystyle
1 - \frac{1 - a_1}{a_2} \leq \rho \leq \frac{a_1}{a_2} .
\end{array} \label{eqn:woe-joint-range}
\end{align}
We will see the reason for this joint parameter range shortly. When using \gelfRH{}, in any given round, at most one expert wins the lottery; for certain settings of the parameters, it happens with positive probability that no expert wins the given round's lottery. 
To ease the analysis, we introduce expert 0 as a dummy expert. In any round where none of experts 1 through $N$ win the lottery, expert 0 wins the lottery. Let $W_t$ be a categorical random variable indicating the winner of the lottery in round $t$. In \gelfRH{}, we have
\begin{align}
\Pr(W_t = i) =
\begin{cases}
\displaystyle 
\frac{1}{N} \left( a_1 + a_2 \loss_{i,t} - \frac{a_2 \rho}{N-1} \sum_{j \in [N] \setminus \{i\}} \loss_{j,t} \right) & \text{if } i \in [N] \\
\displaystyle 
1 - a_1 - \frac{a_2 (1 - \rho)}{N} \sum_{j=1}^N \loss_{j,t} & \text{if } i = 0 \\
\end{cases}
\label{eqn:woe-generalized-elf}
\end{align}
To better understand this generalization, let us first consider the case of $a_1 = 1$ and $a_2 = 1$. With these settings, observe that $\rho$ represents the amount of probability that is redistributed across the experts. Setting $\rho = 1$ recovers a sad lottery analogue of (the online extension of) I-ELF. We refer to the class of mechanisms captured by $\rho = 0$ and any valid setting of $a_1$ and $a_2$ as a \emph{variant} of \elfRH{}. We reserve the term \elfRH{} for the particular variant induced by taking $a_1 = \frac{1}{2}$ and $a_2 = \frac{1}{4}$.

When \citet{witkowski2023incentive} proved that I-ELF is IC-BI, they relied on the assumption that the score is never equal to 1. The reason for this assumption is to ensure that each expert has a positive probability of getting an internal win in each round; having this probability be positive is vital for the proof of incentive compatibility. Note that if $a_1 > a_2 \rho$ (as with \elfRH{}), this assumption is unnecessary. For a standard score in the range of $[0, 1]$, a simple trick to enforce this assumption is to rescale the score by some constant $\beta$ that is less than but arbitrarily close to 1. Exactly the same trick works for losses in the range of $[0, 1]$. So, whenever $a_1 = a_2 \rho$, we implicitly will assume that the loss function has been rescaled by some constant $\beta$ that is less than but arbitrarily close to 1, rendering losses in the range $[0, \beta] \subsetneq [0, 1]$. Again, we emphasize that this rescaling \emph{is not necessary} for \elfRH{} nor any other setting of the parameters for which $a_1 > a_2 \rho$.

\paragraph{Explanation of joint range of parameters.}

Next, we briefly explain the joint parameter range for $a_1$, $a_2$, and $\rho$. From the case of $\Pr(W_t = 0)$, by considering possible loss values it is clear that we must ensure that $1 - a_1 \in [0, 1]$ and $1 - a_1 - a_2 (1 - \rho) \in [0, 1]$. From the case of $\Pr(W_t = i)$ for $i \in [N]$, we must ensure that $a_1 - a_2 \rho \geq 0$. Finally, for our proof of incentive compatibility, we will need to impose $a_2 > 0$ (otherwise, an expert will not care to minimize its expected loss). These conditions together imply the joint range given in \eqref{eqn:woe-joint-range}; we note that the condition $a_2 \leq 1$ is needed in order for there to be a non-empty feasible set for $\rho$.

We are now ready to state our incentive compatibility result for \gelfRH{}; this result covers \elfRH{} (Theorem~\ref{thm:full-info-IC}) as a special case.

\begin{theorem} \label{thm:full-info-IC-general}
For any setting of the parameters satisfying \eqref{eqn:woe-joint-range}, \gelfRH{} is Online IC-BI.
\end{theorem}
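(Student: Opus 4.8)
The plan is to establish Online IC-BI for \gelfRH{} directly from Definition~\ref{def:ic} by fixing an arbitrary expert $i$, an arbitrary belief-independent distribution $D^{(i)}$, an arbitrary time horizon $T' \le T$, an arbitrary round $t$, and an arbitrary past history $r_{1:t-1}, o_{1:t-1}$; I then compare the selection probability of $i$ under truthful reporting $b_{i,t:T'}$ versus any alternative reports $r_{i,t:T'}$ that deviate in at least one round, and show the former is strictly larger. The central structural observation is that the winner $I_{T'+1}$ of the sad-lottery competition (equivalently, the expert with the least cumulative woe over rounds $0$ through $T'$) is determined by the collection of sad-point indicators $(\Woe_{j,s})_{j\in[N], s\in\{0,\dots,T'\}}$, and conditional on the candidates $(C_s)$ and on everything not controlled by expert $i$, each indicator $\Woe_{i,s}$ for $s\ge t$ is an independent $\mathrm{Bernoulli}(q_{i,s})$ where $q_{i,s}$ is an affine function of $\loss(r_{i,s},o_s)$ with a \emph{strictly positive} coefficient (this is where $a_2>0$ and, in the degenerate case $a_1=a_2\rho$, the rescaling trick to keep probabilities strictly interior are used).

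The key steps, in order, are: (i) condition on all randomness other than expert $i$'s own lottery indicators in rounds $t,\dots,T'$ — namely the other experts' reports $r_{-i,\cdot}$, the outcomes $o_\cdot$, the candidate variables $C_\cdot$, the tie-breaking noise $\Woe_{\cdot,0}$, and $i$'s own past indicators $\Woe_{i,1:t-1}$ — and observe that under this conditioning the probability that $I_{T'+1}=i$ is a function only of the conditional success probabilities $(q_{i,s})_{s=t}^{T'}$, and moreover this function is \emph{coordinatewise nonincreasing} in each $q_{i,s}$ (getting a sad point can only hurt expert $i$'s chance of having the least woe — this uses the sad-lottery structure and nonnegativity of woe); (ii) argue the monotonicity is \emph{strict} whenever the conditioning event has positive probability of being ``pivotal,'' i.e. whenever there is positive probability that a single change in $\Woe_{i,s}$ flips whether $i$ wins — and show such a pivotal configuration occurs with positive probability because the tie-breaking noise $\Woe_{i,0}\sim\mathrm{Uniform}$ is continuous and spread over an interval, so for every realization of the integer woe counts there is a positive-probability set of $\Woe_{0}$ values making the margin between $i$ and the runner-up lie strictly inside $(\,0,1\,)$; (iii) since the map $r_{i,s}\mapsto q_{i,s}$ is strictly increasing and affine, truthful reporting $r_{i,s}=b_{i,s}$ minimizes $\E_{o_s\sim\mathrm{Bernoulli}(b_{i,s})}[\loss(r_{i,s},o_s)]$ strictly over $r_{i,s}\ne b_{i,s}$ by strict properness of $\loss$, hence minimizes $\E[q_{i,s}\mid \text{conditioning}]$ strictly in the round(s) where the deviation occurs; (iv) combine (i)–(iii): taking expectations over the conditioned-out randomness, the expected win probability is a nonincreasing function of each $\E[q_{i,s}\mid\cdot]$, strictly decreasing along any coordinate that is pivotal with positive probability, so truthful reporting strictly dominates; finally marginalize over the conditioning to recover the inequality in \eqref{eqn:online-ic-bi}. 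Belief independence enters crucially in step (iii) so that $D^{(i)}_s$ gives expert $i$'s marginal belief over $(o_s, r_{-i,s})$ that does not depend on $i$'s own reports or on the realized history, which is exactly what makes per-round myopic loss-minimization the right criterion.

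The main obstacle I expect is step (ii): carefully showing that the coordinatewise monotonicity from step (i) is \emph{strict} after marginalizing, rather than merely weak. Weak monotonicity is an easy coupling/stochastic-dominance argument, but strictness requires exhibiting, for the particular deviating round $s^\star$, a positive-probability event on which flipping $\Woe_{i,s^\star}$ changes the identity of the least-woe expert; this needs a quantitative handle on the distribution of the integer woe gap between expert $i$ and the others together with the continuous tie-breaker, and one must be sure the argument survives the conditioning on $r_{1:t-1},o_{1:t-1}$ (which constrains $\Woe_{i,1:t-1}$ but leaves $\Woe_{i,0}$ free) and the choice of $T'$. A secondary technical point is handling the boundary of the parameter range \eqref{eqn:woe-joint-range}, in particular $a_1=a_2\rho$ where some expert's win probability can be $0$ at loss $0$; the rescaling-by-$\beta$ device discussed before the theorem statement resolves this by keeping all relevant Bernoulli parameters strictly inside $(0,1)$, at no cost to the conclusion since $\beta$ can be taken arbitrarily close to $1$.
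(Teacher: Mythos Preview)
Your strategy is reasonable and, for the special case $\rho=0$ (which includes \elfRH{} itself), can be made rigorous. But as written it does not cover the full statement of Theorem~\ref{thm:full-info-IC-general}, which is about \gelfRH{} for \emph{all} parameter triples in \eqref{eqn:woe-joint-range}, including $\rho>0$.

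The concrete gap is in step~(i). You condition on ``candidate variables $C_\cdot$'' and assert that, after conditioning, the only parameters under expert $i$'s control are $(q_{i,s})_{s=t}^{T'}$. A candidate decomposition exists for $\rho=0$, but for $\rho>0$ the round-$s$ lottery is a single categorical draw $W_s\in\{0,1,\dots,N\}$ with $\Pr(W_s=k)=\tfrac{1}{N}\bigl(a_1+a_2\ell_{k,s}-\tfrac{a_2\rho}{N-1}\sum_{j\neq k}\ell_{j,s}\bigr)$, so \emph{every} expert's (and the dummy's) sad-point probability depends on $\ell_{i,s}$. There is no conditioning that isolates a Bernoulli indicator $\Woe_{i,s}$ with parameter $q_{i,s}$ depending only on $\ell_{i,s}$; changing $r_{i,s}$ shifts probability mass among all of $\{W_s=i\}$, $\{W_s=k\}$, and $\{W_s=0\}$ simultaneously. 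A monotonicity argument in ``$q_{i,s}$ alone'' therefore does not capture the effect of expert~$i$'s report. To recover your plan for general $\rho$ you would have to show that the win probability is nonincreasing (indeed strictly decreasing) in $\ell_{i,s}$ itself, which --- once you write out the partial derivative --- reduces precisely to the inequality $c_i<c_0\le c_k$ that the paper isolates and proves in Proposition~\ref{prop:woe-ci-c0}. So the missing piece is not incidental; it is the heart of the paper's proof.

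A secondary point: even in the $\rho=0$ case, step~(iv) as stated (``expected win probability is a nonincreasing function of each $\E[q_{i,s}\mid\cdot]$'') needs justification. Nonincreasing in $q_{i,s}$ does not by itself imply nonincreasing in $\E[q_{i,s}]$ after integrating out $o_s$. What makes this go through is that the conditional win probability is \emph{multilinear} in $(q_{i,s})_s$ (each $\Woe_{i,s}$ is Bernoulli, so the selection probability is affine in each $q_{i,s}$), together with belief independence so that the $(o_s)$ and hence $(q_{i,s})$ are independent; then the expectation of a multilinear function of independent variables is the same multilinear function of their expectations. You should make this explicit. The paper's proof sidesteps this by working one round at a time: it conditions on the round-$t$ lottery outcome $W_t\in\{0,1,\dots,N\}$, obtaining the exact linear representation $c_i-\sum_{k\neq i}(c_i-c_k)\Pr(W_{k,t}=1)-(c_i-c_0)\Pr(W_{0,t}=1)$, and then applies strict properness directly to $\E[\ell_{i,t}]$.
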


\begin{proof}
It suffices to show, for an arbitrary ``decision round'' $t \in [T]$ and an arbitrary ``target round'' $T' \in \{t, t + 1, \ldots, T\}$, that any expert $i \in [N]$, having observed $r_{1:t-1}$ and $o_{1:t-1}$, strictly maximizes its subjective probability
\begin{align*}
\Pr \left( \sum_{s=0}^{T'} W_{i,s} < \min_{j \in [N] \setminus \{i\}} \sum_{s=0}^{T'} W_{j,s} \bigmid r_{1:t-1}, o_{1:t-1} \right)
\end{align*}
by truthfully selecting $r_{i,t} = b_{i,t}$. Indeed, if we can show this for arbitrary $T'$ as above, then the expert reports truthfully regardless of the target round, so Online IC-BI follows.

Fix such a pair $(t, T')$ as well as a history $(r_{1:t-1}, o_{1:t-1})$. The rest of the proof always conditions on $r_{1:t-1}, o_{1:t-1}$. To avoid clutter, let $\Pr_t(\cdot)$ denote the conditional expectation $\Pr(\cdot \mid r_{t-1}, o_{t-1})$.

Adopting the notation $[N]_{-i} := [N] \setminus \{i\}$ and letting $\sum_{s \neq t}$ be shorthand for $\sum_{s \in \{0, 1, \ldots, T'\} \setminus \{t\}}$, we rewrite the above probability as
\begin{align}
&\Pr_t \left( \forall j \in [N]_{-i} \colon \sum_{s \neq t} W_{i,s} + W_{i,t} < \sum_{s \neq t} W_{j,s} + W_{j,t} \right) \nonumber \\
&= \Pr_t \left( \forall j \in [N]_{-i} \colon W_{i,t} < \sum_{s \neq t} W_{j,s} + W_{j,t} - \sum_{s \neq t} W_{i,s} \right) \nonumber \\
&= \Pr_t \left( W_{i,t} = 1 , \left[ \forall j \in [N]_{-i} \colon 1 < \sum_{s \neq t} W_{j,s} - \sum_{s \neq t} W_{i,s} \right] \right) \nonumber \\
&\quad+ \sum_{k \in [N]_{-i}} \Pr_t \left( W_{k,t} = 1, \left[ \forall j \in [N]_{-i} \colon 0 < \sum_{s \neq t} W_{j,s} + \ind{j = k} - \sum_{s \neq t} W_{i,s} \right] \right) \nonumber \\
&\quad+ \Pr_t \left( W_{0,t} = 1, \left[ \forall j \in [N]_{-i} \colon 0 < \sum_{s \neq t} W_{j,s} - \sum_{s \neq t} W_{i,s} \right] \right) \nonumber \\
&= \Pr_t \left( W_{i,t} = 1 \right) \cdot \Pr_t \left( \forall j \in [N]_{-i} \colon 1 < \sum_{s \neq t} W_{j,s} - \sum_{s \neq t} W_{i,s} \bigmid W_{i,t} = 1 \right) \label{eqn:woe-pr-term-type-i} \\
&\quad+ \sum_{k \in [N]_{-i}} \Pr_t \left( W_{k,t} = 1 \right) \cdot \Pr_t \left( \forall j \in [N]_{-i} \colon 0 < \sum_{s \neq t} W_{j,s} + \ind{j = k} - \sum_{s \neq t} W_{i,s} \bigmid W_{k,t} = 1 \right) \label{eqn:woe-pr-term-type-k} \\
&\quad+ \Pr_t \left( W_{0,t} = 1 \right) \cdot \Pr_t \left( \forall j \in [N]_{-i} \colon 0 < \sum_{s \neq t} W_{j,s} - \sum_{s \neq t} W_{i,s} \bigmid W_{0,t} = 1 \right) . \label{eqn:woe-pr-term-type-0}
\end{align}

Recall that $\Pr_t$ means we condition on $r_{1:t-1}$ and $o_{t-1}$. Now, from belief independence, it is clear that we can drop this conditioning on all terms of the type $\Pr_t(W_{j,t} = 1)$ in equations \eqref{eqn:woe-pr-term-type-i}, \eqref{eqn:woe-pr-term-type-k}, and \eqref{eqn:woe-pr-term-type-0}. 
In addition, belief independence allows us to drop the conditioning on $W_{j,t} = 1$ in the second probability term in each of equations \eqref{eqn:woe-pr-term-type-i}, \eqref{eqn:woe-pr-term-type-k}, and \eqref{eqn:woe-pr-term-type-0}. To see why, observe that if $Y_1$, $Y_2$, and $Y_3$ are random variables such that each of $Y_1$ and $Y_2$ is independent of $Y_3$. Then we have $\Pr(Y_1 \mid Y_2, Y_3) = \Pr(Y_1 \mid Y_2)$. As an example, we can apply this in \eqref{eqn:woe-pr-term-type-i} by taking $Y_1 = \ind{\forall j \in [N]_{-i} \colon 1 < \sum_{s \neq t} W_{j,s} - \sum_{s \neq t} W_{i,s}}$, $Y_2 = \ind{R_{1:t-1} = r_{1:t-1}, O_{1:t-1} = o_{1:t-1}}$, and $Y_3 = \ind{W_{i,t} = 1}$.

These simplifications yield
\begin{align}
\begin{split}
&\Pr \left( \sum_{s=0}^{T'} W_{i,s} < \min_{j \neq i} \sum_{s=0}^{T'} W_{j,s} \bigmid r_{1:t-1}, o_{1:t-1} \right) \\
&= \Pr \left( W_{i,t} = 1 \right) \cdot \Pr_t \left( \forall j \in [N]_{-i} \colon 1 < \sum_{s \neq t} W_{j,s} - \sum_{s \neq t} W_{i,s} \right) \\
&\quad + \sum_{k \in [N]_{-i}} \Pr \left( W_{k,t} = 1 \right) \cdot \Pr_t \left( \forall j \in [N]_{-i} \colon 0 < \sum_{s \neq t} W_{j,s} + \ind{j = k} - \sum_{s \neq t} W_{i,s} \right) \\
&\quad + \Pr \left( W_{0,t} = 1 \right) \cdot \Pr_t \left( \forall j \in [N]_{-i} \colon 0 < \sum_{s \neq t} W_{j,s} - \sum_{s \neq t} W_{i,s} \right) \\
&= \Pr \left( W_{i,t} = 1 \right) \cdot \underbrace{\Pr_t \left( \forall j \in [N]_{-i} \colon \sum_{s \neq t} W_{j,s} - \sum_{s \neq t} W_{i,s} > 1 \right)}_{c_i} \\
&\quad + \sum_{k \in [N]_{-i}} \Pr \left( W_{k,t} = 1 \right) \cdot \underbrace{\Pr_t \left( \forall j \in [N]_{-i} \colon \sum_{s \neq t} W_{j,s} - \sum_{s \neq t} W_{i,s} > -\ind{j = k} \right)}_{c_k} \\
&\quad + \Pr \left( W_{0,t} = 1 \right) \cdot \underbrace{\Pr_t \left( \forall j \in [N]_{-i} \colon \sum_{s \neq t} W_{j,s} - \sum_{s \neq t} W_{i,s} > 0 \right)}_{c_0} ,
\end{split} \label{eqn:woe-ci-ck-c0}
\end{align}
where the last line is just a natural rearrangement and the notation $c_i$ and $c_k$ is for convenience.

We rewrite the above as
\begin{align}
&c_i \cdot \left( 1 - \sum_{k \in [N]_{-i}} \Pr(W_{k,t} = 1)  - \Pr(W_{0,t} = 1) \right) + \sum_{k \in [N]_{-i}} c_k \Pr(W_{k,t} = 1) + c_0 \Pr(W_{0,t} = 1) \nonumber \\
&= c_i - \sum_{k \in [N]_{-i}} (c_i - c_k) \Pr(W_{k,t} = 1) - (c_i - c_0) \Pr(W_{0,t} = 1) .
\label{eqn:expert-i-incentive-objective}
\end{align}

Recall that the expert's goal is to select a report $r_{i,t}$ that maximizes the above expression. For all $k \in [N]_{-i}$, it clearly holds that $c_i \leq c_0 \leq c_k$. In Proposition~\ref{prop:woe-ci-c0} (stated and proved immediately after this proof), we can and will show something even stronger: it holds that $c_i < c_0$ and hence also $c_i < c_k$ for all $k \in [N]_{-i}$.

Consider the case of $\rho = 1$. Fix some $k \in [N]_{-i}$. We have
\begin{align*}
\Pr(W_{k,t} = 1) = \E \left[ \frac{1}{N} \left( a_1 + a_2 \loss_{k,t} - \frac{a_2}{N-1} \sum_{j \in [N]_{-k}} \loss_{j,t} \right) \right] .
\end{align*}
Since $a_2 > 0$, expert $i$'s optimal strategy for maximizing $\Pr(W_{k,t} = 1)$ is to minimize its subjective expectation of $\loss_{i,t}$. It follows that expert $i$ reports $r_{i,t} = b_{i,t}$.

Next, consider the case of $\rho < 1$. Then we have
\begin{align*}
\Pr(W_{0,t} = 1) = \E \left[ 1 - a_1 - \frac{a_2 (1 - \rho)}{N} \sum_{j=1}^N \loss_{j,t} \right] ,
\end{align*}
Since $1 - \rho > 0$ and $a_2 > 0$, we again have that expert $i$ reports $r_{i,t} = b_{i,t}$.
\end{proof}

\begin{proposition} \label{prop:woe-ci-c0}
Let $c_i$ and $c_0$ be defined as in \eqref{eqn:woe-ci-ck-c0}.
It holds that $c_i < c_0$.
\end{proposition}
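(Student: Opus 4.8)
The plan is to show strict inequality $c_i < c_0$ by exhibiting at least one outcome configuration of the frozen random variables $(W_{j,s})_{s \neq t, j \in [N]}$ that is counted in the event defining $c_0$ but not in the event defining $c_i$, and then arguing this configuration has strictly positive probability under $\Pr_t$. Recall
\[
c_0 = \Pr_t \left( \forall j \in [N]_{-i} \colon \sum_{s \neq t} W_{j,s} - \sum_{s \neq t} W_{i,s} > 0 \right), \qquad c_i = \Pr_t \left( \forall j \in [N]_{-i} \colon \sum_{s \neq t} W_{j,s} - \sum_{s \neq t} W_{i,s} > 1 \right),
\]
so $c_i \le c_0$ is immediate; the content is the strictness. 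The natural witnessing event is the one where, for every $j \in [N]_{-i}$, the gap $\sum_{s \neq t} W_{j,s} - \sum_{s \neq t} W_{i,s}$ equals exactly $1$ (or more precisely, lies in $(0,1]$ — but since the only non-integer contributions come from the round-$0$ uniform tie-breakers $W_{j,0} \in [-\frac{\varepsilon}{4N}, \frac{\varepsilon}{4N}]$, and the round-$s$ woes for $s \ge 1$ are in $\{0,1\}$, the gap takes the form integer-plus-small-noise). So I would target the event $\mathcal{G}$ on which all the integer parts of the $W_{j,s}$, $s \in \{1,\dots,T'\}\setminus\{t\}$, are chosen so that $\sum_{s \ge 1, s\neq t} W_{j,s} = \sum_{s\ge 1, s \neq t} W_{i,s} + 1$ for each $j \neq i$, together with the round-$0$ noises placed so that $W_{j,0} > W_{i,0}$ for all $j \neq i$; on $\mathcal{G}$ the gap is strictly positive but strictly less than $1 + \frac{\varepsilon}{2N} - 0 \le 1 + \frac{1}{4} \cdot \frac{1}{2} < 2$, hence not $> 1$ in general — I need to be careful that it is actually $\le 1$, which holds as long as $W_{j,0} - W_{i,0} \le 0$... so actually I want $W_{j,0} < W_{i,0}$ for the gap to stay $\le 1$, i.e. place expert $i$'s tie-breaker above the others, giving gap $\in (0,1)$, counted in $c_0$'s event but not $c_i$'s.

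Next I would verify that $\mathcal{G}$ has strictly positive probability. The round-$0$ tie-breaker event has positive probability since the $W_{j,0}$ are independent continuous uniforms on a nondegenerate interval. For the rounds $s \in \{1,\dots,T'\}\setminus\{t\}$, I need a configuration of $(W_{j,s})_j$ realizable with positive probability that makes the integer-part balance work out. The cleanest choice: on every such round $s$, let no expert win the lottery at all (the dummy expert $0$ wins), so $W_{j,s} = 0$ for all $j \in [N]$ — wait, that gives gap $0$, not $1$. Instead I would pick exactly one round $s^\star \neq t$, $s^\star \ge 1$ (which exists as long as $T' \ge 1$; if $T' = 0$ the statement is about an empty index set and should be handled as a trivial base case or is vacuous), on which some expert other than $i$... no: I need each $j \neq i$ to have one extra woe relative to $i$. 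With $N$ experts and only one lottery winner per round, I cannot give all $N-1$ non-$i$ experts an extra woe in a single round. So for $N \ge 3$ I would use $N-1$ distinct rounds $s_1, \dots, s_{N-1} \neq t$ (requiring $T' \ge N-1$; the small-$T'$ regime needs separate, easy handling or the claim is only needed when $T'$ is large enough — but actually Online IC-BI must hold for all $T'$, so I must handle small $T'$ too). Hmm — for small $T'$ the gap inequality event may force $c_i = c_0 = 0$ or the strictness may come purely from the round-$0$ noise. I expect the clean argument is: whenever there is at least one round $s \ge 1$, $s \neq t$ in which expert $C_s \notin\{i\}$ wins, compare the configuration where $W_{i,s} = 0$ versus a perturbation; more robustly, I would directly show $c_0 - c_i \ge \Pr_t(\text{all } W_{j,s}=0 \text{ for } s\ge1,s\neq t, \text{ and } W_{i,0} > \max_{j\neq i} W_{j,0})$, since on that event every gap equals $W_{j,0}-W_{i,0} \in (-\frac{\varepsilon}{2N},0)$ — that's negative, not in $(0,1)$. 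Let me instead flip: $W_{i,0} < \min_{j \neq i} W_{j,0}$, giving each gap in $(0, \frac{\varepsilon}{2N}) \subset (0,1)$, which is in $c_0$'s event and not in $c_i$'s. This event has probability $\ge (\text{const}) \cdot \Pr_t(\text{all relevant } W_{j,s} = 0) > 0$, since all-zero woes has positive probability (each $\Pr[W_{j,s}=0] = 1 - \frac{\varepsilon}{N}(\frac12 + \frac{\loss_{j,s}}{4}) > 0$), and these are independent of the round-$0$ noises and of $r_{1:t-1},o_{1:t-1}$ by belief independence.

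So the proof structure is: (1) write $c_0 - c_i = \Pr_t(\text{gap}_j \in (0,1] \text{ for all } j \neq i)$ where $\text{gap}_j := \sum_{s\neq t} W_{j,s} - \sum_{s\neq t} W_{i,s}$ — actually $c_0$'s event is $\text{gap}_j > 0$ and $c_i$'s is $\text{gap}_j > 1$, so $c_0 - c_i = \Pr_t(\forall j: \text{gap}_j > 0 \text{ and } \exists j: \text{gap}_j \le 1) \ge \Pr_t(\forall j: 0 < \text{gap}_j \le 1)$; (2) lower bound this by the probability of the explicit positive-probability event $\mathcal{A} := \{W_{j,s} = 0 \ \forall j \in [N], s \in \{1,\dots,T'\}\setminus\{t\}\} \cap \{W_{i,0} < \min_{j\neq i} W_{j,0}\}$, on which each $\text{gap}_j = W_{j,0} - W_{i,0} \in (0, \frac{\varepsilon}{2N}] \subset (0,1]$; (3) show $\Pr_t(\mathcal{A}) > 0$ by independence of woes across rounds, positivity of $\Pr[W_{j,s}=0]$, independence of round-$0$ noises, and using belief independence to drop the conditioning on $(r_{1:t-1},o_{1:t-1})$. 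The main obstacle I anticipate is the edge case handling: if $T' = t$ (so there are no rounds $s \ge 1$ with $s \neq t$), then $\mathcal{A}$ reduces to just the round-$0$ tie-breaker event, and $\text{gap}_j = W_{j,0} - W_{i,0}$, still landing in $(0,\frac{\varepsilon}{2N}]$, so the argument survives; I just need to state it so that the "all woes zero" clause is vacuous in that case. A secondary subtlety is making sure $\frac{\varepsilon}{2N} \le 1$ (true since $\varepsilon \le 1$, $N \ge 1$) so that the gap genuinely falls in $(0,1]$ and is therefore excluded from $c_i$'s strict-$>1$ event. I do not expect any deep difficulty beyond careful bookkeeping.
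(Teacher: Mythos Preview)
Your approach is essentially the same as the paper's: both reduce $c_i < c_0$ to showing $\Pr_t\bigl(\forall j\neq i:\ 0 < \text{gap}_j \le 1\bigr) > 0$ and witness this via the event where all integer-valued woes $W_{j,s}$ (for $s \in [T']\setminus\{t\}$, $j \in [N]$) vanish and the round-$0$ uniform noise places expert $i$ strictly below all others, making each gap lie in $(0,\tfrac{1}{2N}]\subset(0,1]$. The paper just pins down explicit subintervals for the tie-breakers rather than using the ordering event $W_{i,0} < \min_{j\neq i} W_{j,0}$, but this is cosmetic.

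One small point to tighten in step (3): you justify $\Pr_t(\text{all }W_{j,s}=0)>0$ by citing the individual marginals $\Pr[W_{j,s}=0]>0$, but the $W_{j,s}$ are dependent across $j$ within a round (at most one expert wins the sad lottery), so marginal positivity alone does not give the joint event. The correct one-line fix is that the joint event $\{W_{j,s}=0\ \forall j\in[N]\}$ is exactly ``the dummy expert $0$ wins round $s$'s lottery,'' which for \elfRH{} has probability $1 - \tfrac12 - \tfrac{1}{4N}\sum_{j}\loss_{j,s} \ge \tfrac14 > 0$; independence \emph{across rounds} and from the round-$0$ uniforms then yields $\Pr_t(\mathcal{A})>0$. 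Also, this proposition is for the full-information setting, so take $\varepsilon = 1$ throughout.
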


\begin{proof}
Later in the proof, we will use the claim that for any $k \in [N]$ and any $s \in [T']$, the random variable $W_{k,s}$ is equal to zero with positive $\Pr_t$-probability. 
Indeed, inspecting \eqref{eqn:woe-generalized-elf}, since $a_1 \leq 1$ and $a_2 \leq 1$, it follows for any $k \in [N]$ that $\Pr(W_t = k) \leq \frac{2}{N}$.

Adopting the notation
\begin{align*}
\Lambda 
:= 
\min_{j \in [N]_{-i}} \left\{
    W_{j,0} - W_{i,0}
    + \sum_{s \in [T']_{-t}} W_{j,s} 
    - \sum_{s \in [T']_{-t}} W_{i,s}
\right\} ,
\end{align*}
We may rewrite $c_i$ and $c_0$ respectively as
\begin{align*}
c_i = \Pr_t \left( \Lambda > 1 \right)
\qquad
c_0 = \Pr_t \left( \Lambda > 0 \right) .
\end{align*}

With this notation, proving that $c_i < c_0$ is equivalent to proving that
\begin{align*}
\Pr_t \left( 0 < \Lambda \leq 1 \right) > 0 .
\end{align*}

Now, observe that with positive $\Pr_t$-probability, all of the following happen simultaneously:
\begin{itemize}
\item $W_{k,s} = 0$ for all $s \in [T']_{-t}$ and $k \in [N]$;
\item $W_{j,0} \in \left[ \frac{1}{8 N}, \frac{1}{4 N} \right]$ for all $j \in [N]_{-i}$;
\item $W_{i,0} \in \left[ -\frac{1}{4 N}, 0 \right]$.
\end{itemize}

Therefore, with positive $\Pr_t$-probability, for all $j \in [N]_{-i}$ simultaneously, we have that
\begin{align*}
W_{j,0} - W_{i,0}
+ \sum_{s \in [T']_{-t}} W_{j,s} 
- \sum_{s \in [T']_{-t}} W_{i,s}
\end{align*}
falls in the range $\left[ \frac{1}{8 N}, \frac{3}{8 N} \right]$. Hence, $\Pr_t \left( 0 < \Lambda \leq 1 \right) > 0$, as desired.
\end{proof}

\subsection{Bandit setting} \label{app:ic-bandit}

In the full-information setting, we proved incentive compatibility for a more general class of mechanisms that contains \elfRH{}. In the bandit setting, it is unclear if such a generalization makes sense, so we simply prove incentive compatibility for \banditelfRH{} itself.

\begin{proof}[of Theorem~\ref{thm:bandit-IC}]
Denote by $\cE_t$ the exploration rounds among rounds $1$ through $t$. Note that for any exploration round $s$, we have $C_s = I_s$. 
For any round $s$ that is not an exploration round, define $W_{j,s} = 0$ for all $j \in [N]$. This way, we may always sum the $W_{j,s}$ variables over all rounds $s$ rather than just the exploration rounds.

It suffices to show, for an arbitrary ``decision round'' $t \in [T]$ and an arbitrary ``target round'' $T' \in \{t, t + 1, \ldots, T\}$, that any expert $i \in [N]$, having observed $\cE_{t-1}$ and $(I_s, r_{I_s,s}, o_s)_{s \in [t-1]}$, strictly maximizes its subjective probability
\begin{align*}
\Pr \left( \sum_{s=0}^{T'} W_{i,s} 
           < \min_{j \in [N] \setminus \{i\}} \sum_{s=0}^{T'} W_{j,s} 
           \bigmid \cE_{t-1}, (I_s, r_{I_s,s}, o_s)_{s \in [t-1]} 
    \right)
\end{align*}
by truthfully selecting $r_{i,t} = b_{i,t}$. Indeed, if we can show this for arbitrary $T'$ as above, then the expert reports truthfully regardless of the target round, so Bandit Online IC-BI follows. 

Fix such a pair $(t, T')$ as well as a history 
$\cF_{t-1} := (\cE_{t-1}, (I_s, r_{I_s,s}, o_s)_{s \in [t-1]})$. 
The rest of the proof always conditions on this history. To avoid clutter, let $\Pr_t(\cdot)$ denote the conditional expectation $\Pr(\cdot \mid \cF_{t-1})$.

Adopting the notation $[N]_{-i} := [N] \setminus \{i\}$ and letting $\sum_{s \neq t}$ be shorthand for $\sum_{s \in \{0, 1, \ldots, T'\} \setminus \{t\}}$, we rewrite the above probability as
\begin{align}
&\Pr_t \left( \forall j \in [N]_{-i} \colon \sum_{s \neq t} W_{i,s} + W_{i,t} < \sum_{s \neq t} W_{j,s} + W_{j,t} \right) \nonumber \\
&= \Pr_t \left( \forall j \in [N]_{-i} \colon W_{i,t} < \sum_{s \neq t} W_{j,s} + W_{j,t} - \sum_{s \neq t} W_{i,s} \right) \nonumber \\
&= \Pr_t \left( W_{i,t} = 1 , \left[ \forall j \in [N]_{-i} \colon 1 < \sum_{s \neq t} W_{j,s} - \sum_{s \neq t} W_{i,s} \right] \right) \nonumber \\
&\quad+ \sum_{k \in [N]_{-i}} \Pr_t \left( W_{k,t} = 1, \left[ \forall j \in [N]_{-i} \colon 0 < \sum_{s \neq t} W_{j,s} + \ind{j = k} - \sum_{s \neq t} W_{i,s} \right] \right) \nonumber \\
&\quad+ \Pr_t \left( W_{0,t} = 1, \left[ \forall j \in [N]_{-i} \colon 0 < \sum_{s \neq t} W_{j,s} - \sum_{s \neq t} W_{i,s} \right] \right) \nonumber \\
&= \Pr_t \left( W_{i,t} = 1 \right) \cdot \Pr_t \left( \forall j \in [N]_{-i} \colon 1 < \sum_{s \neq t} W_{j,s} - \sum_{s \neq t} W_{i,s} \bigmid W_{i,t} = 1 \right) \label{eqn:bandit-woe-pr-term-type-i} \\
&\quad+ \sum_{k \in [N]_{-i}} \Pr_t \left( W_{k,t} = 1 \right) \cdot \Pr_t \left( \forall j \in [N]_{-i} \colon 0 < \sum_{s \neq t} W_{j,s} + \ind{j = k} - \sum_{s \neq t} W_{i,s} \bigmid W_{k,t} = 1 \right) \label{eqn:bandit-woe-pr-term-type-k} \\
&\quad+ \Pr_t \left( W_{0,t} = 1 \right) \cdot \Pr_t \left( \forall j \in [N]_{-i} \colon 0 < \sum_{s \neq t} W_{j,s} - \sum_{s \neq t} W_{i,s} \bigmid W_{0,t} = 1 \right) . \label{eqn:bandit-woe-pr-term-type-0}
\end{align}

Recall that $\Pr_t$ means we condition on $\cF_{t-1}$, which contains the set of prior exploration rounds and the bandit history up until the end of round $t-1$. 
Now, from belief independence, as we now explain, we can drop this conditioning on all terms of the type $\Pr_t(W_{j,t} = 1)$ in equations \eqref{eqn:bandit-woe-pr-term-type-i}, \eqref{eqn:bandit-woe-pr-term-type-k}, and \eqref{eqn:bandit-woe-pr-term-type-0}. Indeed, we have that $W_{j,t}$ is distributed according to $\mathrm{Bernoulli}\left( \frac{\varepsilon}{N} \left( \frac{1}{2} + \frac{1}{4} \loss_{j,t}) \right) \right)$ since \emph{(a)} whether or not round $t$ is an exploration round is determined by independent randomness and \emph{(b)} whether or not $C_t = j$ is determined by independent randomness (conditional on round $t$ being an exploration round). The loss $\loss_{j,t}$ only depends on $r_{j,t}$ and $o_t$, and it follows from belief independence that $r_{j,t}$ and $o_t$ are independent\footnote{In the case of $j = i$, we need only consider $o_t$ as expert $i$'s beliefs do not cover its report $r_{i,t}$.} (according to expert $i$'s subjective belief distribution) of the past reports and outcomes, independent of which previous rounds were exploration rounds, and consequently also independent of which experts were selected in previous rounds. 
Next, belief independence also allows us to drop the conditioning on $W_{j,t} = 1$ in the second probability term in each of equations \eqref{eqn:bandit-woe-pr-term-type-i}, \eqref{eqn:bandit-woe-pr-term-type-k}, and \eqref{eqn:bandit-woe-pr-term-type-0}. 
To see why, observe that if $Y_1$, $Y_2$, and $Y_3$ are random variables such that each of $Y_1$ and $Y_2$ is independent of $Y_3$. Then we have $\Pr(Y_1 \mid Y_2, Y_3) = \Pr(Y_1 \mid Y_2)$. As an example, we can apply this in \eqref{eqn:bandit-woe-pr-term-type-i} by taking 
$Y_1 = \ind{\forall j \in [N]_{-i} \colon 1 < \sum_{s \neq t} W_{j,s} - \sum_{s \neq t} W_{i,s}}$, 
$Y_2 = (\cE_{t-1}, (I_s, r_{I_s,s}, o_s)_{s \in [t-1})$, 
and $Y_3 = \ind{W_{i,t} = 1}$; note that $Y_1$ (similarly, $Y_2$) is independent of $Y_3$ from belief independence.

These simplifications yield
\begin{align}
\begin{split}
&\Pr \left( \sum_{s=0}^{T'} W_{i,s} < \min_{j \neq i} \sum_{s=0}^{T'} W_{j,s} \bigmid \cE_{t-1}, (I_s, r_{I_s,s}, o_s)_{s \in [t-1]} \right) \\
&= \Pr \left( W_{i,t} = 1 \right) \cdot \Pr_t \left( \forall j \in [N]_{-i} \colon 1 < \sum_{s \neq t} W_{j,s} - \sum_{s \neq t} W_{i,s} \right) \\
&\quad + \sum_{k \in [N]_{-i}} \Pr \left( W_{k,t} = 1 \right) \cdot \Pr_t \left( \forall j \in [N]_{-i} \colon 0 < \sum_{s \neq t} W_{j,s} + \ind{j = k} - \sum_{s \neq t} W_{i,s} \right) \\
&\quad + \Pr \left( W_{0,t} = 1 \right) \cdot \Pr_t \left( \forall j \in [N]_{-i} \colon 0 < \sum_{s \neq t} W_{j,s} - \sum_{s \neq t} W_{i,s} \right) \\
&= \Pr \left( W_{i,t} = 1 \right) \cdot \underbrace{\Pr_t \left( \forall j \in [N]_{-i} \colon \sum_{s \neq t} W_{j,s} - \sum_{s \neq t} W_{i,s} > 1 \right)}_{c_i} \\
&\quad + \sum_{k \in [N]_{-i}} \Pr \left( W_{k,t} = 1 \right) \cdot \underbrace{\Pr_t \left( \forall j \in [N]_{-i} \colon \sum_{s \neq t} W_{j,s} - \sum_{s \neq t} W_{i,s} > -\ind{j = k} \right)}_{c_k} \\
&\quad + \Pr \left( W_{0,t} = 1 \right) \cdot \underbrace{\Pr_t \left( \forall j \in [N]_{-i} \colon \sum_{s \neq t} W_{j,s} - \sum_{s \neq t} W_{i,s} > 0 \right)}_{c_0} ,
\end{split} \label{eqn:bandit-woe-ci-ck-c0}
\end{align}
where the last line is just a natural rearrangement and the notation $c_i$ and $c_k$ is for convenience. 

We rewrite the above as
\begin{align}
&c_i \cdot \left( 1 - \sum_{k \in [N]_{-i}} \Pr(W_{k,t} = 1)  - \Pr(W_{0,t} = 1) \right) + \sum_{k \in [N]_{-i}} c_k \Pr(W_{k,t} = 1) + c_0 \Pr(W_{0,t} = 1) \nonumber \\
&= c_i - \sum_{k \in [N]_{-i}} (c_i - c_k) \Pr(W_{k,t} = 1) - (c_i - c_0) \Pr(W_{0,t} = 1) .
\label{eqn:bandit-expert-i-incentive-objective}
\end{align}

Recall that the expert's goal is to select a report $r_{i,t}$ that maximizes the above expression. For all $k \in [N] \setminus \{i\}$, it clearly holds that $c_i \leq c_0 \leq c_k$. In Proposition~\ref{prop:bandit-woe-ci-c0} (stated and proved immediately after this proof), we can and will show something even stronger: it holds that $c_i < c_0$ and hence also $c_i < c_k$ for all $k \in [N] \setminus \{i\}$.

Now, since
\begin{align*}
\Pr(W_{0,t} = 1) = \E \left[ 1 - \frac{\varepsilon}{2} - \frac{\varepsilon}{4 N} \sum_{j=1}^N \loss_{j,t} \right] ,
\end{align*}
it follows that expert $i$ reports $r_{i,t} = b_{i,t}$.
\end{proof}

\begin{proposition} \label{prop:bandit-woe-ci-c0}
Let $c_i$ and $c_0$ be defined as in \eqref{eqn:bandit-woe-ci-ck-c0}.
It holds that $c_i < c_0$.
\end{proposition}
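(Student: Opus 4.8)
The plan is to follow the proof of Proposition~\ref{prop:woe-ci-c0} almost verbatim, changing only the scale of the tie-breaking noise and the way one argues that a given round's woe vector can vanish. As there, I would introduce the shorthand
\[
\Lambda := \min_{j \in [N]_{-i}} \left\{ \Woe_{j,0} - \Woe_{i,0} + \sum_{s \in [T']_{-t}} \Woe_{j,s} - \sum_{s \in [T']_{-t}} \Woe_{i,s} \right\},
\]
with $[T']_{-t} := \{1,\ldots,T'\}\setminus\{t\}$, so that directly from \eqref{eqn:bandit-woe-ci-ck-c0} we have $c_i = \Pr_t(\Lambda > 1)$ and $c_0 = \Pr_t(\Lambda > 0)$. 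Hence $c_0 - c_i = \Pr_t(0 < \Lambda \le 1)$, and it suffices to prove $\Pr_t(0 < \Lambda \le 1) > 0$.

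To do this I would exhibit a single event of positive $\Pr_t$-probability on which $\Lambda$ lands in $(0,1]$: the event that simultaneously (i) $\Woe_{k,s} = 0$ for every $s \in [T']_{-t}$ and every $k \in [N]$, (ii) $\Woe_{j,0} \in [\tfrac{\varepsilon}{8N}, \tfrac{\varepsilon}{4N}]$ for every $j \in [N]_{-i}$, and (iii) $\Woe_{i,0} \in [-\tfrac{\varepsilon}{4N}, 0]$. On this event, for each $j \in [N]_{-i}$ the bracketed quantity defining $\Lambda$ reduces to $\Woe_{j,0} - \Woe_{i,0} \in [\tfrac{\varepsilon}{8N}, \tfrac{\varepsilon}{2N}]$, which, since $\varepsilon \le 1$ and $N \ge 2$, lies in $(0,\tfrac14]\subseteq(0,1]$; hence $\Lambda \in (0,1]$. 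It then remains to check the event has positive $\Pr_t$-probability, which reduces to its three conjuncts individually since they involve disjoint and — conditionally on the history $\cF_{t-1}$ — mutually independent randomness. For (ii) and (iii), $\Woe_{j,0}$ is uniform on $[-\tfrac{\varepsilon}{4N},\tfrac{\varepsilon}{4N}]$ independently across experts, so each has probability $\tfrac14$ or $\tfrac12$. For (i), the point specific to \banditelfRH{} is that in any round $s$ at most one expert (the candidate $C_s$, and only on an exploration round) receives a nonzero woe, so the events $\{\Woe_{k,s}=1\}$, $k\in[N]$, are disjoint; summing $\Pr(\Woe_{k,s}=1) = \tfrac{\varepsilon}{N}(\tfrac12 + \tfrac14\loss_{k,s})$ over $k$ gives at most $\tfrac{3\varepsilon}{4}\le\tfrac34<1$, so $\Pr_t(\Woe_{\cdot,s}=\mathbf{0})\ge\tfrac14$, and independence across rounds finishes the conjunct.

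The only place the bandit argument genuinely differs from the full-information one is justifying that conditioning on $\cF_{t-1} = (\cE_{t-1}, (I_s, r_{I_s,s}, o_s)_{s\in[t-1]})$ does not destroy these positive probabilities. Here I would invoke belief independence together with the fact that in \banditelfRH{} all relevant draws (the per-round exploration indicators, the candidates $C_s$, the lottery outcomes, and the fresh tie-breaking noise $\Woe_{\cdot,0}$) use independent randomness regenerated each round — exactly the observation already used in the proof of Theorem~\ref{thm:bandit-IC} — so the woe variables $\Woe_{k,s}$ for $s \in [T']_{-t}$ entering the target-round comparison are independent of $\cF_{t-1}$; and for any $s < t$ that was an exploitation round we deterministically have $\Woe_{\cdot,s}=\mathbf{0}$, which only makes conjunct (i) easier. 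I expect no obstacle beyond this bookkeeping, as the geometric content — arranging the tiny tie-breaking gaps so that expert $i$ leads by an amount strictly between $0$ and $1$ — is identical to the full-information case.
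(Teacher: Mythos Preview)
Your proposal is correct and follows essentially the same approach as the paper, which simply says to repeat the argument of Proposition~\ref{prop:woe-ci-c0} with the noise scale changed from $\tfrac{1}{4N}$ to $\tfrac{\varepsilon}{4N}$. One small slip: in \banditelfRH{} the candidates $C_s$ for past exploration rounds are \emph{not} regenerated (only the lottery outcomes $\Woe_{C_s,s}$ and the tie-breaking noise are redrawn), so for $s\in\cE_{t-1}$ the conditional probability is $\Pr_t(\Woe_{\cdot,s}=\mathbf{0})=1-\tfrac12-\tfrac14\loss_{C_s,s}\geq\tfrac14$ rather than $1-\tfrac{3\varepsilon}{4}$; this does not affect your conclusion.
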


\begin{proof}
The proof is essentially identical the proof of Proposition~\ref{prop:woe-ci-c0}. To avoid excessive redundancy, we just mention the differences:
\begin{itemize}
\item The first paragraph of the proof of Proposition~\ref{prop:woe-ci-c0} is now simpler as we need not consider a general family of mechanisms. We still make the same claim (which is now immediate) that for any $k \in [N]$ and any $s \in [T']$, 
the random variable $W_{k,s}$ is equal to zero with positive $\Pr_t$-probability.
\item We note that the conditioning in the $\Pr_t$ from the proof of Proposition~\ref{prop:woe-ci-c0} is different from the conditioning in the $\Pr_t$ of the present proof.
\item From the second paragraph onwards, the proofs are the same, except the noise is at scale $\frac{\varepsilon}{4 N}$ instead of $\frac{1}{4 N}$.
\end{itemize}
\end{proof}

\subsection{\elfRH{} with multiple draws is not \truthful{}} \label{app:multiple-draws-not-ic}

Before going into the formal proof of Theorem~\ref{thm:multiple-draws-not-ic}, we give intuition for why incentive compatibility breaks when multiple sad points can be awarded in a round. 
It suffices to consider the case of $N=2$ and $T=2$ with two experts we refer to as $i$ and $-i$. 
The mathematical reason is that the objective that expert $i$ wants to minimize involves a product of the loss of expert $i$ and the other expert $-i$:
\begin{align}\label{eqn:product-of-expectations-deriv}
\E \bigl[ \loss(r_{i,1}, o_1) \cdot \loss(r_{-i,1}, o_1) \bigr],
\end{align}
which we can find in Eq.~\eqref{eqn:product-of-expectations-new} of the formal proof. Such a term never appears in the case of the single draw case because the draw of $C_t$ splits the world into two worlds --- where the loss $\loss_{i,t}$ matters in one world and the loss $\loss_{-i,t}$ matters in the other world --- and the product of them does not matter. 
Assume that expert $i$'s report $r_{i,1}$ is meant to maximize the probability that the expert selected in round $3$. We can create a belief that $o_1 \sim \mathrm{Bernoulli}(1/2)$, $r_{-i,1}=1$ with probability 1, but a negative product term Eq.~\eqref{eqn:product-of-expectations-deriv} exists. 
By definition, the truthful report is $r_{i,1} = 1/2$.
However, the product term indicates that having both $\loss_{i,1}$ and $\loss_{i,2}$ high (i.e., both experts perform poorly) works against expert $i$. Expert $i$ is incentivized to report $r_{i,1} < 1/2$ because by doing so, either $\loss_{i,1}$ or $\loss_{i,2}$ is small regardless of whether $o_1 = 0$ or $o_1 = 1$. 
The negative product term of Eq.~\eqref{eqn:product-of-expectations-deriv} is derived from the asymmetry between leading and being led. To make the story simple, let us discuss the count of sad points. There are three cases at the end of round $t=1$. Namely,
\begin{itemize}
\item Expert $i$ takes the lead: $\woe_{i,1} < \woe_{-i,1}$.
\item Tie: $\woe_{i,1} = \woe_{-i,1}$.
\item Expert $-i$ takes the lead: $\woe_{i,1} > \woe_{-i,1}$.
\end{itemize}
Note that the sad points are recalculated at each round, but it is correlated with the loss $\loss_{i,t}$.
If expert $i$ believes that expert $-i$ will perform extremely well in round $2$, expert $i$'s valuation of a tie is not very high because, with a tie, expert $i$ cannot take the lead at round $t=2$ due to the good performance of expert $-i$ at round $2$. Conversely, if expert $i$ takes a lead at $t=1$, the odds of winning for expert $i$ are significantly increased. Even when only expert $i$ receives a sad point at round $2$, it is fifty-fifty. 
In summary, against a strong competitor, expert $i$ takes a risk to increase anti-correlation with the report of the other expert $-i$, which results in a report that 
does not solely maximize the expected loss.

\begin{proof}[of Theorem~\ref{thm:multiple-draws-not-ic}]

Since the game only has two rounds, it can be difficult to remember that $w_{1,2}$ corresponds to expert 1 in round 2 while $w_{2,1}$ corresponds to expert 2 in round 1. So, for clarity, we will consider the report of an expert $i$ and let $-i$ refer to the other expert. For example, we will write things like $w_{i,2}$ and $w_{-i,1}$.

The proof is somewhat long, so we first provide an overview. 
We show that for any strictly proper loss with range $[0, 1]$, there exists a belief distribution for expert $i$ such that honest reporting in round 1 does not maximize the probability that expert $i$ is selected in round 3. Concretely, this belief distribution has the following structure:
\begin{itemize}
\item $o_1 \sim \mathrm{Bernoulli}(1/2)$;
\item either $r_{-i,1} = 1$ with probability 1 or $r_{-i,1} = 0$ with probability 1;\footnote{Regarding whether $r_{-i,1} = 1$ with probability 1 or $r_{-i,1} = 0$ with probability 1, the first choice works if the condition $\loss(1,1) < \loss(1, 0)$ holds while the second choice works if the condition $\loss(0, 0) < \loss(0, 1)$ holds; we show that it must be the case that at least one of these conditions holds.}
\item $o_2 \sim \mathrm{Bernoulli}(1/2)$;
\item conditional on $o_2$, it holds with probability 1 that $r_{-i,2} = o_2$.
\end{itemize}

Next, explicitly working out the probability that expert $i$ is selected in round 3 results in a sum of terms, with each term being an expected value. In isolation, all terms but one are minimized via honest reporting. The remaining term involves an expectation of $\loss(r_{i,1},o_1) \cdot \loss(r_{-i,1},o_1)$. Both losses depend on the outcome $o_1$ in such a way that, for the particular belief distribution specified above, this term is not minimized from honest reporting.

We now present the formal proof.

Define
\begin{align*}
\mathcal{W}_{\mathrm{win}} := \left\{ w \in \{0, 1\}^2 \colon \sum_{t=1}^2 w_{i,t} < \sum_{t=1}^2 w_{-i,t} \right\}
\end{align*}
and
\begin{align*}
\mathcal{W}_{\mathrm{tie}} := \left\{ w \in \{0, 1\}^2 \colon \sum_{t=1}^2 w_{i,t} = \sum_{t=1}^2 w_{-i,t} \right\}
\end{align*}

Note that ties are broken uniformly at random; this can be accomplished by using small initial uniform random noise $W_{i,0}$ and $W_{-i,0}$, as we did in \elfRH{}. 
The probability of expert $i$ being selected in round 3 is:
\begin{align*}
&\sum_{w \in \mathcal{W}_{\mathrm{win}}} \Pr( W_i = w_i , W_{-i} = w_{-i}) 
+ \frac{1}{2} \sum_{w \in \mathcal{W}_{\mathrm{tie}}} \Pr( W_i = w_i , W_{-i} = w_{-i}) \\
&= \Pr(W_i = (0, 0), W_{-i} = (1, 1)) \\
&+ \Pr(W_i = (0, 0), W_{-i} = (0, 1)) \\
&+ \Pr(W_i = (0, 0), W_{-i} = (1, 0)) \\
&+ \Pr(W_i = (0, 1), W_{-i} = (1, 1)) \\
&+ \Pr(W_i = (1, 0), W_{-i} = (1, 1)) \\
&+ \frac{1}{2} \Pr(W_i = (0, 0), W_{-i} = (0, 0)) \\
&+ \frac{1}{2} \Pr(W_i = (0, 1), W_{-i} = (0, 1)) \\
&+ \frac{1}{2} \Pr(W_i = (0, 1), W_{-i} = (1, 0)) \\
&+ \frac{1}{2} \Pr(W_i = (1, 0), W_{-i} = (0, 1)) \\
&+ \frac{1}{2} \Pr(W_i = (1, 0), W_{-i} = (1, 0)) \\
&+ \frac{1}{2} \Pr(W_i = (1, 1), W_{-i} = (1, 1)) .
\end{align*}

Since we draw lotteries for each expert, we have
\begin{align*}
\Pr(W_{i,t} = 1) = \frac{1}{2} + \frac{1}{4} \loss_{i,t} \quad \text{for all } i \in [N] .
\end{align*}
Let $a_1 = \frac{1}{2} \loss_{i,1}$ and $a_2 = \frac{1}{2} \loss_{i,2}$. Similarly, let $b_1 = \frac{1}{2} \loss_{-i,1}$ and $b_2 = \frac{1}{2} \loss_{-i,2}$. 
Hence, we have 
\begin{align*}
\Pr(W_i = (0, 0), W_{-i} = (1, 1)) 
= \frac{1}{2} \left( 1 - a_1 \right) 
  \cdot \frac{1}{2} \left( 1 - a_2 \right) 
  \cdot \frac{1}{2}\left( 1 + b_1 \right) 
  \cdot \frac{1}{2}\left( 1 + b_2 \right) ,
\end{align*}
and the other terms are calculated in the same way. 
Therefore, the probability of expert $i$ being selected in round 3 is equal to $\frac{1}{16}$ times the expectation of
\begin{align*}
&(1 - a_1) (1 - a_2) (1 + b_1) (1 + b_2) \\
&+ (1 - a_1) (1 - a_2) (1 - b_1) (1 + b_2) \\
&+ (1 - a_1) (1 - a_2) (1 + b_1) (1 - b_2) \\
&+ (1 - a_1) (1 + a_2) (1 + b_1) (1 + b_2) \\
&+ (1 + a_1) (1 - a_2) (1 + b_1) (1 + b_2) \\
&+ \frac{1}{2} (1 - a_1) (1 - a_2) (1 - b_1) (1 - b_2) \\
&+ \frac{1}{2} (1 - a_1) (1 + a_2) (1 - b_1) (1 + b_2) \\
&+ \frac{1}{2} (1 - a_1) (1 + a_2) (1 + b_1) (1 - b_2) \\
&+ \frac{1}{2} (1 + a_1) (1 - a_2) (1 - b_1) (1 + b_2) \\
&+ \frac{1}{2} (1 + a_1) (1 - a_2) (1 + b_1) (1 - b_2) \\
&+ \frac{1}{2} (1 + a_1) (1 + a_2) (1 + b_1) (1 + b_2) .
\end{align*}

The above simplifies to
\begin{align*}
a_1 b_1 (-a_2 + b_2) 
+ a_2 b_2 (-a_1 + b_1) 
- 3 (a_1 + a_2) 
+ 3 (b_1 + b_2) 
+ 8 ,
\end{align*}
which, by definition of $a_1$, $b_1$, etc.~is equal to one eighth times
\begin{align*}
\loss_{i,1} \cdot \loss_{-i,1} \cdot (-\loss_{i,2} + \loss_{-i,2}) 
+ \loss_{i,2} \cdot \loss_{-i,2} \cdot (-\loss_{i,1} + \loss_{-i,1}) 
- 6 (\loss_{i,1} + \loss_{i,2}) 
+ 6 (\loss_{-i,1} + \loss_{-i,2}) 
+ 64 .
\end{align*}

Since our focus is on how expert $i$ selects its report $r_{i,1}$, it suffices to restrict focus to terms involving $\loss_{i,1}$, which are
\begin{align*}
\loss_{i,1} \cdot \loss_{-i,1} \cdot (-\loss_{i,2} + \loss_{-i,2}) 
- \loss_{i,1} \cdot \loss_{i,2} \cdot \loss_{-i,2} 
- 6 \loss_{i,1} 
\end{align*}

Expert $i$ selects its report $r_{i,1}$ to maximize its (subjective) expectation of the above quantity. From belief independence, this expectation can be written as 
\begin{align}
\E \left[ \loss_{i,1} \cdot \loss_{-i,1} \right] \cdot \E \left[ -\loss_{i,2} + \loss_{-i,2} \right] 
- \E \left[ \loss_{i,1} \right] \cdot \E \left[ \loss_{i,2} \cdot \loss_{-i,2} \right] 
- 6 \E \left[ \loss_{i,1} \right] . \label{eqn:overall-subjective-expectation-new}
\end{align}
From the strict properness of the loss, the third term is maximized by taking $r_{i,1} = b_{i,1}$. Since the loss is nonnegative, $\E \left[ \loss_{i,2} \cdot \loss_{-i,2} \right]$ is nonnegative, and so the second term also is maximized by taking $r_{i,1} = b_{i,1}$. The first term, however, is troublesome, due to the multiplicative interaction between $\loss_{i,1}$ and $\loss_{-i,1}$. To see why, for clarity, let us write the losses in the first term explicitly: 
\begin{align}
\E \bigl[ \loss(r_{i,1}, o_1) \cdot \loss(r_{-i,1}, o_1) \bigr] \cdot \E \bigl[ -\loss(r_{i,2}, o_2) + \loss(r_{-i,2}, o_2) \bigr] . \label{eqn:product-of-expectations-new}
\end{align}

We will first show that expert $i$ has a belief distribution such that $\E \bigl[ -\loss(r_{i,2}, o_2) + \loss(r_{-i,2}, o_2) \bigr] < 0$, after which we will demonstrate that the expression in \eqref{eqn:product-of-expectations-new} is maximized by some $r_{i,1}$ not equal to $b_{i,1}$. It should be clear that if \eqref{eqn:product-of-expectations-new} is not maximized by taking $r_{i,1} = b_{i,1}$, then the overall subjective expectation \eqref{eqn:overall-subjective-expectation-new} also is not maximized by taking $r_{i,1} = b_{i,1}$.

Now, let us suppose that expert $i$ has the following subjective belief:
\begin{itemize}
\item $o_1 \sim \mathrm{Bernoulli}(1/2)$;
\item $r_{-i,1} = 1$ with probability 1;
\item $o_2 \sim \mathrm{Bernoulli}(1/2)$;
\item conditional on $o_2$, it holds with probability 1 that $r_{-i,2} = o_2$.
\end{itemize}
Then for any report $r_{i,2} \in [0, 1]$, 
\begin{align*}
&\E \bigl[ -\loss(r_{i,2}, o_2) + \loss(r_{-i,2}, o_2) \bigr] \\
&= \frac{1}{2} \bigl( -\loss(r_{i,2}, 1) + \loss(1, 1) \bigr)
   + \frac{1}{2} \bigl( -\loss(r_{i,2}, 0) + \loss(0, 0) \bigr) \\
&= \frac{1}{2} \bigl( \loss(1, 1) + \loss(0, 0) \bigr)
   - \frac{1}{2} \bigl( \loss(r_{i,2}, 1) + \loss(r_{i,2}, 0) \bigr) .
\end{align*}
Now, from the strict monotonicity of the partial losses $r \mapsto \loss(r, 1)$ and $r \mapsto \loss(r, 0)$ (see Proposition 6 of \citet{williamson2016composite}), it follows that the above expression must be negative. 

It remains to show that under the same belief, the report $r_{i,1}$ that minimizes
\begin{align*}
\E \bigl[ \loss(r_{i,1}, o_1) \cdot \loss(r_{-i,1}, o_1) \bigr]
\end{align*}
is not equal to $\frac{1}{2}$ (which is $b_{i,1}$). Taking the expectation, the above expression is equal to
\begin{align*}
\frac{1}{2} \left( \loss(r_{i,1}, 1) \cdot \loss(1, 1) + \loss(r_{i,1}, 0) \cdot \loss(1, 0) \right) .
\end{align*}

Next, from Proposition~\ref{prop:proper-inequalities} (stated and proved immediately after this proof), we will use the fact that for any strictly proper loss, at least one of $\loss(1, 1) < \loss(1, 0)$ or $\loss(0, 0) < \loss(0, 1)$ must be true. 
Without loss of generality\footnote{A symmetric argument can be used if instead only $\loss(0, 0) < \loss(0, 1)$ holds; the only modification necessary is change expert $i$'s belief so that $r_{-i,1} = 0$ with probability 1.}, assume that $\loss(1, 1) < \loss(1, 0)$. Then
\begin{align*}
\frac{1}{2} \left( \loss(r_{i,1}, 1) \cdot \loss(1, 1) + \loss(r_{i,1}, 0) \cdot \loss(1, 0) \right)
\end{align*}
is proportional to
\begin{align*}
\loss(r_{i,1}, 1) \cdot \frac{\loss(1, 1)}{\loss(1, 0)} + \loss(r_{i,1}, 0) ,
\end{align*}
which, defining $C := \frac{\loss(1, 1)}{\loss(1, 0)} < 1$, is proportional to
\begin{align*}
\frac{C}{1 + C} \loss(r_{i,1}, 1) + \frac{1}{1 + C} \loss(r_{i,1}, 0) .
\end{align*}
From the strict properness of the loss, it follows that the above expression is minimized by taking $r_{i,1} = \frac{C}{1 + C}$. Since $C < 1$, we have $r_{i,1} < \frac{1}{2}$. 
This concludes our proof that the mechanism with multiple draws is not Online IC-BI.
\end{proof}

\begin{proposition} \label{prop:proper-inequalities}
If $\loss$ is strictly proper, then at least one of $\loss(1, 1) < \loss(1, 0)$ or $\loss(0, 0) < \loss(0, 1)$ must be true.
\end{proposition}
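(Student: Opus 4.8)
The plan is a short proof by contradiction using strict properness evaluated at the two extreme beliefs $b = 0$ and $b = 1$. First I would record the two consequences of strict properness at the endpoints: taking $b = 1$ (so the outcome is $1$ almost surely) and report $r = 0 \neq 1$ gives $\loss(1,1) < \loss(0,1)$, and taking $b = 0$ (outcome $0$ almost surely) and report $r = 1 \neq 0$ gives $\loss(0,0) < \loss(1,0)$.

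Next, suppose toward a contradiction that \emph{both} claimed inequalities fail, i.e.\ $\loss(1,1) \geq \loss(1,0)$ and $\loss(0,0) \geq \loss(0,1)$. Chaining these four relations yields
\begin{align*}
\loss(0,1) \;\leq\; \loss(0,0) \;<\; \loss(1,0) \;\leq\; \loss(1,1) \;<\; \loss(0,1),
\end{align*}
so $\loss(0,1) < \loss(0,1)$, which is absurd. Hence at least one of $\loss(1,1) < \loss(1,0)$ or $\loss(0,0) < \loss(0,1)$ must hold.

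There is essentially no obstacle here; the only point requiring a little care is correctly instantiating the strict-properness inequality $\E_{o \sim \mathrm{Bernoulli}(b)}[\loss(b,o)] < \E_{o \sim \mathrm{Bernoulli}(b)}[\loss(r,o)]$ at $b \in \{0,1\}$, where the expectation collapses to a single evaluation of $\loss$, and then observing that the two ``honest is best at the endpoints'' inequalities together with the two negated hypotheses form a cyclic chain of comparisons. This is purely a matter of bookkeeping the four quantities $\loss(1,1), \loss(1,0), \loss(0,0), \loss(0,1)$.
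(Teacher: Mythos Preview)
Your proof is correct and is actually a bit more streamlined than the paper's own argument. The paper also argues by contradiction from the negated hypotheses $\loss(1,1)\ge\loss(1,0)$ and $\loss(0,0)\ge\loss(0,1)$, but instead of your single chain it splits into three cases according to whether $\loss(0,0)$ is equal to, greater than, or less than $\loss(1,1)$, and in each case invokes the strict monotonicity of the partial losses $r\mapsto\loss(r,0)$ and $r\mapsto\loss(r,1)$ (citing Proposition~6 of \citet{williamson2016composite}) to reach a contradiction. Your approach avoids both the case split and the external citation: you obtain the two endpoint inequalities $\loss(1,1)<\loss(0,1)$ and $\loss(0,0)<\loss(1,0)$ directly from the definition of strict properness at $b\in\{0,1\}$, and then a single four-step chain delivers the absurdity $\loss(0,1)<\loss(0,1)$. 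The underlying content is the same (the monotonicity facts the paper cites, evaluated at the endpoints, are exactly your two endpoint inequalities), but your packaging is more self-contained and shorter.
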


\begin{proof}[of Proposition~\ref{prop:proper-inequalities}]
Suppose for a contradiction that $\loss(1, 1) \geq \loss(1, 0)$ and $\loss(0, 0) \geq \loss(0, 1)$. We consider three exhaustive cases, each of which gives a contradiction to the strictly proper loss's strict monotonicity property:
\begin{itemize}
\item In case 1, $\loss(0, 0) = \loss(1, 1)$. We then have $\loss(0, 0) = \loss(1, 1) \geq \loss(1, 0)$, contradicting the strict monotonicity of $r \mapsto \loss(r, 0)$ (see Proposition 6 of \citet{williamson2016composite}).
\item In case 2, $\loss(0, 0) > \loss(1, 1)$. This implies that $\loss(0, 0) > \loss(1, 1) \geq \loss(1, 0)$, again contradicting the strict monotonicity of $r \mapsto \loss(r, 0)$.
\item In case 3, $\loss(0, 0) < \loss(1, 1)$. We then have $\loss(1, 1) > \loss(0, 0) \geq \loss(0, 1)$, contradicting the strict monotonicity of $r \mapsto \loss(r, 1)$.
\end{itemize}
Therefore, either $\loss(1, 1) < \loss(1, 0)$ holds or $\loss(0, 0) < \loss(0, 1)$ holds (or both hold).
\end{proof}

\section{Regret analysis proofs} \label{app:regret}

\begin{proof}[of Proposition~\ref{prop:cond-rh-to-fh}]
Fix a round $t$ and a previous round $s \in [t-1]$. We claim that the law of $\Woe_s$ is the same whether running \banditelfRH{} or \banditelfFH{}. Indeed, under either algorithm, $\Woe_s$ can be formed via the following generative process: \emph{(i)} with probability $1 - \varepsilon$, do not let any expert participate in a lottery (set $\Woe_s = \mathbf{0}$); \emph{(ii)} with remaining probability $\varepsilon$, select an expert $C_s$ uniformly at random from $[N]$, set $\Woe_{C_s,s} = 1$ with probability $\frac{1}{2} + \frac{1}{4} \loss_{C_s,s}$, and set $\Woe_{j,s} = 0$ for all $j \neq C_s$. Now, since each $\Woe_s$ has the same law under either algorithm, it follows that for all $j \in [N]$, the sum $\sum_{s=0}^{t-1} \Woe_{j,s}$ has the same law under either algorithm.
\end{proof}

\begin{proof}[of Lemma~\ref{lemma:E-max-abs}]
For each $i$ and $s$, let $X'_{i,s}$ be an independent copy of $X_{i,s}$. For any random variable $A$, let $\E_A$ denote the expectation with respect to $A$ (conditional on everything else). From a standard symmetrization argument with independent Rademacher random variables\footnote{A Rademacher random variable takes values $+1$ and $-1$ with equal probability.} $\sigma_1, \ldots, \sigma_t$, we have
\begin{align*}
\E \left[ \max_{j \in [N]} \left| \sum_{s=1}^t X_{j,s} \right| \right] 
&= \E_X \left[ \max_{j \in [N]} \left| \sum_{s=1}^t \left( X_{j,s} - \E_{X'} \left[ X'_{j,s} \right] \right) \right| \right] \\
&\leq \E_X \E_{X'} \left[ \max_{j \in [N]} \left| \sum_{s=1}^t ( X_{j,s} - X'_{j,s} ) \right| \right] & \text{(Jensen's inequality)} \\
&= \E_X \E_{X'} \E_{\bm{\sigma}} \left[ \max_{j \in [N]} \left| \sum_{s=1}^t \sigma_s ( X_{j,s} - X'_{j,s} ) \right| \right] & \textcolor{blue}{(\star)} \\
&\leq 2 \E_X \E_{\bm{\sigma}}  \left[ \max_{j \in [N]} \left| \sum_{s=1}^t \sigma_s X_{j,s} \right| \right] \\
&= 2 \E \left[ \E_{\bm{\sigma}} \left[ \max_{j \in [N]} \left| \sum_{s=1}^t \sigma_s X_{j,s} \right| \right] \right] ,
\end{align*}
where the conditional expectation $\E_{\bm{\sigma}}$ is conditional on $X$. 
The step \textcolor{blue}{$(\star)$} is true because each $X_{j,s}$ and $X'_{j,s}$ are identically distributed, and hence
\begin{align*}
+1 \cdot (X_{j,s} - X'_{j,s}) \qquad \text{and} \qquad -1 \cdot (X_{j,s} - X'_{j,s})
\end{align*}
have the same distribution, which allows us to take an outer expectation over $\bm{\sigma} = (\sigma_1, \ldots, \sigma_s)$. 
Rewriting the last line above as
\begin{align*}
2 \E \left[ \E_{\bm{\sigma}} \left[ \max_{j \in [N]} \max \left\{ \sum_{s=1}^t \sigma_s \cdot X_{j,s}, \sum_{s=1}^t \sigma_s \cdot (-X_{j,s}) \right\} \right] \right] ,
\end{align*}
we can upper bound the conditional expectation using Massart's finite class lemma \cite[Lemma 5.2]{massart2000some} to get the upper bound
\begin{align}
\E \left[ \max_{j \in [N]} \left| \sum_{s=1}^t X_{j,s} \right| \right] 
\leq \sqrt{2 \log (2 N)} \cdot \E \left[ \max_{j \in [N]} \|X_j\|_2 \right] .\label{eqn:E-max-ell2-norm}
\end{align}

To see why this is progress, observe that with high probability, for all $j \in [N]$, we have that $X_j$ contains at most order $T q$ components whose sizes are of order $\frac{\babs}{q}$ (recall that $q$ can depend on both $N$ and $T$); the rest of the components' sizes are of constant order. To get a rigorous bound, we will get a high probability bound on $\max_{j \in [N]} \|X_j\|_2$. Recall that for each $j \in [N]$, the candidate indicators $C_{j,1}, \ldots, C_{j,t}$ are i.i.d.~Bernoulli random variables with success probability $\frac{1}{N}$. Therefore, from a multiplicative Chernoff bound\footnote{See, e.g., Theorem 2 of \citet{kuszmaul2021multiplicative}.}, for any $\alpha > 0$, it holds that
\begin{align*}
\Pr \left( \sum_{s=1}^t C_{j,s} \geq (1 + \alpha) t q \right) \leq \exp \left( -\frac{\alpha^2 t q}{2 + \alpha} \right) .
\end{align*}
Setting $\alpha = 1$ and using the union bound, we have
\begin{align*}
\Pr \left( \max_{j \in [N]} \sum_{s=1}^t C_{j,s} \geq 2 t q \right) \leq N \exp \left( -\frac{t q}{3} \right) .
\end{align*}
On the event $\max_{j \in [N]} \sum_{s=1}^t C_{j,s} < 2 t q$, we have $\max_{j \in [N]} \|X_j\|_2^2 \leq 2 t q \left( \frac{\babs}{q} \right)^2 + \left( t - 2 t q \right) \cdot \babs^2 \leq \frac{3 \babs^2 t}{q}$. 
Consequently,
\begin{align*}
\Pr \left( \max_{j \in [N]} \|X_j\|_2 \geq \babs \sqrt{\frac{3 t}{q}} \right)
 \leq N \exp \left( -\frac{t q}{3} \right) \leq \sqrt{q} ,
\end{align*}
where the second inequality uses the assumption that $t \geq \frac{3}{q} \log \left( \frac{N}{\sqrt{q}} \right)$. 
Using this result in the RHS of \eqref{eqn:E-max-ell2-norm} implies that
\begin{align*}
\E \left[ \max_{j \in [N]} \left| \sum_{s=1}^t X_{j,s} \right| \right] 
&\leq \sqrt{2 \log (2 N)} \cdot \left( \left( 1 - \sqrt{q} \right) \babs \sqrt{\frac{3 t}{q}} + \sqrt{q} \cdot \frac{\babs}{q} \sqrt{t} \right) \\
&\leq \babs \sqrt{2 \log (2 N)} \cdot \left( \sqrt{\frac{3 t}{q}} + \sqrt{\frac{t}{q}} \right) \\
&\leq 4 \babs \sqrt{\frac{t \log (2N)}{q}} ,
\end{align*}
which concludes the proof.
\end{proof}

\subsection{Proof of Theorem \ref{thm:lead-pack-prob-bound}}\label{subsec:lead-pack-prob-bound}

During the proof of Theorem \ref{thm:lead-pack-prob-bound} below, we assume the initial noise $\Woe_0$ is fixed to $\woe_0$. Every probability and expectation is conditional to this; $\Przero[\cdot], \Ezero[\cdot]$ indicate $\Pr[\cdot | \Woe_0 = \woe_0], \E[\cdot |  \Woe_0 = \woe_0]$ in the scope of the theorem.

\begin{proof}[of Theorem \ref{thm:lead-pack-prob-bound}] 
To unify the analysis, let $\varepsilon = 1$ for the full-information case.
Let $L_{i,t} = \sum_{s=1}^t \loss_{i,s} + \frac{4N}{\varepsilon} w_{i,0}$ and $Z_{i,t} = \frac{\varepsilon}{4N} \sum_{s=1}^t X_{i,s}$. Then, 
\[
\sum_{t'=0}^{t-1} W_{i,t'} = 
\frac{\varepsilon}{4N} L_{i,t-1} + Z_{i,t-1} + 2
\]
and $Z_{i,t-1}$ is a centered (i.e., zero-mean) Poisson binomial random variable whose distribution has parameters lying in $[\varepsilon/(2 N), 3\varepsilon/(4 N)] =: [\lPoi, \uPoi]$. 
Let $q_{i,t}(k) = \mathbb{P}[k - 1 < Z_{i,t} \le k]$. 
There is at most one possible value of $Z_{i,t}$ that satisfies $k - 1 < Z_{i,t} \le k$. 

In the following, we show two lemmas that are directly derived from the Poisson binomial tail bounds in Section \ref{sec_poisson}. Note that this section uses a centered Poisson binomial random variable with mean zero, whereas Section \ref{sec_poisson} uses non-centered standard Poisson binomial random variables following the standard of the literature \citep{pollard_notebook_2021}.
\begin{lemma}[Small $q_{i,t}(k)$ for large deviation]\label{lem_smallpk}
\[ 
q_{i,t}(k) \le \frac{1}{(Nt/\lPoi)^2}
\]
for 
$k: |k| \ge \sqrt{8 \lPoi  t \log (Nt/\lPoi)}$.
\end{lemma}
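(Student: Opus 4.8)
The quantity $q_{i,t}(k)$ is a single point mass, so the plan is to bound it by a one–sided Chernoff tail for a Poisson binomial. Write $Z_{i,t}=S-\mu$, where $S:=\sum_{s=1}^{t}W_{i,s}$ is the uncentered Poisson binomial with success probabilities $\theta_s:=\Pr[W_{i,s}=1]\in[\lPoi,\uPoi]$ and $\mu:=\sum_{s=1}^{t}\theta_s$; since $\uPoi=\tfrac32\lPoi$ we have $\mu\in[\lPoi t,\uPoi t]\subseteq[\lPoi t,\tfrac32\lPoi t]$. Because $S$ is integer valued, the length‑$1$ interval $(k-1,k]$ contains at most one possible value of $S-\mu$, so $q_{i,t}(k)=\Pr[S=m]$ for the unique integer $m\in(\mu+k-1,\mu+k]$ (and $q_{i,t}(k)=0$, hence nothing to prove, if this $m\notin\{0,\dots,t\}$). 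First I would note that \eqref{ineq_tmin} forces $\sqrt{8\lPoi t\log(Nt/\lPoi)}$ to exceed a large absolute constant: it gives $\lPoi t>576$, and the crude bounds $N\ge 2$, $\varepsilon\le 1$, $t\ge 1$ give $Nt/\lPoi=2N^2t/\varepsilon\ge 8$, so $\log(Nt/\lPoi)\ge\log 8$; hence $|k|\ge\sqrt{8\lPoi t\log(Nt/\lPoi)}$ in particular implies $|k|>1$, so $m>\mu$ when $k>0$ and $m<\mu$ when $k<0$. Then $\Pr[S=m]\le\Pr[S\ge m]$ in the former case and $\Pr[S=m]\le\Pr[S\le m]$ in the latter.

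For the lower tail ($k<0$) I would set $d':=\mu-m$, so that $d'\ge -k=|k|\ge\sqrt{8\lPoi t\log(Nt/\lPoi)}$. Applying the multiplicative Chernoff lower‑tail estimate for a sum of independent Bernoullis — an immediate consequence of the Poisson binomial tail bounds of Section~\ref{sec_poisson} — gives $\Pr[S\le\mu-d']\le\exp\!\bigl(-(d')^2/(2\mu)\bigr)\le\exp\!\bigl(-(d')^2/(3\lPoi t)\bigr)$, using $\mu\le\tfrac32\lPoi t$. Substituting $(d')^2\ge 8\lPoi t\log(Nt/\lPoi)$ yields $\exp\!\bigl(-\tfrac83\log(Nt/\lPoi)\bigr)=(Nt/\lPoi)^{-8/3}\le(Nt/\lPoi)^{-2}$, which settles this case with room to spare.

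For the upper tail ($k>0$) I would set $d:=m-\mu$, so $d>k-1\ge\sqrt{8\lPoi t\log(Nt/\lPoi)}-1$, and use the sharper Chernoff upper‑tail form $\Pr[S\ge\mu+d]\le\exp\!\bigl(-d^2/(2\mu+d)\bigr)$, again from Section~\ref{sec_poisson}. It suffices to verify $d^2/(2\mu+d)\ge 2\log(Nt/\lPoi)$, which rearranges to $d\ge\log(Nt/\lPoi)+\sqrt{\log^2(Nt/\lPoi)+4\mu\log(Nt/\lPoi)}$; using $\mu\le\tfrac32\lPoi t$ and $\sqrt{a+b}\le\sqrt a+\sqrt b$ it is enough that $d\ge 2\log(Nt/\lPoi)+\sqrt{6\lPoi t\log(Nt/\lPoi)}$. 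Combining with $d\ge\sqrt{8\lPoi t\log(Nt/\lPoi)}-1$, the claim reduces to $(\sqrt 8-\sqrt 6)\sqrt{\lPoi t\log(Nt/\lPoi)}\ge 2\log(Nt/\lPoi)+1$, and this follows from \eqref{ineq_tmin}, which gives $\lPoi t>24\sqrt{8\lPoi t\log(Nt/\lPoi)}$, hence $\sqrt{\lPoi t}\gg\sqrt{\log(Nt/\lPoi)}$ and thus $\sqrt{\lPoi t\log(Nt/\lPoi)}$ dominates $\log(Nt/\lPoi)$ by a wide margin. This gives $\Pr[S\ge m]\le(Nt/\lPoi)^{-2}$ and finishes the proof.

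\textbf{Main obstacle.} Essentially everything above is routine Chernoff bookkeeping; the one genuinely delicate point is that the constant $8$ in the deviation radius is (barely) on the boundary of what a one‑line tail bound can deliver, so in the upper‑tail case one cannot afford the lossy form $\exp(-d^2/(3\sigma^2))$ and must keep the sharper $\exp(-d^2/(2\mu+d))$, carry the bound $\mu\le\tfrac32\lPoi t$ (rather than the crude $\mu\le t$), and invoke \eqref{ineq_tmin} to absorb the additive $O(\log(Nt/\lPoi))$ and $O(1)$ lower‑order terms. (The lower tail has slack, since $8/3>2$.) A cleaner but longer alternative is to route the upper tail through the separation lemma of Section~\ref{sec_poisson}, reducing to an honest binomial tail and a Bennett/Poisson bound, but the constant‑chasing is the same.
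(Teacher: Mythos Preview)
Your proposal is correct and follows essentially the same route as the paper: bound the point mass $q_{i,t}(k)$ by a one-sided Chernoff tail of the underlying Poisson binomial. The paper's proof is the one-liner ``apply Lemma~\ref{lem_tail_upper} with $\bar\theta_i\le 2\lPoi$,'' which gives $q_{i,t}(k)\le\exp\bigl(-k^2/(4\lPoi t)\bigr)\le(Nt/\lPoi)^{-2}$ directly for both tails, so your asymmetric upper-tail form $\exp(-d^2/(2\mu+d))$ and the constant-chasing appeal to~\eqref{ineq_tmin} are not needed if one takes that lemma at face value.
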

\begin{proof}
Proof of this lemma is straightforward from Lemma \ref{lem_tail_upper} and $2 \lPoi \ge \bar{\theta}_i$.
\end{proof} 

\begin{lemma}[Results on ratio]\label{lem_ratio_inswitch}
There exist constants $C_1,C_2 > 0$ such that, 
\[
\frac{q_{i,t}(k-1)}{q_{i,t}(k)} \ge 1 - Q_{t,k}
\]
for any $k \in [- \lPoi t/24 + 2, \lPoi t/24 - 2]$, where 
$Q_{t,k} = \min \left\{ \frac{C_1 \log t}{\sqrt{\lPoi t}} + \frac{C_2 |k|}{\lPoi t}, 1 \right\}$.
\end{lemma}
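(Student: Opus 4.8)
Write $S_{i,t} := Z_{i,t} + \mu_{i,t}$ for the uncentered Poisson binomial underlying $Z_{i,t}$, with parameters in $[\lPoi,\uPoi]$ and mean $\mu_{i,t} := \E[S_{i,t}] \in [\lPoi t, \uPoi t]$. Since $S_{i,t}$ is integer-valued, the interval $(\mu_{i,t}+k-1,\mu_{i,t}+k]$ contains exactly one integer $m_k$, and $m_{k-1}=m_k-1$, so
\[
\frac{q_{i,t}(k-1)}{q_{i,t}(k)}=\frac{\Pr[S_{i,t}=m_k-1]}{\Pr[S_{i,t}=m_k]}.
\]
The assumed range of $k$, together with $\uPoi\le\tfrac32\lPoi$ and $\lPoi\le\tfrac14$, keeps $m_k$ strictly inside $(0,t)$, forces $t-m_k+1\ge t/2$, and gives $|\mu_{i,t}-m_k|\le |k|+1$. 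Set $d:=\lceil\mu_{i,t}\rceil-m_k$, so $|d|\le|k|+1$. The Poisson binomial PMF is log-concave (a convolution of the log-concave Bernoulli PMFs on $\{0,1\}$), hence unimodal with a mode within distance $1$ of $\mu_{i,t}$. When $d\le 0$ (roughly $k\gtrsim 0$), unimodality gives $\Pr[S_{i,t}=m_k-1]\ge\Pr[S_{i,t}=m_k]$, so the ratio is $\ge 1\ge 1-Q_{t,k}$ and nothing more is needed. The work is entirely in the regime $d\ge 1$, i.e. the atom at or below the mode.

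\noindent\textbf{The main regime.} Here I would bound $q_{i,t}(k-1)=\Pr[S_{i,t}=m_k-1]$ from below and $q_{i,t}(k)=\Pr[S_{i,t}=m_k]$ from above, via the separation lemma (Lemma~\ref{lem_operation}): spreading two parameters apart at fixed sum until one hits the floor $\lPoi$ or the ceiling, taken with ceiling $1$, reduces $S_{i,t}$ to $a+\mathrm{Bin}(t',\lPoi)+B$ with $a$ a deterministic count, $B$ a single leftover Bernoulli, $t'=\Theta(t)$ and $\lPoi t'=\Theta(\lPoi t)$. For $\mathrm{Bin}(t',\lPoi)$ the ratio of consecutive atoms is the closed form $\frac{m'}{t'-m'+1}\cdot\frac{1-\lPoi}{\lPoi}$, and writing $m'=\lfloor\lPoi t'\rfloor-d'$ with $d'=d+O(1)$ a one-line computation gives this is $\ge 1-\frac{d'+\lPoi}{\lPoi t'(1-\lPoi)+(d'+1)\lPoi}\ge 1-\frac{C_2|k|}{\lPoi t}$; this handles moderate $|k|$. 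For $|k|$ up to the large-deviation scale $\sqrt{8\lPoi t\log(Nt/\lPoi)}$ (which lies inside the stated $k$-range by \eqref{ineq_tmin}), I instead write $\Pr[S_{i,t}=m_k-1]\ge\Pr[S_{i,t}\ge m_k-1]-\Pr[S_{i,t}\ge m_k]$, lower-bound the first tail with the Poisson binomial tail lower bound of Section~\ref{subsec_taillower} (proved by the same reduction to a binomial) and upper-bound the second with Lemma~\ref{lem_tail_upper}, dividing by the matching upper bound on $q_{i,t}(k)$; the logarithmic slack that the tail lower bound carries through the binomial reduction is exactly what produces the $\frac{C_1\log t}{\sqrt{\lPoi t}}$ term. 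Finally, for the remaining (large $|k|$) values of $k$ for which the resulting estimate would be negative, the truncation in $Q_{t,k}=\min\{\cdot,1\}$ makes the claim trivial, and Lemma~\ref{lem_smallpk} certifies $q_{i,t}(k)$ is negligible there anyway. Absorbing the $O(1)$ shifts among $k,d,d'$ and the effect of the leftover Bernoulli $B$ into $C_1,C_2$ completes the argument.

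\noindent\textbf{Main obstacle.} The delicate point is the lower bound on the single atom $\Pr[S_{i,t}=m_k-1]$ at a location that may be of order $\sqrt{\lPoi t\log(Nt/\lPoi)}$ below the mode. The separation lemma is stated for tails $\Pr[S\ge a]$, so one must (i) verify that the separation operation moves the relevant atoms (or, better, the consecutive-atom ratio) in the favorable direction throughout the homogenization process — which is not a closed-form identity as in the symmetric binomial case and requires the careful lower-bound reasoning referenced in the proof sketch — and (ii) pass from the binomial \emph{tail} lower bound to an \emph{atomic} lower bound, which is where the extra $\log t$ factor enters $Q_{t,k}$; we believe this factor is an artifact but do not see a cheap way to remove it. The remaining bookkeeping (non-integer $\mu_{i,t}$, the leftover parameter, $\lceil\cdot\rceil$ versus $\lfloor\cdot\rfloor$) is tedious but routine.
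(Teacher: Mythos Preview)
Your proposal has a genuine gap in the main regime. You plan to reduce the Poisson binomial to an (almost) binomial via the separation lemma and then read off the consecutive-atom ratio from the binomial closed form. But the separation lemma (Lemma~\ref{lem_operation}) controls only the \emph{cumulative} probabilities $P_t(k,\bm\theta)$ under the operation; it says nothing about how individual atoms $p_t(k,\bm\theta)$ or the ratio $p_t(k-1)/p_t(k)$ move. You flag this yourself as the ``main obstacle'' but do not close it, and in fact it is not easily closed: the separation operation can shift atomic mass in either direction even while making the tail monotone. Your alternative for the large-deviation scale --- writing $p_t(m_k-1)$ as a difference of two tails, lower-bounding one via Lemma~\ref{lem_tail_lower} and upper-bounding the other via Lemma~\ref{lem_tail_upper} --- also fails, because those two bounds carry incompatible exponential constants ($36$ versus $1/2$) and the $1/\sqrt{2t}$ prefactor, so the lower bound on the bigger tail is typically smaller than the upper bound on the smaller tail and the difference is negative.

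The paper's argument (Lemma~\ref{lem_poibin_tail_ratio}) avoids reducing the ratio to a binomial ratio at all. For $k$ below the mode it picks a reference point $k_2 \approx k - (t\bar\theta - k) - \sqrt{\lPoi t}$ and uses log-concavity (Lemma~\ref{lem_decrratio}: $b_t$ is increasing) to telescope
\[
\bigl(b_t(k)\bigr)^{\,k-k_2} \;\ge\; \prod_{k'=k_2}^{k-1}\frac{p_t(k')}{p_t(k'+1)} \;=\; \frac{p_t(k_2)}{p_t(k)} \;\ge\; p_t(k_2).
\]
Now only a single atomic lower bound on $p_t(k_2)$ is needed, and that \emph{does} follow from tails: monotonicity below the mode gives $p_t(k_2)\ge \frac{1}{k_2+1}P_t(k_2)$, and $P_t(k_2)$ is lower-bounded by Lemma~\ref{lem_tail_lower} (which is exactly where the separation lemma is used, on tails, where it is proved). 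Taking the $(k-k_2)$-th root and using $e^{-x}\ge 1-x$ produces both the $\log t/\sqrt{\lPoi t}$ and the $|k|/(\lPoi t)$ terms in one stroke. This ``take the geometric mean of ratios and bound the telescoped product by one endpoint atom'' is the missing idea in your plan.
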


\begin{proof}
This is derived in Lemma~\ref{lem_poibin_tail_ratio}.
Note that this is a generalized version of a similar bound for symmetric binomial random variables stated at the beginning of page 7 in \citet{devroye2013prediction}. 
\end{proof}

By using Lemmas \ref{lem_smallpk} and \ref{lem_ratio_inswitch}, the following derives Theorem \ref{thm:lead-pack-prob-bound}.

Let $\Rin = \left\{-\left\lceil \sqrt{8\lPoi t \log (Nt/\lPoi)} \right\rceil,-\left\lceil \sqrt{8\lPoi t \log (Nt/\lPoi)} \right\rceil+1,\dots, \left\lceil \sqrt{8\lPoi t \log (Nt/\lPoi)} \right\rceil\right\}$ and $\Rout = \{-t,-t+1,\dots,t-1,t\} \setminus \Rin$.

Intuitively speaking, the following discussion bounds the event $|A_t| = 1$ in the following way. Letting $j$ be the (unique) element of $A_t$, we split the event into two cases: The first case is $Z_{i,t}$ is very large, which is bounded via Lemma \ref{lem_smallpk}. 
The more important second case is that $Z_{i,t}$ is moderate. In this case, we bound the event via Lemma \ref{lem_ratio_inswitch}.

\begin{align}
\lefteqn{
\Przero \left(  |A_t| = 1 \right) 
}\\
& := \sum_{j=1}^N \Przero \left( \min_{i \neq j} \sum_{s=0}^{t-1} W_{i,s} > \sum_{s=0}^{t-1} W_{j,s} + 1 \right)\\ 
&= \sum_{j =1}^N \Przero \left( 
  \min_{i \neq j} \left\{ \frac{\varepsilon}{4 N} L_{i,t-1} + Z_{i,t-1}  \right\} > \frac{\varepsilon}{4 N} L_{j,t-1}  + Z_{j,t-1} + 1
\right) \\
&\geq \sum_{k=-t}^t \sum_{j =1}^N q_{j,t-1}(k) \Przero \left( 
  \min_{i \neq j} \left\{ \frac{\varepsilon}{4 N} L_{i,t-1}  + Z_{i,t-1} \right\} > \frac{\varepsilon}{4 N} L_{j,t-1}  + k + 1
\right) \\
&\ge \sum_{k\in \Rin}  \sum_{j =1}^N q_{j,t-1}(k) \Przero \left( 
  \min_{i \neq j} \left\{ \frac{\varepsilon}{4 N} L_{i,t-1}  + Z_{i,t-1} \right\} > \frac{\varepsilon}{4 N} L_{j,t-1}  + k + 1
\right)\\
&\ge \sum_{k\in \Rin}  \sum_{j =1}^N q_{j,t-1}(k) \Przero \left( 
  \min_{i \neq j} \left\{ \frac{\varepsilon}{4 N} L_{i,t-1}  + Z_{i,t-1} \right\} > \frac{\varepsilon}{4 N} L_{j,t-1}  + k + 1
\right) 
\label{ineq_riterm}\\
&\ \ +\hspace{-0.3em}\sum_{k\in \Rout} \sum_{j =1}^N q_{j,t-1}(k+1) \Przero \left( 
  \min_{i \neq j} \left\{ \frac{\varepsilon}{4 N} L_{i,t-1}  + Z_{i,t-1} \right\} > \frac{\varepsilon}{4 N} L_{j,t-1}  + k + 1
\right)\hspace{-0.3em}\left(1 - Q_{t-1,k+1})\right) 
- \frac{1}{N t} ,
\end{align}
where the last inequality holds because 
\begin{align}
&\sum_{k\in \Rout} \sum_{j =1}^N q_{j,t-1}(k+1) \Przero \left( 
  \min_{i \neq j} \left\{ \frac{\varepsilon}{4 N} L_{i,t-1}  + Z_{i,t-1} \right\} > \frac{\varepsilon}{4 N} L_{j,t-1}  + k + 1
\right)\left(1 - Q_{t-1,k+1})\right)
 \nonumber\\
&\le 
\sum_{k\in \Rout} \sum_{j =1}^N q_{j,t-1}(k+1) 
\nonumber\\
&\leq \sum_{k\in \Rout} \sum_{j =1}^N \frac{1}{(Nt/\lPoi)^2}  \tag{Lemma~\ref{lem_smallpk}} \\
&\leq \frac{1}{Nt} . \nonumber 
\end{align}

Moreover, Eq.\eqref{ineq_riterm} is transformed as 
\begin{align}
\lefteqn{
\sum_{k\in \Rin}  \sum_{j =1}^N q_{j,t-1}(k) \Przero \left( 
  \min_{i \neq j} \left\{ \frac{\varepsilon}{4 N} L_{i,t-1}  + Z_{i,t-1} \right\} > \frac{\varepsilon}{4 N} L_{j,t-1}  + k + 1
\right) 
}\\
&= \sum_{k\in \Rin}  \sum_{j =1}^N \frac{q_{j,t-1}(k)}{q_{j,t-1}(k+1)} q_{j,t-1}(k+1) \Przero \left( 
  \min_{i \neq j} \left\{ \frac{\varepsilon}{4 N} L_{i,t-1}  + Z_{i,t-1} \right\} > \frac{\varepsilon}{4 N} L_{j,t-1}  + k + 1
\right) \\
&\ge \sum_{k\in \Rin} \sum_{j =1}^N q_{j,t-1}(k+1) \Przero \left( 
  \min_{i \neq j} \left\{ \frac{\varepsilon}{4 N} L_{i,t-1}  + Z_{i,t-1} \right\} > \frac{\varepsilon}{4 N} L_{j,t-1}  + k + 1
\right)\hspace{-0.3em}\left(1 - Q_{t-1,k+1})\right). \text{\ \ \ \ (Lemma \ref{lem_ratio_inswitch})}
\end{align}
Here, the application of Lemma \ref{lem_ratio_inswitch} for all $k \in \Rin$ is valid because 
\begin{align}
|k| \le \left\lceil \sqrt{8 \lPoi t \log (Nt/\lPoi)} \right\rceil 
&< \lPoi t/24 - 24 + 1 \text{\ \ \ \ (by Eq.~\eqref{ineq_tmin})}\\
&<  \lPoi t/24 - 2,
\end{align}
which satisfies the premise of Lemma \ref{lem_ratio_inswitch} when using $k+1$ and $t-1$.

By merging the identical terms for $\Rin \cup \Rout$, we have
\begin{align}
\lefteqn{
\Przero \left(  |A_t| = 1 \right)
}\\
&\ge \sum_{k = -t}^t \sum_{j =1}^N q_{j,t-1}(k+1) \Przero \left( 
  \min_{i \neq j} \left\{ \frac{\varepsilon}{4 N} L_{i,t-1}  + Z_{i,t-1} \right\} > \frac{\varepsilon}{4 N} L_{j,t-1}  + k + 1
\right)\hspace{-0.3em}\left(1 - Q_{t-1,k+1})\right)\hspace{-0.3em}-\frac{1}{N t}
\end{align}

Let 
\[
S_t = \left\{
j \in [N]: 
 \frac{\varepsilon}{4 N} L_{j,t-1}  + Z_{j,t-1}  
 =
  \min_{i \in [N]} \left\{ \frac{\varepsilon}{4 N} L_{i,t-1}  + Z_{i,t-1} \right\}
\right\}
\]
be the set of perturbed leaders at round $t$.
We have, 
\begin{align}
\sum_{j =1}^N \underbrace{q_{j,t-1}(k+1)}_{\Przero[k <Z_{j,t-1} \leq k+1]} 
\underbrace{
\Przero \left(
  \min_{i \ne j} \left\{ \frac{\varepsilon}{4 N} L_{i,t-1}  + Z_{i,t-1} \right\} \geq \frac{\varepsilon}{4 N} L_{j,t-1}  + k + 1
\right) 
}_{\ge \Przero\left[j \in S_t \,\middle|\, k < Z_{j,t-1} \leq k+1\right]}
\hspace{-10em}
\nonumber\\
&\ge \sum_j \Przero[j \in S_t, k<Z_{j,t-1} \leq k+1]\nonumber\\
&\ge \Przero \left[
  \exists j\in S_t: k < Z_{j,t-1} \leq k + 1
\right]\nonumber\\
&\ge \Przero \left[
  k < \min_{j\in S_t} Z_{j,t-1} \leq k + 1
\right],\label{ineq_transmergest}
\end{align}
and thus, we obtain 
\begin{align}
\lefteqn{
\Przero \left(  |A_t| = 1 \right)
}\\
&\ge 
\sum_{k=-t}^t \sum_{j =1}^N  q_{j,t-1}(k+1) \Przero \left(
  \min_{i \ne j} \left\{ \frac{\varepsilon}{4 N} L_{i,t-1}  + Z_{i,t-1} \right\} \geq \frac{\varepsilon}{4 N} L_{j,t-1}  + k + 1
\right)
\left(1 - Q_{t-1,k+1}\right)-\frac{1}{N t}
\\
&\ge 
\sum_{k=-t}^t \Przero \left[
  k < \min_{j\in S_t} Z_{j,t-1} \leq k + 1
\right]
\left(1 - Q_{t-1,k+1}\right) -\frac{1}{N t}.
\text{\ \ \ \ (by \eqref{ineq_transmergest})}
\label{eq:ali-clarification-2}
\end{align}

By using this, we have 
\begin{align}
\lefteqn{
\Przero[ |A_t| > 1 ]
}\\
&= 1 - \Przero[ |A_t| = 1 ]\\
&\le 1 - \sum_{k=-t}^{t}
\Przero\left[
  k < \min_{j\in S_t} Z_{j,t-1} \leq k + 1
\right]
\left(1 - Q_{t-1,k+1}\right) + \frac{1}{N t}\\
&= \sum_{k=-t}^{t}
\Przero\left[
  k < \min_{j\in S_t} Z_{j,t-1} \leq k + 1
\right] Q_{t-1,k+1} + \frac{1}{N t}\\
&\leq \frac{C_1 \log (t-1)}{\sqrt{\lPoi (t-1)}}
+ \frac{1}{N t}
+
\sum_{k=-t}^{t}
\Przero\left[
  k < \min_{j\in S_t} Z_{j,t-1} \leq k+1
\right]
\frac{C_2 |k+1|}{\lPoi (t-1)}
\text{\ \ \ \ (by definition of $Q_{t,k+1}$)}
\label{eq:ali-clarification-8}\\
&\le
\frac{C_1 \log (t-1)}{\sqrt{\lPoi (t-1)}}
+ \frac{1}{N t}
+
\frac{C_2}{\lPoi (t-1)}
\Ezero\left[
\left|\min_{j\in S_t} Z_{j,t-1} \right|
\right]
+ \frac{C_2}{\lPoi (t-1)}\\
&\text{\ \ \ \ \ \ (by $\sum_{k=-t}^{t}
\Przero\left[
  k < \min_{j\in S_t} Z_{j,t-1} \leq k+1
\right]
 = 1$)}\nonumber\\
&\le
\frac{C_1 \log (t-1)}{\sqrt{\lPoi (t-1)}}
+ \frac{1}{N t}
+
\frac{C_2}{\lPoi (t-1)}
\Ezero\left[
\left|\min_{j\in S_t} Z_{j,t-1} \right|
\right]
+ \frac{C_2}{\sqrt{\lPoi (t-1)}}
\text{\ \ \ \ (by \eqref{ineq_tmin} implies $\lPoi t \ge 1$)}
\\
&=
\frac{C_1 \log (t-1)}{\sqrt{\lPoi (t-1)}}
+ \frac{1}{N t}
+
\frac{C_2}{\lPoi (t-1)}
\Ezero\left[
  \left| \max_{j\in S_t} -Z_{j,t-1} \right| 
\right]
+ \frac{C_2}{\sqrt{\lPoi (t-1)}}
\label{eq:ali-clarification-11}\\
&\leq
\frac{C_1 \log (t-1)}{\sqrt{\lPoi (t-1)}}
+ \frac{1}{N t}
+
\frac{4 C_2}{\lPoi (t-1)} \sqrt{ \lPoi (t-1) \log N}
+ \frac{C_2}{\sqrt{\lPoi (t-1)}}
\label{eq:ali-clarification-3}\\
&\text{\ \ \ \ (by Lemma~\ref{lemma:E-max-abs} with $\babs=\lPoi, q=\lPoi$)} \nonumber
\end{align}
\end{proof} 

\section{Bounds on Poisson binomial distributions}\label{sec_poisson}

In this section, we consider the tail and mode of a Poisson binomial distribution. This distribution has one (or two consecutive) modes and is unimodal (i.e., before the left mode it is increasing and after the mode it is decreasing). The modes are within one from the mean.
We start with the well-known results on the tail of the binomial distribution, which is a special case of a Poisson binomial distribution with homogeneous parameters.
In particular, we denote $\mathrm{BIN}(t, \theta)$ to represent a binomial distribution, which is a count sum of $t$ Bernoulli random variables with a common mean $\theta$.

The following lemma lower bounds the tail of the binomial distribution.
\begin{lemma}{\rm (Binomial tail lower bound)}\label{lem_binom_tail_lower}
Consider a random variable that is drawn from a binomial distribution  $Y \sim \mathrm{BIN}(t, \theta)$. 
Let $k \ge 0$ be an nonnegative integer.
Then, 
\begin{equation}
\mathbb{P}\left[
Y \le k
\right] \ge \frac{1}{\sqrt{2t}}\exp\left(- t \cdot d\left(\frac{k}{t} \,\middle\|\, \theta\right)\right),
\end{equation} 
where $d(p \,\|\, q)
 := p\log(p/q) + (1-p)\log((1-p)/(1-q))$ is the Bernoulli KL divergence.
\end{lemma}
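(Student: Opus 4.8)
The plan is to reduce this tail lower bound, in two routine steps, to a single sharp estimate on a binomial coefficient. I would take $0 \le k \le t$ (for $k > t$ the tail equals $1$ while $d(k/t \,\|\, \theta)$ is not well-defined) and $\theta \in (0,1)$ (for $\theta \in \{0,1\}$ the variable $Y$ is deterministic and the bound is immediate since $\frac{1}{\sqrt{2t}} \le 1$).

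First I would simply keep the single term $j = k$ in the tail sum, giving $\Pr[Y \le k] \ge \Pr[Y = k] = \binom{t}{k}\theta^k(1-\theta)^{t-k}$. Second, writing $p = k/t$ and expanding $d(p \,\|\, \theta) = \big(p\ln p + (1-p)\ln(1-p)\big) - \big(p\ln\theta + (1-p)\ln(1-\theta)\big)$, one reads off the identity $\theta^k(1-\theta)^{t-k} = \big(\frac{k}{t}\big)^{k}\big(\frac{t-k}{t}\big)^{t-k} e^{-t\, d(k/t \,\|\, \theta)}$ (with the convention $0^0 = 1$). After substituting, the lemma becomes equivalent to the purely combinatorial inequality $\binom{t}{k}\big(\frac{k}{t}\big)^{k}\big(\frac{t-k}{t}\big)^{t-k} \ge \frac{1}{\sqrt{2t}}$, valid for every integer $0 \le k \le t$.

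To establish that inequality: for $k \in \{0,t\}$ the left-hand side equals exactly $1 > \frac{1}{\sqrt{2t}}$. For $1 \le k \le t-1$ I would invoke the sharp form of the entropy bound on binomial coefficients, $\binom{t}{k}\big(\frac{k}{t}\big)^{k}\big(\frac{t-k}{t}\big)^{t-k} \ge \big(8k(1-k/t)\big)^{-1/2}$ — a standard consequence of Stirling's formula with explicit error control — and then observe that $(t-2k)^2 \ge 0$ is precisely the inequality $8k(1-k/t) = \frac{8k(t-k)}{t} \le 2t$, so $\big(8k(1-k/t)\big)^{-1/2} \ge \frac{1}{\sqrt{2t}}$, which finishes the proof.

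The step I expect to be the main obstacle is pinning down the combinatorial bound with the exact constant $\frac{1}{\sqrt{2t}}$. A quick application of crude Stirling or Robbins-type bounds loses a small multiplicative constant and already falls short at $t = 2$, $k = 1$, where the inequality is in fact tight ($\binom{2}{1}(\tfrac12)(\tfrac12) = \tfrac12 = \tfrac{1}{\sqrt{4}}$). One must therefore either cite the sharp $\tfrac{1}{\sqrt{8k(1-k/t)}}$-prefactor version of the entropy bound or run a slightly more careful Stirling estimate that tracks that prefactor rather than an absolute constant; every other step is elementary algebra.
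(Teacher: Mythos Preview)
Your proposal is correct: dropping to the single term $\Pr[Y=k]$, rewriting $\theta^k(1-\theta)^{t-k}$ via the KL identity, and then invoking the sharp Stirling-type bound $\binom{t}{k}(k/t)^k((t-k)/t)^{t-k}\ge (8k(1-k/t))^{-1/2}\ge (2t)^{-1/2}$ is exactly the standard argument, and your check of the tight case $t=2,\,k=1$ confirms the constant is not lost. The paper does not give its own proof here but simply defers to Lemma~4.7.2 of Ash's \emph{Information Theory}, whose proof is precisely this one-term-plus-Stirling computation, so your approach coincides with the cited source.
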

A proof of Lemma \ref{lem_binom_tail_lower} is found in Lemma 4.7.2 in \citep{ash1990information}.

Let us consider the Poisson binomial distribution, which generalizes the binomial distribution. 
Let $\bm{\theta} = (\theta_1, \theta_2, \dots, \theta_t)$ and $\mathrm{PBIN}(\bm{\theta})$ be the distribution of the sum of $\mathrm{Bernoulli}(\theta_1) + \dots + \mathrm{Bernoulli}(\theta_t)$. 
Let $\bar{\theta} = (1/t)\sum_{s \le t} \theta_s$ be the mean. 
We use $\locallPoi, \localuPoi$ to represent the minimum and maximum of the parameters $(\theta_s)_{s \in [t]}$, respectively. 

For $k \le t$, let $p_t(k) = \mathbb{P}[Y = k]$ for $Y \sim \mathrm{PBIN}(\bm{\theta})$. 
Let $b_t(k) = p_t(k-1)/ p_t(k)$ be the ratio of two consecutive $p_t(k)$'s. By letting $p_t(k) = 0$ for $k < 0$, $b_t(k)$ is well-defined for $k = \{0,1,\dots,t\}$. Let $P_t(k) = \sum_{k'=0}^k p_t(k')$ be the corresponding CDF. Let $P_t^c(k) = 1 - P_t(k)$. The overall goal of the rest of this section is to provide a tail bound for $\mathrm{PBIN}(\bm{\theta})$.

First, a loose upper bound is easy since each Bernoulli trial is bounded. Namely,
\begin{lemma}\label{lem_hoeffding}{\rm (Hoeffding's inequality)}
Let $Y \sim \mathrm{PBIN}(\bm{\theta})$. Then, for any $k > 0$, we have
\begin{align}
\mathbb{P}[
Y - \bar{\theta} t \ge k 
] &\le \exp\left(-\frac{2k^2}{t}\right)\\
\mathbb{P}[
Y - \bar{\theta} t \le -k 
] &\le \exp\left(-\frac{2k^2}{t}\right).
\end{align}
\end{lemma}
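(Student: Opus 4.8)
The plan is to recognize that this is nothing more than the classical Hoeffding inequality applied to a sum of independent bounded random variables, so essentially no new work is required. First I would write $Y = \sum_{s=1}^{t} B_s$, where $B_1,\dots,B_t$ are independent and $B_s \sim \mathrm{Bernoulli}(\theta_s)$; then each $B_s \in [0,1]$ almost surely and $\E[Y] = \sum_{s=1}^{t}\theta_s = \bar{\theta}\, t$.

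Next I would invoke Hoeffding's inequality in its standard form: for independent random variables $B_s$ taking values in $[a_s,b_s]$ and any $k>0$,
\[
\mathbb{P}\!\left[\sum_{s=1}^{t}\bigl(B_s - \E[B_s]\bigr) \ge k\right] \le \exp\!\left(-\frac{2k^2}{\sum_{s=1}^{t}(b_s-a_s)^2}\right).
\]
Taking $a_s = 0$ and $b_s = 1$ for every $s$ gives $\sum_{s=1}^{t}(b_s-a_s)^2 = t$ and hence $\mathbb{P}[\,Y - \bar{\theta}\, t \ge k\,] \le \exp(-2k^2/t)$, which is the first claimed bound. For the second (lower-tail) bound I would apply exactly the same inequality to the random variables $1-B_s$ (equivalently, to $-B_s$), which are again independent and take values in $[0,1]$; since the event $\{\,Y-\bar{\theta}\, t \le -k\,\}$ coincides with $\{\,\sum_{s=1}^{t}(1-B_s) - \E[\sum_{s=1}^{t}(1-B_s)] \ge k\,\}$, Hoeffding's inequality yields $\mathbb{P}[\,Y-\bar{\theta}\, t \le -k\,] \le \exp(-2k^2/t)$.

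I do not expect any genuine obstacle here: the only thing to verify is that the hypotheses of Hoeffding's inequality hold, namely independence of the summands (immediate from the definition of $\mathrm{PBIN}(\bm{\theta})$) and boundedness in a common length-one interval (immediate since each $B_s$ is a $\{0,1\}$-valued indicator). If a fully self-contained argument is preferred, I would instead use the Hoeffding lemma bounding the moment generating function of a centered $[0,1]$-valued random variable, $\E[e^{\lambda(B_s-\E[B_s])}] \le e^{\lambda^2/8}$, multiply over $s$ by independence to obtain $\E[e^{\lambda(Y-\bar{\theta} t)}] \le e^{\lambda^2 t/8}$, apply Markov's inequality to get $\mathbb{P}[\,Y-\bar{\theta} t \ge k\,] \le e^{\lambda^2 t/8 - \lambda k}$, and optimize at $\lambda = 4k/t$; the lower tail is handled symmetrically.
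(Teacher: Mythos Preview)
Your proposal is correct. The paper does not actually supply a proof of this lemma; it simply states it as the standard Hoeffding inequality (noting only that ``a loose upper bound is easy since each Bernoulli trial is bounded''), and your derivation is precisely the canonical argument one would give to justify it.
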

In the following, we focus on obtaining a sharper upper bound and a corresponding lower bound on the tail volume. 
We will derive the bound of the form 
\[
\mathbb{P}[
|Y - \bar{\theta} t| \ge k 
] \le \exp\left(-C \frac{k^2}{\bar{\theta} t}\right)
\]
for some universal constant $C>0$, which is sharper by the ${\bar{\theta}}^{-1}$ factor in the exponent.
Roughly speaking, when $\theta_1,\theta_2,\dots,\theta_t$ are very close to zero, the tail is sharper.

At a high level, the following subsection bounds a tail probability of a Poisson binomial distribution by the corresponding quantity of a binomial distribution, where the tail bounds are well-known. To do so, we use 
the Separation Lemma (Lemma \ref{lem_operation}) that essentially states that increasing the variance of the parameters $\bar{\theta}$ in a particular way always makes the tail thinner.

\subsection{Lemmas for tail bounds}

\begin{lemma}\label{lem_decrratio}
The ratio $b_t(k)$ is strictly increasing in $k$. \end{lemma}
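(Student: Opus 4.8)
The plan is to recognize the statement as strict log-concavity of the mass function and prove it by induction on the number of Bernoulli summands. We may assume each $\theta_s \in (0,1)$ (this is the only regime used in the paper, and the general case follows at once, since a $\mathrm{Bernoulli}(0)$ summand changes nothing while a $\mathrm{Bernoulli}(1)$ summand merely shifts the distribution); then $p_t(k) > 0$ for every $k \in \{0,1,\dots,t\}$, so $b_t(k)$ is finite there. Since $b_t(k) < b_t(k+1)$ unwinds to $p_t(k-1)\,p_t(k+1) < p_t(k)^2$, it suffices to show that $p_t$ is strictly log-concave, in the sense that $p_t(k)^2 \ge p_t(k-1)\,p_t(k+1)$ for all $k$ (with the convention $p_t(j)=0$ for $j\notin\{0,\dots,t\}$) with strict inequality whenever $p_t(k) > 0$; note this is automatic at $k\in\{0,t\}$.

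First I would set up the induction via the one-step convolution identity $p_{t+1}(k) = (1-\theta_{t+1})\,p_t(k) + \theta_{t+1}\,p_t(k-1)$ coming from appending one independent $\mathrm{Bernoulli}(\theta_{t+1})$ summand; the base case $t\le 1$ (a point mass, resp.\ a single Bernoulli) is immediate. For the inductive step, abbreviating $p = p_t$, $q = p_{t+1}$, $u = 1-\theta_{t+1} > 0$, $v = \theta_{t+1} > 0$, the key identity (a direct expansion of the squares and products) is
\begin{align*}
q(k)^2 - q(k-1)q(k+1)
&= u^2\big(p(k)^2 - p(k-1)p(k+1)\big) + v^2\big(p(k-1)^2 - p(k-2)p(k)\big) \\
&\quad + uv\big(p(k)p(k-1) - p(k-2)p(k+1)\big).
\end{align*}
By the inductive hypothesis $p$ is (strictly, hence at least weakly) log-concave, so the first two grouped terms are $\ge 0$; the cross term is $\ge 0$ because weak log-concavity makes the ratios $p(j)/p(j-1)$ non-increasing in $j$, whence $p(k+1)/p(k)\le p(k-1)/p(k-2)$ and therefore $p(k-2)p(k+1)\le p(k)p(k-1)$ (when an index leaves $\{0,\dots,t\}$ the boundary convention makes the relevant grouped/cross terms exactly $0$). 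Thus $q$ is weakly log-concave. For strictness, I would use the dichotomy: if $q(k) > 0$ then $p(k) > 0$ or $p(k-1) > 0$; in the first case the strict inductive hypothesis gives $p(k)^2 - p(k-1)p(k+1) > 0$, and in the second case $p(k)=0$ forces $p(k-1)^2 - p(k-2)p(k) = p(k-1)^2 > 0$. Since $u^2, v^2 > 0$, in both cases $q(k)^2 - q(k-1)q(k+1) > 0$, closing the induction; translating back gives $b_t(k) < b_t(k+1)$.

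The algebraic expansion and the ratio monotonicity are routine; the one place that needs care — and where I would concentrate — is making sure the \emph{strict} inequality genuinely propagates (rather than degrading to equality) all the way to the ends of the support, which is exactly what the ``$p(k)>0$ or $p(k-1)>0$'' split is for. As a remark, a shorter alternative is to invoke real-rootedness: the probability generating function $\prod_{s=1}^t(1-\theta_s+\theta_s z) = \sum_k p_t(k)z^k$ has only real, nonpositive roots, so its coefficient sequence is strictly log-concave on $\{0,\dots,t\}$ by Newton's inequalities; I prefer the induction above since it keeps the section self-contained and sidesteps degree bookkeeping when some $\theta_s\in\{0,1\}$.
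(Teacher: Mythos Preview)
Your proof is correct. The paper itself does not prove this lemma but simply refers the reader to Section~4.9 of Pollard's minibook, so there is no in-paper argument to compare against directly. Your induction on the number of Bernoulli summands, establishing strict log-concavity of the Poisson binomial mass function via the one-step convolution identity and the expansion
\[
q(k)^2 - q(k-1)q(k+1)
= u^2\bigl(p(k)^2 - p(k-1)p(k+1)\bigr) + v^2\bigl(p(k-1)^2 - p(k-2)p(k)\bigr) + uv\bigl(p(k)p(k-1) - p(k-2)p(k+1)\bigr),
\]
is a standard self-contained route; the strictness propagation via the ``$p(k)>0$ or $p(k-1)>0$'' dichotomy is handled correctly. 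The real-rootedness alternative you mention (the generating function $\prod_{s}(1-\theta_s+\theta_s z)$ has only real nonpositive roots, so Newton's inequalities give strict log-concavity of its coefficient sequence) is the other classical argument and may well be what the cited reference uses. Either would serve as a satisfactory drop-in replacement for the citation and would make the paper self-contained on this point.
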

The proof is found at Section 4.9 [1] of \cite{pollard_notebook_2021}. 
This implies the unimodality of Poisson binomial.

\begin{lemma}\label{lem_mode}
$\mathrm{PBIN}(\bm{\theta})$ has one or two consecutive modes. Let $m_L \le m_R$ be modes; in the case of a single mode, $m_L = m_R$, otherwise $m_R = m_L + 1$.
We have $|m - \bar{\theta}| \le 1$ for $m \in \{m_L, m_R\}$. 
Moreover, this fact combined with Lemma \ref{lem_decrratio} implies that 
\[
p_t(0) < p_t(1) < \dots < 
p_t(m_L) = p_t(m_R) >
\dots > p_t(t).
\]
\end{lemma}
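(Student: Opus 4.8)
The plan is to split Lemma~\ref{lem_mode} into two essentially independent parts: (i) the \emph{shape} statement (unimodality, one or two consecutive modes, and the displayed chain of inequalities), which I would extract purely from the monotonicity of the ratio $b_t$ supplied by Lemma~\ref{lem_decrratio}; and (ii) the \emph{location} statement $|m-\mu|\le 1$ for every mode $m$, where $\mu:=\sum_{s\le t}\theta_s=t\bar\theta=\E_{Y\sim\mathrm{PBIN}(\bm{\theta})}[Y]$, which is the classical fact that the mode of a sum of independent Bernoullis lies within $1$ of its mean. Before starting I would reduce to the case $\theta_s\in(0,1)$ for all $s$: a coordinate with $\theta_s=0$ may be dropped without changing the distribution, and a coordinate with $\theta_s=1$ merely shifts $\mathrm{PBIN}(\bm{\theta})$, hence every $p_t(k)$, $b_t(k)$, and mode, by $+1$, so nothing is lost. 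Under this reduction $p_t(0)=\prod_s(1-\theta_s)>0$, so $b_t(0)=p_t(-1)/p_t(0)=0$ is well defined and $b_t$ is strictly increasing on $\{0,1,\dots,t\}$ by Lemma~\ref{lem_decrratio}.

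For part (i) I would argue as follows. Since $b_t(0)=0\le 1$, the set $\{k\in\{0,\dots,t\}:b_t(k)\le 1\}$ is nonempty; let $m^\ast:=\max\{k:b_t(k)\le 1\}$. Strict monotonicity of $b_t$ gives $b_t(k)\le 1$ (i.e.\ $p_t(k-1)\le p_t(k)$) for all $k\le m^\ast$ and $b_t(k)>1$ (i.e.\ $p_t(k-1)>p_t(k)$) for all $m^\ast<k\le t$; hence $p_t$ is nondecreasing on $\{0,\dots,m^\ast\}$ and strictly decreasing on $\{m^\ast,\dots,t\}$, so $m^\ast$ is a mode. Because $b_t$ is strictly increasing it equals $1$ at most once. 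If $b_t(m^\ast)<1$ then $p_t$ is \emph{strictly} increasing up to $m^\ast$, so $m^\ast$ is the unique mode and I set $m_L=m_R=m^\ast$; if $b_t(m^\ast)=1$ (note $m^\ast\ge 1$ since $b_t(0)=0\ne 1$) then $p_t(m^\ast-1)=p_t(m^\ast)$ while all earlier inequalities are strict, so the modes are exactly $m^\ast-1$ and $m^\ast$ and I set $m_L=m^\ast-1$, $m_R=m^\ast$. Either way $\mathrm{PBIN}(\bm{\theta})$ has one or two consecutive modes with $m_R\in\{m_L,m_L+1\}$, and assembling the nondecreasing-then-strictly-decreasing behavior yields
\[
p_t(0)<p_t(1)<\dots<p_t(m_L)=p_t(m_R)>\dots>p_t(t),
\]
which is precisely the displayed chain (the equality $p_t(m_L)=p_t(m_R)$ is vacuous when $m_L=m_R$ and is the tie $p_t(m^\ast-1)=p_t(m^\ast)$ otherwise). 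This part is completely routine given Lemma~\ref{lem_decrratio}; the only care needed is the bookkeeping of the two cases and the boundary cases $m^\ast=0$ or $m^\ast=t$.

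For part (ii) it suffices, since $m_L$ and $m_R$ differ by at most $1$, to prove $|m-\mu|\le 1$ for an arbitrary mode $m$. I would invoke the classical theorem of Darroch on the mode of a Poisson binomial, which gives the sharper $m-1<\mu<m+1$; the relevant facts are also collected in Section~4.9 of \cite{pollard_notebook_2021} (the same source already used for Lemma~\ref{lem_decrratio}). If a self-contained derivation is preferred, I would run an induction on $t$ using the log-concavity of $(p_t(k))_k$ (equivalent to the monotonicity of $b_t$) together with the identity $k\,p_t(k)=\sum_{s}\theta_s\,p_t^{(-s)}(k-1)$, where $p_t^{(-s)}$ is the pmf of the Poisson binomial with coordinate $s$ deleted and $\E[Y^{(-s)}]=\mu-\theta_s$; comparing $p_t(k)$ with $p_t^{(-s)}(k)$ via $p_t(k)=(1-\theta_s)p_t^{(-s)}(k)+\theta_s p_t^{(-s)}(k-1)$ shows $p_t(k+1)\le\frac{\mu}{k+1}p_t(k)<p_t(k)$ once $k$ exceeds the modes of all the $p_t^{(-s)}$, pushing the mode to the left of roughly $\mu$, with a symmetric argument on the other side. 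I expect the genuine obstacle to be exactly here: the crude version of this induction only yields a constant slightly worse than $1$, and squeezing out the sharp constant $1$ is precisely the content of Darroch's argument, so in practice I would cite it rather than reprove it. Everything else in the lemma is immediate from Lemma~\ref{lem_decrratio}.
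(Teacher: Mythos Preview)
Your proposal is correct and matches the paper's (implicit) approach: the paper does not actually supply a proof of this lemma, treating the mode-location bound as a known fact and explicitly noting that the displayed chain follows from Lemma~\ref{lem_decrratio}. Your part~(i) is the standard extraction of unimodality from a strictly monotone ratio, and for part~(ii) you correctly identify Darroch's theorem (also in Section~4.9 of \cite{pollard_notebook_2021}, the same source cited for Lemma~\ref{lem_decrratio}) as the source of the sharp constant~$1$; your reading of $|m-\bar\theta|\le 1$ as $|m-t\bar\theta|\le 1$ is the intended one, since $\bar\theta\in[0,1]$ while modes can be of order $t$.
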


\begin{figure}[h]
\hspace{-16em}\includegraphics[width=1\textwidth]{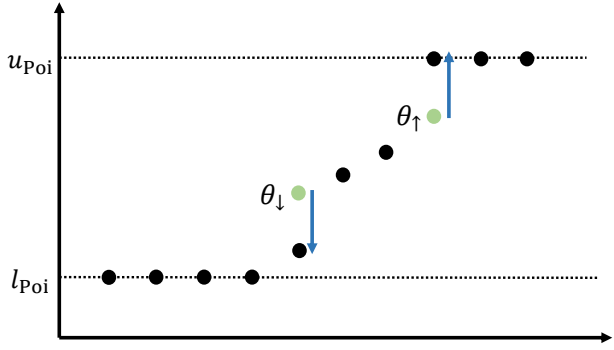}
\vspace{-8em}
  \caption{Illustration of Lemma \ref{lem_operation}. We move two parameters that were originally $\theta_\downarrow$ and $\theta_\uparrow$ (green dots) towards the direction of the blue arrows until one of them hits $\lPoi$ or $\uPoi$. The two nodes move exactly the same distance so that the summation of the parameters are preserved.}
  \label{fig:separation}
\end{figure}

\begin{lemma}{\rm (Separation Lemma)}\label{lem_operation}
Let $\bm{\theta} = (\theta_1, \theta_2, \dots, \theta_t)$ such that $\theta_1 \le \theta_2 \le \dots \le \theta_t$ and each $\theta_s \in [\locallPoi, \localuPoi] \subseteq [0, 1]$. Assume that there exist at least two indices such that $\theta_i, \theta_j \in (\locallPoi,\localuPoi)$.
Let $\downarrow, \uparrow$ be the minimum (resp. maximum) index such that $\theta_{\downarrow} > \locallPoi$ (resp. $\theta_{\uparrow} < \localuPoi$). 
Consider an operation that takes $\bm{\theta}$ and returns 
\begin{equation}
\bm{\theta}' \hspace{-0.2em} = \hspace{-0.2em}
\begin{cases}
    (\theta_1, \theta_2, \dots, \theta_{\downarrow-1}, \underbrace{\theta_{\downarrow}+\theta_{\uparrow}-\localuPoi}_{\text{position of $\theta_{\downarrow}$}}, \theta_{\downarrow+1},\dots, \theta_{\uparrow-1}, \underbrace{\localuPoi}_{\text{position of $\theta_{\uparrow}$}},\theta_{\uparrow+1},\dots,\theta_t) & \text{if } \theta_\downarrow - \locallPoi \geq \localuPoi - \theta_\uparrow \\
    (\theta_1, \theta_2, \dots, \theta_{\downarrow-1},    \underbrace{\locallPoi}_{\text{position of $\theta_{\downarrow}$}}, \theta_{\downarrow+1},\dots, \theta_{\uparrow-1}, \underbrace{\theta_{\downarrow}+\theta_{\uparrow}-\locallPoi}_{\text{position of $\theta_{\uparrow}$}},\theta_{\uparrow+1},\dots,\theta_t) & \text{otherwise}
\end{cases}
\end{equation}
Namely, we add $\locallPoi$ or $\localuPoi$ while preserving the summation, which is illustrated in Figure \ref{fig:separation}. On this operation, the following hold:
\begin{enumerate}
\item The sum of $\bm{\theta}$ and $\bm{\theta}'$ are the same. 
\item Let $k \le t\bar{\theta} - 1 \le m_L$. 
Let $p_t(k, \bm{\theta})$ be the probability of $k$ successes among $t$ coins with probabilities $\bm{\theta} = (\theta_1,\theta_2,\dots,\theta_t)$. 
Let $P_t(k, \bm{\theta}) = \sum_{k' \le k} p_t(k', \bm{\theta})$. Then,
\begin{equation}\label{ineq_tailbelow}
P_t(k, \bm{\theta}) \ge P_t(k, \bm{\theta}').
\end{equation}
\item Similarly, Let $k \ge t\bar{\theta} + 1 \le m_R$. 
Then,
\begin{equation}\label{ineq_tailover}
P_t(k, \bm{\theta}) \le P_t(k, \bm{\theta}').
\end{equation}

In other words, \eqref{ineq_tailbelow} and \eqref{ineq_tailover} state that this operation makes the tail thinner.
\end{enumerate}
\end{lemma}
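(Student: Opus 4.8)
The plan is to reduce both inequalities to the unimodality of the Poisson binomial. Part~1 is immediate: in either case the two modified coordinates keep the common sum $\theta_{\downarrow}+\theta_{\uparrow}$ (and one checks the case split is exactly what keeps $\bm{\theta}'$ inside $[\locallPoi,\localuPoi]^t$), so $\sum_s \theta'_s=\sum_s \theta_s$. For Parts~2 and~3, write $a=\theta_{\downarrow}$, $b=\theta_{\uparrow}$, and let $(a',b')$ be the pair produced by the operation, so $a'+b'=a+b$. The first thing I would record is the key algebraic fact that the operation \emph{decreases the product of the two parameters}: in Case~1, $ab-a'b'=(a-\localuPoi)(b-\localuPoi)\ge 0$ because $a,b\le \localuPoi$; in Case~2, $ab-a'b'=(a-\locallPoi)(b-\locallPoi)\ge 0$ because $a,b\ge \locallPoi$. (Equivalently, the operation lowers the variance of $\mathrm{Bern}(a)+\mathrm{Bern}(b)$ while fixing its mean.)

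Next I would condition on the ``core'' sum. Let $S$ be the sum of the $t-2$ Bernoulli variables attached to the \emph{unchanged} parameters, with PMF $p_S$; then $\mathrm{PBIN}(\bm{\theta})\stackrel{d}{=}S+T$ and $\mathrm{PBIN}(\bm{\theta}')\stackrel{d}{=}S+T'$, where $T=\mathrm{Bern}(a)+\mathrm{Bern}(b)$ and $T'=\mathrm{Bern}(a')+\mathrm{Bern}(b')$ are independent of $S$, take values in $\{0,1,2\}$, and share the mean $a+b$. Since $\Pr[T=2]-\Pr[T'=2]=ab-a'b'=:\gamma\ge 0$ and the means agree, the probability vectors differ by exactly $(\gamma,-2\gamma,\gamma)$ on $\{0,1,2\}$. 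Convolving with $S$ and summing the resulting per-point identity over all indices $\le k$ collapses a second difference, leaving
\[
P_t(k,\bm{\theta})-P_t(k,\bm{\theta}')=\gamma\bigl(p_S(k)-p_S(k-1)\bigr).
\]

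What remains is to sign $p_S(k)-p_S(k-1)$ on the two ranges of $k$. By Lemma~\ref{lem_decrratio} the consecutive ratios of $p_S$ are strictly increasing, hence $p_S$ is unimodal (Lemma~\ref{lem_mode}), so $p_S(k)-p_S(k-1)\ge 0$ for every $k$ at or below the right mode of $S$ and $\le 0$ past it. Because $\gamma\ge 0$, \eqref{ineq_tailbelow} follows once every $k\le t\bar{\theta}-1$ is seen to lie at or below that mode, and \eqref{ineq_tailover} once every $k\ge t\bar{\theta}+1$ is seen to lie strictly above it. This final placement is the main obstacle: $S$ has mean $\E[S]=t\bar{\theta}-\theta_{\downarrow}-\theta_{\uparrow}$, which is not $t\bar{\theta}$, so the crude ``mode within one of the mean'' from Lemma~\ref{lem_mode} does not by itself cover the stated window for $k$; one must combine it with the integrality of the mode and careful bookkeeping on $\theta_{\downarrow}+\theta_{\uparrow}$ (and on the fractional part of $t\bar{\theta}$) to pin the right mode of $S$ down correctly relative to $t\bar{\theta}$. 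With that placement in hand both inequalities drop out; and since each application of the operation strictly reduces the number of parameters lying in $(\locallPoi,\localuPoi)$, the lemma can then be iterated to homogenize $\bm{\theta}$, which is how it will be used downstream.
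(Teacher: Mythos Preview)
Your approach is essentially identical to the paper's: both condition on the $t-2$ unchanged coins, derive the identity $P_t(k,\bm\theta)-P_t(k,\bm\theta')=c\bigl(p_{\bm\theta_\setminus}(k)-p_{\bm\theta_\setminus}(k-1)\bigr)$ for a nonnegative constant $c$ (your $\gamma=ab-a'b'$ is exactly the paper's $\varepsilon(\varepsilon+\gamma)$, as a two-line computation shows), and then sign the first difference via the unimodality of the reduced Poisson binomial (Lemmas~\ref{lem_decrratio}--\ref{lem_mode}). The mode-placement issue you single out as ``the main obstacle'' is in fact glossed over by the paper too: its proof simply asserts ``$\Delta_1(k)>0$ for $k$ before the mode'' without verifying that the hypothesis $k\le t\bar\theta-1$ places $k$ at or below the mode of $\bm\theta_\setminus$ (whose mean is $t\bar\theta-\theta_\downarrow-\theta_\uparrow$, not $t\bar\theta$), so you have not missed anything the paper actually supplies.
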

\begin{proof}
We only derive the second bullet point (i.e., \eqref{ineq_tailbelow}) because the first point is trivial. Regarding the third point, inequality \eqref{ineq_tailover} can be derived by the same discussion.
The core idea of this proof is derived from Lemma 9 in \citep{jorgensen_team_2018}.
Let $\bm{\theta}_{\setminus}$ be the set of the $t-2$ parameters where $\theta_{\downarrow}$ and $\theta_{\uparrow}$ are excluded.

Let $\varepsilon = \theta_{\downarrow} - \theta_{\downarrow}' = \theta_{\uparrow}' - \theta_{\uparrow} \ge 0$. 
Let $\gamma = \theta_{\uparrow} - \theta_{\downarrow} \ge 0$.
Let $\Delta_1(k) = p_t(k, \bm{\theta}_{\setminus}) - p_t(k-1, \bm{\theta}_{\setminus})$.
Let $\Delta_2(k) = p_t(k, \bm{\theta}_{\setminus}) - 2 p_t(k-1, \bm{\theta}_{\setminus}) + p_t(k-2, \bm{\theta}_{\setminus})$.

In the following, we show that
\begin{align}\label{ineq_binom_snd}
p_t(k, \bm{\theta}) - p_t(k, \bm{\theta}')
&= \varepsilon (\varepsilon + \gamma) \Delta_2(k)\\
P_t(k, \bm{\theta}) - P_t(k, \bm{\theta}')
&= \varepsilon (\varepsilon + \gamma) \Delta_1(k).
\label{ineq_binom_fst}
\end{align}

We first derive Eq.~\eqref{ineq_binom_snd}:
\begin{align}
p_t(k, \bm{\theta}') = 
\theta_{\downarrow}' \theta_{\uparrow}' p_t(k-2, \bm{\theta}_{\setminus}) +
\left( \theta_{\downarrow}' (1 - \theta_{\uparrow}') + (1 - \theta_{\downarrow}') \theta_{\uparrow}'\right) p_t(k-1, \bm{\theta}_{\setminus}) +
(1 - \theta_{\downarrow}') (1 - \theta_{\uparrow}') p_t(k, \bm{\theta}_{\setminus}).\label{ineq_binom_snd_three}
\end{align}
Here, each term Eq.~\eqref{ineq_binom_snd_three} is transformed as
\begin{align}
\theta_{\downarrow}' \theta_{\uparrow}' p_t(k-2, \bm{\theta}_{\setminus})
& = (\theta_{\downarrow} - \varepsilon) (\theta_{\uparrow} + \varepsilon) p_t(k-2, \bm{\theta}_{\setminus})\\
& = (\theta_{\downarrow} \theta_{\uparrow} - \varepsilon (\varepsilon + \gamma)) p_t(k-2, \bm{\theta}_{\setminus})\\
\end{align}
\begin{align}
\lefteqn{
\left( \theta_{\downarrow}' (1 - \theta_{\uparrow}') + (1 - \theta_{\downarrow}') \theta_{\uparrow}'\right) p_t(k-1, \bm{\theta}_{\setminus}) 
}\\
& =
\left( (\theta_{\downarrow} - \varepsilon) (1 - \theta_{\uparrow} - \varepsilon) + (1 - \theta_{\downarrow} + \varepsilon) (\theta_{\uparrow} + \varepsilon)\right) p_t(k-1, \bm{\theta}_{\setminus}) 
\\
&=
\left( \theta_{\downarrow} (1 - \theta_{\uparrow}) + 2 \varepsilon (\varepsilon + \gamma)\right) p_t(k-1, \bm{\theta}_{\setminus}) 
\end{align}
\begin{align}
(1 - \theta_{\downarrow}') (1 - \theta_{\uparrow}') p_t(k, \bm{\theta}_{\setminus})
&=
(1 - \theta_{\downarrow} + \varepsilon) (1 - \theta_{\uparrow} - \varepsilon) p_t(k, \bm{\theta}_{\setminus})\\
&=
\left(
(1 - \theta_{\downarrow}) (1 - \theta_{\uparrow})
- \varepsilon (\varepsilon + \gamma)
\right) p_t(k, \bm{\theta}_{\setminus})\\
\end{align}
Adding these terms yields Eq.~\eqref{ineq_binom_snd}.

Eq.~\eqref{ineq_binom_fst} is derived as follows:
\begin{align}
P_t(k, \bm{\theta}) - P_t(k, \bm{\theta}')
&= \sum_{k' \le k} \left(p_t(k', \bm{\theta}) - p_t(k', \bm{\theta}')\right)\\
&= \sum_{k' \le k} \varepsilon (\varepsilon + \gamma) \Delta_2(k')
\text{\ \ \ \ (by Eq.~\eqref{ineq_binom_snd})}
\\
&= \varepsilon (\varepsilon + \gamma) \Delta_1(k)
\text{\ \ \ \ (by $p_t(-2, \bm{\theta}_{\setminus}) = p_t(-1, \bm{\theta}_{\setminus}) = 0$)}
.
\end{align}

Here, $\Delta_1(k) = p_t(k, \bm{\theta}_{\setminus}) - p_t(k-1, \bm{\theta}_{\setminus})$, and Lemma \ref{lem_decrratio} implies that $p_t(k, \bm{\theta}_{\setminus})$ as a function of $k$ is strictly increasing before the first mode of the Poisson binomial distribution, which implies $\Delta_1(k) > 0$ for $k$ before the mode, which completes the proof. 

Note that the properties we used during the proof above are $\varepsilon, \gamma \ge 0$ and $\Delta_1(k) > 0$. \eqref{ineq_tailover} can be derived by using $\varepsilon, \gamma \ge 0$ and $\Delta_1(k) < 0$ for the corresponding value of $k$. 
\end{proof} 

\begin{lemma}\label{lem_operation_repeat_ub}
Let $\bm{\theta}$ be such that $\theta_i \in [0, 1]$ for all $i\in[t]$. 
Let $\bm{\theta}_{unif} = (\bar{\theta},\bar{\theta},\dots,\bar{\theta})$.
Then, by applying the operation of Lemma \ref{lem_operation} on $\bm{\theta}_{unif}$ for $t-1$ times, we obtain $\bm{\theta}$.
\end{lemma}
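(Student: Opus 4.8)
The plan is to prove the (equivalent, up to permutation) statement that the multiset $\bm\theta$ can be reached from $\bm\theta_{unif}=(\bar\theta,\dots,\bar\theta)$ by at most $t-1$ applications of the operation of Lemma~\ref{lem_operation}, allowing a freshly chosen pair of bounds $\locallPoi\le\localuPoi$ (with all \emph{current} parameters lying in $[\locallPoi,\localuPoi]\subseteq[0,1]$) at each step. Two standard observations drive the argument. First, $\bm\theta_{unif}$ and $\bm\theta$ have the same mean, so $\bm\theta_{unif}$ is majorized by $\bm\theta$; hence, by the classical Hardy--Littlewood--P\'olya / Muirhead characterization of majorization, $\bm\theta$ is obtained from $\bm\theta_{unif}$ by a finite sequence of ``spreading'' moves (reverse Robin Hood transfers: pick two entries $a\le b$ and replace them by $a-\eta,\,b+\eta$), and the standard greedy version of this construction fixes at least one coordinate at its final value per move, so at most $t-1$ moves are needed. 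Second, one application of the operation of Lemma~\ref{lem_operation} \emph{is} such a spreading move, in the special case where the two entries being moved are the smallest parameter exceeding the floor and the largest parameter below the ceiling: if one sets $\locallPoi$ and $\localuPoi$ equal to the two target values of these entries, then by conservation of the sum the two required displacements are equal, so the single step size $\varepsilon=\min(\theta_\downarrow-\locallPoi,\,\localuPoi-\theta_\uparrow)$ achieves both, and the operation realizes exactly the intended move.

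So the proof will run a majorization path and implement it move by move with the operation. Concretely, I would sort $\bm\theta$ as $\theta_{(1)}\le\cdots\le\theta_{(t)}$ and process the target values inward from the two extremes, maintaining the invariant that the current sorted configuration has a prefix already locked at $\theta_{(1)},\dots,\theta_{(a)}$, a suffix locked at $\theta_{(t-b+1)},\dots,\theta_{(t)}$, and a ``free'' middle block whose entries all lie in $[\theta_{(a+1)},\theta_{(t-b)}]$. At each step I would either push the smallest free entry down to the next small target or the largest free entry up to the next large target; a pigeonhole-type check shows at least one of these is feasible inside $[0,1]$ (at the first step, if neither were, one would get $\bar\theta>\tfrac12$ from the failure on the small side and $\bar\theta<\tfrac12$ from the failure on the large side, a contradiction; analogously in general), and it can be carried out by one operation with $\locallPoi,\localuPoi$ set to the relevant running extreme target values. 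Each such step locks at least one more coordinate, so the process terminates after at most $t-1$ steps with the configuration equal to $\bm\theta$. The base cases ($t=1$, all $\theta_s$ equal, or $\bar\theta\in\{0,1\}$) are immediate and only shorten the path. Chaining \eqref{ineq_tailbelow} and \eqref{ineq_tailover} along this sequence is then what is actually used downstream.

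I expect the bookkeeping around the free middle block to be the main obstacle. The operation of Lemma~\ref{lem_operation} is rigid: it must move the \emph{smallest} interior parameter and the \emph{largest} interior parameter, and it requires \emph{every} current parameter to lie in the chosen $[\locallPoi,\localuPoi]$. Hence, once a small target value has been created it pins the admissible floor, and one must ensure that the move one wants to perform next is the one the operation performs --- i.e., that the coordinate we want to move is the current extreme interior one and that the chosen bounds exclude no already-locked coordinate --- while the middle block, which does not stay homogeneous after the first move, must be tracked carefully so that it remains within the as-yet-unused target values. Orchestrating the order of locking and the choice of bounds so that all these constraints hold \emph{and} the number of operations does not exceed $t-1$ is the delicate part; the remainder is routine case-checking against the two branches in the definition of the operation.
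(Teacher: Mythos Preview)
Your approach is essentially the same as the paper's: set the floor and ceiling at each step to the extreme target values not yet present in the current vector, and argue that each application of the operation locks at least one additional coordinate at its final target value, so $t-1$ steps suffice. The paper's proof is only four sentences and literally prescribes $\locallPoi=\min\{\theta_i:\theta_i\notin\bm\theta_{\mathrm{cur}}\}$, $\localuPoi=\max\{\theta_i:\theta_i\notin\bm\theta_{\mathrm{cur}}\}$, then asserts that each step decrements the number of unmatched targets.

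Your majorization/Muirhead framing and the explicit invariant about a locked prefix, a locked suffix, and a free middle block are scaffolding not present in the paper. More importantly, you correctly identify a technical point the paper's terse proof glosses over: Lemma~\ref{lem_operation} requires \emph{all} current parameters to lie in the chosen interval $[\locallPoi,\localuPoi]$, yet once an extreme target (say $\theta_{(1)}$) has been locked, shrinking the floor to the next target $\theta_{(2)}$ would place the already-locked entry outside the admissible interval. The paper does not address this; the natural resolution (implicit in both arguments) is to regard the operation as acting on the sub-vector of free coordinates, which is harmless for the downstream use since convolving with the fixed Bernoullis of the locked coordinates preserves the tail inequalities \eqref{ineq_tailbelow}--\eqref{ineq_tailover}. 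In that sense your proposal is not only aligned with the paper's proof but is more rigorous about the very issue you flag as the main obstacle.
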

\begin{proof}
Let $\bm{\theta}_{\mathrm{cur}}$ be the current vector, which starts with $\bm{\theta}_{unif}$. For ease of discussion, we assume values of $\bm{\theta}$ to be distinct, but that discussion here holds even if there are duplicated values.
We apply the operation of Lemma \ref{lem_operation} with $\locallPoi = \min\{\theta_i: \theta_i \notin \bm{\theta}_{\mathrm{cur}}\}$ and $\localuPoi = \max\{\theta_i: \theta_i \notin \bm{\theta}_{\mathrm{cur}}\}$.
Each operation decrements the number of element $\theta_i$ that is not included in $\bm{\theta}_{\mathrm{cur}}$, and thus we obtain $\bm{\theta}$ after applying this operation $t-1$ times.
\end{proof} 
We use Lemma \ref{lem_operation_repeat_ub} to establish an upper bound.

\begin{lemma}\label{lem_operation_repeat_lb}
Let $\bm{\theta}$ be such that $\theta_i \in [\locallPoi, \localuPoi]$ for all $i\in[t]$.
By repeating the operation of Lemma \ref{lem_operation} with $\locallPoi,\localuPoi$ on such a $\bm{\theta}$ for $t-1$ times, we have $\bm{\theta}'$ such that
\[
\bm{\theta}' = (\locallPoi, \locallPoi, \dots, \locallPoi, \underbrace{v}_{\text{at most one value $v \in (\locallPoi,\localuPoi)$}}, \localuPoi, \dots, \localuPoi, \localuPoi).
\]
\end{lemma}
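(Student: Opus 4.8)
The plan is to run the operation greedily and control a single integer monovariant: the number of \emph{strictly interior} parameters,
\[
n(\bm\theta) := \bigl| \{\, i \in [t] : \theta_i \in (\locallPoi, \localuPoi) \,\} \bigr| .
\]
I would establish two facts about one application of the operation of Lemma~\ref{lem_operation} (always taken with floor $\locallPoi$ and ceiling $\localuPoi$): (i) it is applicable exactly when $n(\bm\theta) \ge 2$, and (ii) whenever it is applied, $n(\bm\theta)$ strictly decreases. Granting these, the argument closes quickly: $n(\bm\theta) \le t$ initially, so after at most $t-1$ applications one reaches $n(\bm\theta) \le 1$ and the operation is no longer a genuine move; moreover each application keeps every coordinate inside $[\locallPoi,\localuPoi]$ (immediate from the two cases in the operation's definition), so at termination every non-interior coordinate equals $\locallPoi$ or $\localuPoi$, and re-sorting gives exactly the advertised form $(\locallPoi,\dots,\locallPoi, v, \localuPoi,\dots,\localuPoi)$ with at most one $v \in (\locallPoi,\localuPoi)$.

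For fact (i) I would exploit that $\bm\theta$ is sorted: the strictly interior coordinates occupy a contiguous block of indices $\{a, a+1, \dots, b\}$, every coordinate with index below $a$ equals $\locallPoi$, and every coordinate with index above $b$ equals $\localuPoi$ (a coordinate below $a$ cannot be interior, and it cannot equal $\localuPoi$ since $\theta_a < \localuPoi$ and the sequence is nondecreasing; symmetrically above $b$). Hence $\downarrow = a$ and $\uparrow = b$ are well defined, and the condition $n(\bm\theta) \ge 2$ is precisely $b > a$, which is what makes $\downarrow \ne \uparrow$.

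For fact (ii) I would note that only positions $\downarrow$ and $\uparrow$ are touched. In the first case of the operation, position $\uparrow$ is set to the boundary value $\localuPoi$ while position $\downarrow$ is set to $\theta_\downarrow + \theta_\uparrow - \localuPoi$, which lies in $[\locallPoi,\localuPoi]$ — the lower bound is exactly the case hypothesis $\theta_\downarrow - \locallPoi \ge \localuPoi - \theta_\uparrow$, and the upper bound follows from $\theta_\uparrow < \localuPoi$; the second case is symmetric. So exactly one of the two modified positions is pushed out of the open interval $(\locallPoi,\localuPoi)$, while the value at the other modified position merely overwrites the (interior) value that previously sat there and stays in range. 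Thus no new interior coordinate is created on net, and $n(\bm\theta)$ drops by at least one. (Between consecutive operations one re-sorts the vector, which is harmless since $\mathrm{PBIN}$ and all quantities in the analysis depend only on the multiset of parameters.)

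The only place demanding any care is fact (ii): one must rule out that the ``non-boundary'' of the two modified coordinates creates a fresh interior value and cancels the decrease. The observation above — that this coordinate simply replaces the value already at that position and remains in $[\locallPoi,\localuPoi]$ — settles it, so I expect the whole proof to be short once the monovariant $n(\bm\theta)$ is set up.
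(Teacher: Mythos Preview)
Your proposal is correct and is precisely the natural argument; the paper itself simply declares the result ``trivial'' without giving any details, so you have in fact written out what the paper leaves implicit. One tiny wording nit: in fact (ii) you say ``exactly one'' of the two modified positions leaves the open interval, but in the borderline case $\theta_\downarrow - \locallPoi = \localuPoi - \theta_\uparrow$ both do; since you only use ``at least one'' in the conclusion, this does not affect the argument.
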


The proof of Lemma \ref{lem_operation_repeat_lb} is trivial.
We use Lemma \ref{lem_operation_repeat_lb} to establish a lower bound.

\subsection{Tail upper bound}

The following bound is sharper than the Hoeffding bound (Lemma \ref{lem_hoeffding}).
\begin{lemma}{\rm (Tail Upper Bound of a Poisson Binomial Distribution)}
\label{lem_tail_upper}
Let $\bm{\theta} = (\theta_1, \ldots, \theta_t)$ be a set of $t$ Poisson binomial parameters with mean $\bar{\theta} = \frac{1}{t}\sum_{s=1}^t \theta_s$ 
such that $\bar{\theta} \leq \frac{1}{2}$. 
Let $k$ be a nonnegative integer that $k \in [0, t\bar{\theta} - 1]$. Then,
\begin{equation}\label{ineq_tail_upper_binomial}    
P_t(k, \bm{\theta}) \le \exp\left(-t \cdot d \left(k/t \,\middle\| \, \bar{\theta}\right)\right).
\end{equation}
Moreover, we have
\begin{equation}\label{ineq_tail_upper_binomial_approx}
P_t(k, \bm{\theta}) \le \exp\left(
- \frac{t}{2\bar{\theta}} \left( \frac{k}{t} - \bar{\theta} \right)^2 
\right).
\end{equation}

Similarly, for $k \ge t \bar{\theta} + 1$, we have
\begin{equation}\    
P_t^c(k-1, \bm{\theta}) \le \exp(-t d(k/t, \bm{\theta})) \le \exp\left(
- \frac{t}{2\bar{\theta}} (k/t - \bar{\theta})^2 
\right).
\end{equation}
\end{lemma}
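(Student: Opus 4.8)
The plan is to reduce the Poisson binomial tail to a plain binomial tail via the Separation Lemma, and then invoke classical binomial Chernoff bounds together with an elementary inequality converting the Bernoulli KL divergence into a quadratic exponent. \emph{Step~1 (homogenization is extremal).} By Lemma~\ref{lem_operation_repeat_ub}, the vector $\bm{\theta}$ is obtained from the homogeneous vector $\bm{\theta}_{unif}=(\bar{\theta},\dots,\bar{\theta})$ by applying the separation operation of Lemma~\ref{lem_operation} exactly $t-1$ times. Each such operation preserves the sum of the parameters (part~1 of Lemma~\ref{lem_operation}), so $\bar{\theta}$, and hence the threshold $t\bar{\theta}$, is invariant along the whole chain; by Lemma~\ref{lem_mode} the modes $m_L\le m_R$ of every intermediate Poisson binomial then satisfy $m_L\ge t\bar{\theta}-1$ and $m_R\le t\bar{\theta}+1$. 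Consequently, for any fixed integer $k\le t\bar{\theta}-1$ the precondition of part~2 of Lemma~\ref{lem_operation} (namely $k\le t\bar{\theta}-1\le m_L$) is met at every step, so $P_t(k,\cdot)$ never increases along the chain; chaining gives $P_t(k,\bm{\theta})\le P_t(k,\bm{\theta}_{unif})=\Pr[\mathrm{BIN}(t,\bar{\theta})\le k]$. Symmetrically, for $k\ge t\bar{\theta}+1$ the precondition of part~3 is met at every step, yielding $P_t^c(k-1,\bm{\theta})\le \Pr[\mathrm{BIN}(t,\bar{\theta})\ge k]$.

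\emph{Step~2 (binomial Chernoff and KL-to-quadratic).} Apply the classical relative-entropy Chernoff bound for the binomial, $\Pr[\mathrm{BIN}(t,\bar{\theta})\le k]\le \exp(-t\,d(k/t\,\|\,\bar{\theta}))$ for $k\le t\bar{\theta}$ and $\Pr[\mathrm{BIN}(t,\bar{\theta})\ge k]\le \exp(-t\,d(k/t\,\|\,\bar{\theta}))$ for $k\ge t\bar{\theta}$; combined with Step~1 this is exactly \eqref{ineq_tail_upper_binomial} and the first inequality in the upper-tail display. For the Gaussian-type form \eqref{ineq_tail_upper_binomial_approx}, use the elementary inequality $d(p\,\|\,q)\ge (p-q)^2/(2q)$, which I would verify by noting that $g(p):=d(p\,\|\,q)-(p-q)^2/(2q)$ satisfies $g(q)=g'(q)=0$ and $g''(p)=\tfrac{1}{p(1-p)}-\tfrac{1}{q}\ge 0$ on $[0,q]$ (the hypothesis $\bar{\theta}\le\tfrac12$ guarantees $p(1-p)\le q(1-q)\le q$ for $p\le q$), so $g$ is convex with its minimum at $q$ and hence nonnegative on $[0,q]$; substituting $p=k/t$ gives the claim. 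The upper-tail statement follows by the same chain, once one restricts attention to the range of $k$ for which $d(k/t\,\|\,\bar{\theta})\ge (k/t-\bar{\theta})^2/(2\bar{\theta})$ still holds --- i.e.\ $k/t$ not far above $\bar{\theta}$, which is the only regime used in the downstream applications of this lemma.

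\emph{Main obstacle.} The conceptual content is entirely carried by the Separation Lemma, which is already available, so the only things requiring care here are: (i) getting the monotonicity direction right --- spreading the parameters apart thins \emph{both} tails, so the homogeneous binomial is the worst case; and (ii) confirming that the mode condition needed to invoke parts~2 and~3 of Lemma~\ref{lem_operation} is maintained along the entire homogenization chain, which holds because the mean (and thus, by Lemma~\ref{lem_mode}, the modes up to $\pm1$) is invariant under the operation. The KL-to-quadratic conversion is routine for the lower tail; the one genuinely delicate point is the upper tail, where the exponent constant $1/(2\bar{\theta})$ is correct only when $k/t$ stays reasonably close to $\bar{\theta}$, so one must either restrict the stated range or settle for a slightly weaker constant.
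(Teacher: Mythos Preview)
Your proposal is correct and follows essentially the same approach as the paper: reduce to the homogeneous binomial via Lemma~\ref{lem_operation_repeat_ub} and the Separation Lemma, apply the relative-entropy Chernoff bound for $\mathrm{BIN}(t,\bar{\theta})$, and then convert the KL divergence to a quadratic exponent. The only cosmetic difference is that you verify $d(p\|q)\ge (p-q)^2/(2q)$ via a second-derivative/convexity argument whereas the paper uses the integral representation $d(p\|q)=\int_p^q \frac{x-p}{x(1-x)}\,\diff x$; your caution about the upper-tail constant is in fact sharper than the paper's treatment, since the paper simply asserts ``the same discussion'' works for $k\ge t\bar{\theta}+1$, whereas the inequality $d(k/t\|\bar{\theta})\ge (k/t-\bar{\theta})^2/(2\bar{\theta})$ genuinely fails once $k/t$ is far above $\bar{\theta}$ and only holds in the near-mean regime actually used downstream.
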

One can see that this bound is sharper by the factor $(\bar{\theta})^{-1}$.
\begin{proof}
We consider the case of  $k \le t\bar{\theta} - 1$. 
Eq.~\eqref{ineq_tail_upper_binomial} is derived by the following steps. 
First, Lemma \ref{lem_operation_repeat_ub} states that if we apply the transformation of Lemma \ref{lem_operation} to $\bm{\theta}_{unif} = (\bar{\theta},\bar{\theta},\dots,\bar{\theta})$ for $t-1$ times we obtain $\bm{\theta}$. The transformation always makes the tail thinner, and thus
\[
P_t(k, \bm{\theta}) \le P_t(k, \bm{\theta}_{unif}).
\]

Since $\mathrm{PBIN}(\bm{\theta}_{unif}) = \mathrm{BIN}(t, \bar{\theta})$ is a binomial distribution, we can apply a Chernoff bound on the corresponding binomial distribution, which implies
\[
P_t(k, \bm{\theta}_{unif})
\le \exp\left(-t \cdot d \left(k/t \,\middle\| \, {\bar{\theta}}\right)\right).
\]

Eq.~\eqref{ineq_tail_upper_binomial_approx} is derived from Eq.~\eqref{ineq_tail_upper_binomial} as follows.

\begin{align}
 d\left(k/t\, \middle\| \, \bar{\theta}\right) 
&= \int_{\frac{k}{t}}^{\bar{\theta}} \frac{\diff}{\diff x}\left( \, d\left(k/t\, \middle\|\, x\right) \right)\, \diff x\nonumber\\
&= \int_{k/t}^{\bar{\theta}} \frac{x - (k/t)}{x(1-x)} \diff x \nonumber\\
&\ge \int_{k/t}^{\bar{\theta}} \frac{x - (k/t)}{{\bar{\theta}}(1-{\bar{\theta}})} \diff x \tag{$\bar{\theta} \le 1/2$}\nonumber\\
&=  \frac{({\bar{\theta}} - k/t)^2}{2 {\bar{\theta}}(1-{\bar{\theta}})} \nonumber\\
&\ge  \frac{({\bar{\theta}} - k/t)^2}{2 \bar{\theta}}. \nonumber
\end{align}

The corresponding bound for the case of $k \ge t\bar{\theta} + 1$ can be derived by the same discussion.
\end{proof}

\subsection{Tail lower bound}
\label{subsec_taillower}

\begin{lemma}{\rm (Tail Lower Bound of a Poisson Binomial Distribution)} \label{lem_tail_lower}
Let $\bm{\theta} = (\theta_1, \ldots, \theta_t)$ be a set of $t \ge 6$ Poisson binomial parameters with mean $\bar{\theta} = \frac{1}{t}\sum_{s=1}^t \theta_s$ such that all parameters lie in $\locallPoi \le \theta_s \le 1/2$. 
Let $k$ be an nonnegative integer such that $k \in [t\bar{\theta} + 1 - \locallPoi t/8, t\bar{\theta} - 1]$.
Then,
\begin{equation}\label{ineq_tailower_sq}
P_t(k, \bm{\theta}) \ge
\frac{1}{\sqrt{2t}}\exp\left(- \frac{36 t}{\locallPoi}
\left(
(\bar{\theta} - (k - 1)/t)^2
\right)
\right).
\end{equation}
\end{lemma}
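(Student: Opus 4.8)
The plan is to reduce the lower tail of $\mathrm{PBIN}(\bm{\theta})$ to that of a binomial, which is controlled by Lemma~\ref{lem_binom_tail_lower}. First I would dispose of a degenerate case: the interval $[t\bar{\theta}+1-\locallPoi t/8,\ t\bar{\theta}-1]$ contains an integer only if $\locallPoi t \ge 16$, so I may assume this throughout. It is also convenient to set $\delta := t\bar{\theta}-k \in [1,\ \locallPoi t/8-1]$ and to record that the target exponent is exactly $36(\delta+1)^2/(\locallPoi t)$, since $\bar{\theta}-(k-1)/t = (\delta+1)/t$.

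Next comes the homogenization step, which is the conceptual core. Since each separation operation (Lemma~\ref{lem_operation}) weakly decreases the lower-tail CDF $P_t(k,\cdot)$ for $k \le t\bar{\theta}-1$, repeatedly applying it with floor $\locallPoi$ and ceiling $\tfrac12$ (Lemma~\ref{lem_operation_repeat_lb}) yields $P_t(k,\bm{\theta}) \ge P_t(k,\bm{\theta}^{\star})$, where $\bm{\theta}^{\star}$ has every coordinate equal to $\locallPoi$ or $\tfrac12$, save possibly one coordinate $v \in (\locallPoi,\tfrac12)$. Writing $a$ for the number of $\tfrac12$-coordinates and $b$ for the number of $\locallPoi$-coordinates, $P_t(k,\bm{\theta}^{\star}) = \Pr[B_1+B_2+B_3 \le k]$ for independent $B_1 \sim \mathrm{BIN}(a,\tfrac12)$, $B_2 \sim \mathrm{BIN}(b,\locallPoi)$, $B_3 \sim \mathrm{Bernoulli}(v)$, with $\tfrac a2 + b\locallPoi + v = t\bar{\theta}$ and $\mathrm{Var}(B_1+B_2+B_3) \ge \tfrac14(a+2b\locallPoi) \ge \tfrac14(2t\bar{\theta}-1) \ge \tfrac14\,t\locallPoi$ (using $\locallPoi \le \tfrac12$ and $t\locallPoi \ge 1$).

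The remaining task is to lower bound $\Pr[B_1+B_2+B_3 \le k]$, where I would split into two regimes according to how far below the mean $k$ sits in units of $\sigma := \sqrt{\mathrm{Var}(B_1+B_2+B_3)}$. If $\delta$ is at most a fixed multiple of $\sigma$, a direct anti-concentration argument --- Berry--Esseen, or unimodality via Lemma~\ref{lem_mode} together with a bound $\sup_k \Pr[B_1+B_2+B_3=k]=O(1/\sigma)$ on the mode --- gives $\Pr[B_1+B_2+B_3 \le k] \ge$ a universal constant, which already beats $\tfrac1{\sqrt{2t}}e^{-36(\delta+1)^2/(\locallPoi t)}$ once $t\locallPoi \ge 16$ (since $\sigma^2 \ge t\locallPoi/4$). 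If instead $\delta \gg \sigma$ (the genuine large-deviation regime), I would condition on $B_3 = 0$ (cost $1-v \ge \tfrac12$) and split the deficit below the mean of $B_1+B_2$ as $\delta_1+\delta_2$ in proportion to $\sigma_1^2 := \tfrac a4$ and $\sigma_2^2 := b\locallPoi(1-\locallPoi)$, then apply Lemma~\ref{lem_binom_tail_lower} to whichever of $B_1, B_2$ carries a deficit of size $\gtrsim 1$ (and a constant-probability bound, using the symmetry of $B_1$ about its mean and the near-mean-median property of $B_2$, to the other). Using the elementary estimates $d(\tfrac12-x\,\|\,\tfrac12) \le 3x^2$ and $d(\locallPoi-y\,\|\,\locallPoi) \le 2y^2/\locallPoi$ for $y \le \locallPoi/2$, together with the proportional allocation, the accumulated relative-entropy terms satisfy $a\,d(\cdot)+b\,d(\cdot) \lesssim \delta^2/(\sigma_1^2+\sigma_2^2) \lesssim \delta^2/(t\locallPoi) \le 36(\delta+1)^2/(t\locallPoi)$ with room to spare; the surviving prefactor is $\tfrac1{\sqrt{2a}}$ or $\tfrac1{\sqrt{2b}}$ from the single use of Lemma~\ref{lem_binom_tail_lower}, which is at least $\tfrac1{\sqrt{2t}}$ since $a,b \le t$, and the remaining bounded factors are absorbed because here $\delta^2/(t\locallPoi) = \Omega(1)$, so the gap in the exponents is bounded below by a positive constant.

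The step I expect to be the main obstacle is making this dichotomy seamless: pinning down the correct standard-deviation threshold between the two regimes, handling the boundary where a naive split would place a large deficit on both $B_1$ and $B_2$ at once (exactly where one must fall back to the direct anti-concentration bound rather than pay two $1/\sqrt{\cdot}$ factors), and checking the small-$t\locallPoi$ edge of the anti-concentration estimate --- all while keeping the final prefactor at exactly $\tfrac1{\sqrt{2t}}$ rather than $c/\sqrt{2t}$ for some $c<1$. The slack for that last requirement comes entirely from the generous constant $36$ (versus the much smaller constant the calculation actually needs) and from $\locallPoi t \ge 16$. The homogenization step and the binomial estimates are, by contrast, routine given the machinery already in place.
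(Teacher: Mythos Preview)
Your homogenization step is on the right track, but you take the ceiling $\localuPoi = \tfrac12$, the tight upper bound on the $\theta_s$. The paper instead applies Lemma~\ref{lem_operation_repeat_lb} with $\localuPoi = 1$. This is the decisive simplification: since all original parameters lie in $[\locallPoi,\tfrac12] \subseteq [\locallPoi,1]$, the separation is still valid, and now every coordinate pushed to the ceiling becomes a deterministic $\mathrm{Bernoulli}(1)$. Writing $n$ for the number of $\locallPoi$-coordinates (so $n \ge t/3$ once $t \ge 6$, because $\bar\theta\le\tfrac12$), one has directly
\[
P_t(k,\bm{\theta}) \;\ge\; P_t(k,\bm{\theta}') \;\ge\; \Pr\!\bigl[\mathrm{BIN}(n,\locallPoi) \le k - (t-n) - 1\bigr],
\]
and a single application of Lemma~\ref{lem_binom_tail_lower}, followed by the elementary bound $d(x-y\,\|\,x)\le y^2/\bigl(2(x-y)(1-x+y)\bigr)$, yields the statement with no case split, no Berry--Esseen, and exactly one $1/\sqrt{\cdot}$ prefactor.

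Your choice of ceiling $\tfrac12$ leaves a genuine two-binomial sum $B_1+B_2$, and the obstacle you single out --- the regime where a proportional split puts deficit $\gtrsim 1$ on \emph{both} $B_1$ and $B_2$, forcing two $1/\sqrt{\cdot}$ factors --- is real and not resolved by the proposal. In the middle range $\sigma \ll \delta \ll \sigma\sqrt{\log t}$ with $a,b$ both of order $t$, anti-concentration no longer applies (you are already many standard deviations below the mean), yet the exponent slack $36(\delta+1)^2/(\locallPoi t) - c\,\delta^2/(\locallPoi t)$ is only $O(1)$, not the $\Omega(\log t)$ needed to absorb the extra $1/\sqrt{t}$. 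This could perhaps be patched with a finer three-regime analysis, but the paper's choice of ceiling $1$ eliminates the difficulty at the source: there is only ever one binomial to lower-bound.
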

\begin{proof}
We apply Lemma~\ref{lem_operation_repeat_lb} for $\bm{\theta}$ with $(\locallPoi,\localuPoi) = (\locallPoi,1)$, 
which gives
\begin{equation}
\bm{\theta}' = \left(\underbrace{\locallPoi, \locallPoi, \dots, \locallPoi}_{n}, v, 1, 1, \dots, 1\right),
\end{equation}
where $n$ is number of $\locallPoi$ above and $v \in [\locallPoi, 1]$, such that 
\[
P_t(k, \bm{\theta}) \ge P_t(k, \bm{\theta}').
\] 
Note that since $\bar{\theta}\leq 1/2$, and $\vert \bm{\theta}\vert_1 = \vert \bm{\theta}'\vert_1$, we have $n \ge t/2-1 \ge t/3$ for $t \ge 6$. Additionally, by definition, 
\begin{equation}\label{ineq_nfourone}
t \bar{\theta} \le n \cdot \locallPoi + (t-n) \cdot 1 .
\end{equation}

Since $\mathrm{Bernoulli}(1)$ is deterministically $1$, and $\mathrm{Bernoulli}(v) \leq 1$, the sufficient condition for $\mathrm{PBIN}(\bm{\theta}')$ to have value equal or smaller than $k$ is that the binomial distribution $\mathrm{BIN}(n, \locallPoi)$ has value at most $k - (t - n)$. Therefore, we have 
\begin{align*}
    P_t(k, \bm{\theta}') &\geq \Pr{\left\{\mathrm{BIN}(n,\locallPoi) \leq k - (t-n)\right\}}\\
    &\geq \Pr{\left\{\mathrm{BIN}(n,\locallPoi) \leq k - (t-n)-1\right\}},
\end{align*}
which can be further lower bounded by applying Lemma~\ref{lem_binom_tail_lower} as follows.
    \begin{align}
        &\Pr{\left\{\mathrm{BIN}(n,\locallPoi) \leq k - (t-n)-1\right\}} \nonumber\\
&\ge \frac{1}{\sqrt{2n}}\exp\left(- n \cdot d\left(\frac{k - (t - n)-1}{n} \, \middle \| \, \locallPoi\right)\right) \tag{by Lemma~\ref{lem_binom_tail_lower}}\nonumber\\
&\ge \frac{1}{\sqrt{2t}}\exp\left(- t \cdot d\left(\frac{k - (t - n)-1 }{n} \, \middle \| \, \locallPoi\right)\right)
\tag{$n \leq t$}\nonumber\\
&\ge \frac{1}{\sqrt{2t}}\exp\left(- t\cdot  d\left(\frac{k + n\locallPoi - t \bar{\theta} -1}{n} \, \middle \| \, \locallPoi\right)\right)
\tag{by Eq.~\eqref{ineq_nfourone}}\nonumber\\
&\ge \frac{1}{\sqrt{2t}}\exp\left(- t \cdot d\left(\frac{k - t \bar{\theta} -1}{t/3} + \locallPoi \, \middle \| \, \locallPoi\right)\right)
\text{\ \ \ \ \ (by $n \ge t/3$)}. \label{eq:bin-n-lower-bonud}
    \end{align}

To further lower bound this, let
\begin{align}
x &:= \locallPoi \leq \min_{s \in [t]} \theta_s \leq 1/2\\
y &:= - \frac{k - t \bar{\theta} - 1}{t/3}.
\end{align}
We have 
\begin{align}
x - y 
&= \frac{k - t \bar{\theta} - 1}{t/3} + \locallPoi \ge -\frac{\locallPoi t/8}{t/3} + \locallPoi  = \frac{5}{8}\locallPoi \ge \frac{1}{4}\locallPoi,\\
x - y &\le \frac{-2}{t/3} + \locallPoi \le \locallPoi \le 1/2,
\end{align}

By using the integration formula of KL divergence (cf., Lemma 11 of \cite{komiyama2024rate}) 
for $x,y$ such that $x,y \ge 0$, $x-y \le 1/2$, we have 
\begin{align}
d\left(x-y \, \middle \|\, x\right) 
&= \int_0^{y} \frac{z}{(x-y+z)(1-x+y-z)} \diff z \nonumber\\
&\leq   \frac{1}{(x-y)(1-x+y)} \int_0^{y} z \, \diff z &(x-y\leq \locallPoi \leq 1/2) \nonumber\\
&\le \frac{y^2}{2(x-y)(1-x+y)}
&\label{ineq_xyintegral}
\end{align}
and thus
\begin{align}
P_t(k, \bm{\theta}') 
&\ge 
\frac{1}{\sqrt{2t}}\exp\left(- t 
\frac{y^2}{2(x-y)(1-x+y)}
\right)
\tag{by \eqref{eq:bin-n-lower-bonud}, \eqref{ineq_xyintegral}}
\\
&\geq
\frac{1}{\sqrt{2t}}\exp\left(- t 
\frac{9 (\bar{\theta} - (k-1)/t)^2}{2 (\locallPoi/4) \cdot (1/2)}
\right)
\tag{by $\locallPoi/4 \le x-y \le 1/2$ and Definition of $y$}
\\
&=
\frac{1}{\sqrt{2t}}\exp\left(- \frac{36 t}{\locallPoi}
\left(
(\bar{\theta} - (k-1)/t)^2
\right)
\right),\label{ineq_lowertail_final}
\end{align}
which is \eqref{ineq_tailower_sq}. 
\end{proof}

\begin{lemma}{\rm (Poisson Binomial Tail Ratio)}\label{lem_poibin_tail_ratio}
Let $\bm{\theta} = (\theta_1, \ldots, \theta_t)$ be a set of $t$ Poisson binomial parameters with mean $\bar{\theta} = \frac{1}{t}\sum_{s=1}^t \theta_s$ such that all parameters lies in $\locallPoi \le \theta_s \le 1/2)$. 
Let $t \ge 24^2/\locallPoi$. 
Let $k$ be an integer in 
$[t \bar{\theta} + 2 - \locallPoi t / 24, t \bar{\theta} - 2 + \locallPoi t / 24]$. 
Then there exist universal constants\footnote{A constant is universal if it does not depend on model parameters.} $C_1, C_2 > 0$ such that the following holds:
\begin{align}\label{ineq_poibin_tail_ratio}
b_t(k, \bm{\theta}) 
:= \frac{p_t(k-1, \bm{\theta})}{p_t(k, \bm{\theta})}
\geq 1 
     - \frac{C_1 \log t}{\sqrt{\locallPoi t}} 
     - \frac{C_2 \left| k - t \bar{\theta} \right|}{\locallPoi t}.
\end{align}
\end{lemma}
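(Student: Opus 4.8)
The bound is only nontrivial when $k$ is at most the left mode $m_L$ of $\mathrm{PBIN}(\bm\theta)$ and when the right-hand side is positive, so I would fix such a $k$, write $w:=t\bar\theta-k$, and recall from Lemmas~\ref{lem_decrratio}--\ref{lem_mode} that $b_t(\cdot,\bm\theta)$ is nondecreasing and $|m_L-t\bar\theta|\le 1$. For $k$ exceeding the right mode $m_R$ we have $b_t(k,\bm\theta)\ge 1$, which already dominates the right-hand side of \eqref{ineq_poibin_tail_ratio}; the at-most-two modal indices can be folded into the ``$w=O(1)$'' case handled below.

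The first step is to convert the pmf ratio into a CDF/window comparison. Since $b_t(\cdot,\bm\theta)$ is nondecreasing, $p_t(j-1,\bm\theta)\le b_t(k,\bm\theta)\,p_t(j,\bm\theta)$ for every $j\le k$; summing over a window $j\in\{k-M+1,\dots,k\}$ (the degenerate choice $M=k$ recovering the full left tail) and using $P_t(k,\bm\theta)=P_t(k-1,\bm\theta)+p_t(k,\bm\theta)$ gives
\[
1-b_t(k,\bm\theta)\ \le\ \frac{p_t(k,\bm\theta)}{P_t(k,\bm\theta)-P_t(k-M,\bm\theta)}\ =\ \frac{p_t(k,\bm\theta)}{\Pr[k-M+1\le Y\le k]},\qquad Y\sim\mathrm{PBIN}(\bm\theta),
\]
valid for any $M\ge 1$. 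So it suffices to (i) upper bound the peak $p_t(k,\bm\theta)\le\max_j p_t(j,\bm\theta)$, and (ii) lower bound the mass of a window of width $M$ ending at $k$, and then optimize $M$. For (i) I would use the characteristic-function identity $|\widehat p(u)|^2=\prod_{s}\bigl(1-2\theta_s(1-\theta_s)(1-\cos u)\bigr)$ together with $1-x\le e^{-x}$ and $1-\cos u\ge 2u^2/\pi^2$ on $[-\pi,\pi]$, and Fourier inversion, to get $\max_j p_t(j,\bm\theta)\le c_0/\sigma$ with $\sigma^2=\sum_s\theta_s(1-\theta_s)\ge t\locallPoi/2$ (using $\theta_s\le\tfrac12$), hence $p_t(k,\bm\theta)\le c_0'/\sqrt{\locallPoi t}$.

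For (ii) I would bound $\Pr[k-M+1\le Y\le k]\ge P_t(k,\bm\theta)-P_t(k-M,\bm\theta)$, lower bounding $P_t(k,\bm\theta)$ via Lemma~\ref{lem_tail_lower} (reducing, if needed, to a binomial through the Separation Lemma, Lemma~\ref{lem_operation_repeat_lb}, and invoking Lemma~\ref{lem_binom_tail_lower}) and upper bounding $P_t(k-M,\bm\theta)$ through the Separation Lemma in the other direction (Lemma~\ref{lem_operation_repeat_ub}, reducing to $\mathrm{BIN}(t,\bar\theta)$) and Lemma~\ref{lem_tail_upper}. The hypotheses $t\ge 24^2/\locallPoi$ and $|k-t\bar\theta|\le\locallPoi t/24-2$ are exactly what keep $k$ and $k-M$ inside the valid ranges of these tail bounds and keep the exponents $t\,d(k/t\,\|\,\bar\theta)\asymp w^2/(\bar\theta t)\le 2w^2/(\locallPoi t)$ under control. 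In the ``small deviation'' regime $w=O(\sqrt t/\log t)$, choosing $M$ of order $\sqrt t/\log t$ (capped to stay within $\{0,\dots,k\}$ and an $O(\sigma)$ neighbourhood of $m_L$) makes $P_t(k-M,\bm\theta)\le\tfrac12 P_t(k,\bm\theta)$ and, by unimodality, keeps each of the $M$ window terms within a constant factor of $p_t(k,\bm\theta)$; the displayed inequality then reads $1-b_t(k,\bm\theta)=O\!\bigl(\tfrac1M\cdot\tfrac{\sqrt t}{\sqrt{\locallPoi t}}\bigr)=O\!\bigl(\tfrac{\log t}{\sqrt{\locallPoi t}}\bigr)$. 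In the complementary ``large deviation'' regime one takes $M$ of order $w$, which forces $P_t(k-M,\bm\theta)$ to be a constant fraction smaller than $P_t(k,\bm\theta)$, and tracking the exponents produces the $O\!\bigl(w/(\locallPoi t)\bigr)$ contribution. Summing the two regimes yields $1-b_t(k,\bm\theta)\le \tfrac{C_1\log t}{\sqrt{\locallPoi t}}+\tfrac{C_2|k-t\bar\theta|}{\locallPoi t}$.

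The main obstacle is part (ii): because $\bm\theta$ is inhomogeneous there is no closed form for $b_t(k,\bm\theta)$ — in sharp contrast to the symmetric $\pm1$ random-walk case of \citet{devroye2013prediction}, where the ratio of consecutive point masses is explicit — so one must sandwich $P_t(k,\bm\theta)$ between bounds delivered by the Separation Lemma. The price of Lemma~\ref{lem_tail_lower}'s $\tfrac{1}{\sqrt{2t}}$ prefactor (where the truth near the mean is $\Theta(1)$) is precisely the extra $\log t$ factor, which we believe is an artifact of this route and do not know how to avoid without invoking a local central limit theorem.
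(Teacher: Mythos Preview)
Your starting inequality $1 - b_t(k,\bm{\theta}) \le p_t(k,\bm{\theta})\big/\bigl(P_t(k,\bm{\theta})-P_t(k-M,\bm{\theta})\bigr)$ is correct, but the plan of bounding numerator and denominator separately through Lemmas~\ref{lem_tail_upper} and~\ref{lem_tail_lower} does not close, because those two bounds carry mismatched constants in their exponents: the upper bound decays like $\exp\!\bigl(-(t\bar\theta-j)^2/(2\bar\theta t)\bigr)$ while the lower bound is only $t^{-1/2}\exp\!\bigl(-36(t\bar\theta-j)^2/(\locallPoi t)\bigr)$, and $1/(2\bar\theta)\le 1/(2\locallPoi)\ll 36/\locallPoi$. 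In your large-deviation regime the mode bound $p_t(k)\le c_0'/\sqrt{\locallPoi t}$ is exponentially larger than $P_t(k)$ itself, so $p_t(k)/S$ is not $O\bigl(w/(\locallPoi t)\bigr)$ but rather grows like $\exp\!\bigl(c\,w^2/(\locallPoi t)\bigr)$; ``tracking the exponents'' with $M\asymp w$ does not repair this, since any comparison of $P_t(k-M)$ to $P_t(k)$ via these lemmas suffers the same mismatch. In the small-deviation regime, the assertion that unimodality keeps every window term within a constant factor of $p_t(k)$ is unjustified: unimodality gives only the ordering $p_t(k-M+1)\le\cdots\le p_t(k)$, and establishing $p_t(k-M+1)\ge c\,p_t(k)$ for $M\asymp\sigma$ is precisely a local-CLT statement --- or, equivalently, presupposes the very ratio bound you are trying to prove.

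The paper's proof avoids both obstacles by exploiting the monotonicity of $b_t$ multiplicatively rather than through your additive window inequality. From
\[
\bigl(b_t(k,\bm{\theta})\bigr)^{k-k_2}\ \ge\ \prod_{j=k_2+1}^{k}b_t(j,\bm{\theta})\ =\ \frac{p_t(k_2,\bm{\theta})}{p_t(k,\bm{\theta})}\ \ge\ p_t(k_2,\bm{\theta}),
\]
where the last step uses nothing sharper than $p_t(k,\bm{\theta})\le 1$, one takes the $(k-k_2)$-th root and applies $e^{-x}\ge 1-x$ to obtain $1-b_t(k,\bm{\theta})\le\bigl(-\log p_t(k_2,\bm{\theta})\bigr)/(k-k_2)$. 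Now only a \emph{lower} bound on the single point mass $p_t(k_2,\bm{\theta})$ is needed, supplied by Lemma~\ref{lem_tail_lower} together with unimodality via $p_t(k_2)\ge P_t(k_2)/(k_2+1)$. With the choice $k_2\approx k-\Delta-\sqrt{\locallPoi t}$ (where $\Delta=t\bar\theta-k$) one gets $-\log p_t(k_2)=O\bigl(\log t+\Delta^2/(\locallPoi t)\bigr)$ and $k-k_2\asymp\Delta+\sqrt{\locallPoi t}$, and the division delivers exactly the two terms of~\eqref{ineq_poibin_tail_ratio}. The key point is that taking a root converts the exponential-scale quantity $p_t(k_2)$ into the additive quantity $\bigl(\log p_t(k_2)\bigr)/(k-k_2)$, so the mismatched tail-bound constants never need to be compared.
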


\begin{proof}

This proof derives \eqref{ineq_poibin_tail_ratio} for each case of $k$.

\vspace{1em}
\noindent\textbf{Case $\bm{k \in [t\bar{\theta} + 2 - \locallPoi t/24, t\bar{\theta} - 1]}$:}

Let $\Delta := t \bar{\theta} - k > 0$.
Let $k_2 := \bigl\lceil k - \Delta - \sqrt{\locallPoi t} - 1 \bigr\rceil$, which is nonnegative. 
We have
\begin{align*}
k_2 
= \bigl\lceil k - \Delta - \sqrt{\locallPoi t} - 1 \bigr\rceil 
&\geq k - \Delta - \sqrt{\locallPoi t} - 1 \\
&\geq k - \Delta - \frac{\locallPoi t}{24} - 1 
&\tag{by $t \geq 24^2 / \locallPoi$}\\
&= 2k - t \bar\theta - \frac{\locallPoi t}{24} - 1 \tag{Definition of $\Delta$}\\
&\geq t \bar{\theta} - \frac{\locallPoi t}{8} + 3 
\tag{by $k \geq t \bar{\theta} + 2 - \locallPoi t / 24$} ,
\end{align*}
and thus Eq.~\eqref{ineq_tailower_sq} in Lemma~\ref{lem_tail_lower} can be applied to $k_2$, which implies that
\begin{align}
\sum_{k'=0}^{k_2} p_t(k', \bm{\theta}) 
&\geq 
\frac{1}{\sqrt{2t}} 
\exp \left( -36 \frac{t}{\locallPoi}
         \left(
             (\bar{\theta} - (k_2 - 1) / t)^2
         \right)
     \right)\nonumber\\
&\geq 
\frac{1}{\sqrt{2t}} \exp \left( - (36 \cdot 4) \frac{t}{\locallPoi}
                        (\bar{\theta} - k_2 / t)^2
                 \right) 
\tag{by $t \bar{\theta} - k_2 \geq 1$} \\
&\geq 
\frac{1}{t} \exp \left( -144 \frac{t}{\locallPoi}
                        (\bar{\theta} - k_2 / t)^2
                 \right) , \nonumber
\end{align}
which, combined with the fact that $p_t(k', \bm{\theta})$ is increasing up to $m_L$ implies that
\begin{align}\label{ineq_ptktwolower}
p_t(k_2, \bm{\theta}) 
> \frac{1}{k_2+1} \sum_{k'=0}^{k_2} p_t(k', \bm{\theta}) 
> \frac{1}{t} 
  \cdot \frac{1}{t}\exp\left( -144 \frac{t}{\locallPoi}
                              (\bar{\theta} - k_2/t)^2
                       \right) .
\end{align}

By definition, 
\begin{align*}
\frac{p_t(k_2, \bm{\theta})}{p_t(k, \bm{\theta})} 
&= \prod_{k'=k_2}^{k-1} \frac{p_t(k', \bm{\theta})}{p_t(k'+1, \bm{\theta})} \\
&\leq \left( \frac{p_t(k-1, \bm{\theta})}{p_t(k, \bm{\theta})} \right)^{k - k_2}
\tag{by Lemma \ref{lem_decrratio}}
\end{align*}
and thus
\begin{align}
\left( 
    \frac{p_t(k-1, \bm{\theta})}{p_t(k, \bm{\theta})}
\right)^{k - k_2}
\geq \frac{p_t(k_2, \bm{\theta})}{p_t(k, \bm{\theta})}
\geq p_t(k_2, \bm{\theta}).
\label{ineq_ptktwolower_power}
\end{align}

Combining \eqref{ineq_ptktwolower} and \eqref{ineq_ptktwolower_power} implies that
\begin{align}
\lefteqn{
    \frac{p_t(k-1, \bm{\theta})}{p_t(k, \bm{\theta})} 
} \nonumber\\
&\geq \left(
          \frac{1}{t^2}
          \exp\left( -144 \frac{t}{\locallPoi}
                     (\bar{\theta} - k_2 / t)^2
              \right)
      \right)^{1/(k - k_2)} \nonumber \\
&= \exp\left( -\frac{1}{k - k_2}
              \left( 2 \log t + 144 \frac{t}{\locallPoi}
                     (\bar{\theta} - k_2/t)^2
              \right)
       \right) \nonumber\\
&\geq 1 
      - \frac{1}{k - k_2}
        \left(
            2 \log t + 144 \frac{t}{\locallPoi} (\bar{\theta} - k_2/t)^2
        \right)
\tag{by $e^{-x} \ge 1-x$}
\\
&\geq 1 
      - \frac{2\log t}{\Delta+\sqrt{\locallPoi t}} 
      - 144 \frac{t}{\locallPoi (\Delta + \sqrt{\locallPoi t})} (\bar{\theta} - k_2/t)^2
\tag{by $k - k_2 \ge \Delta + \sqrt{\locallPoi t}$ and $\Delta > 0$}
\\
&= 1 
   - \frac{2\log t}{\Delta+\sqrt{\locallPoi t}} 
   - 144 \frac{t}{\locallPoi (\Delta + \sqrt{\locallPoi t})} 
     \left(\frac{\bar\theta t - k_2}{t}\right)^2 \nonumber\\
&\geq 1 
      - \frac{2\log t}{\Delta+\sqrt{\locallPoi t}} 
      - 144 \frac{t}{\locallPoi (\Delta + \sqrt{\locallPoi t})}
        \left(\frac{2 \Delta + \sqrt{\locallPoi t} + 1}{t}\right)^2 
\tag{only dealing with the ceiling from $k_2$} \\
&= 1 
   - \frac{2\log t}{\Delta+\sqrt{\locallPoi t}} 
   - 576 \frac{\left(\Delta + \frac{1}{2} \sqrt{\locallPoi t} + \frac{1}{2}\right)^2}{\locallPoi t (\Delta + \sqrt{\locallPoi t})} \nonumber\\
&\geq 1 
      - \frac{2\log t}{\Delta+\sqrt{\locallPoi t}} 
      - 576 \frac{\Delta + \frac{1}{2} \sqrt{\locallPoi t} + \frac{1}{2}}{\locallPoi t} \tag{$\sqrt{\locallPoi t} \geq 24$}\\
&= 1 
   - \frac{2\log t}{\Delta+\sqrt{\locallPoi t}} 
   - 576 \frac{t \bar{\theta} - k + \frac{1}{2} \sqrt{\locallPoi t} + \frac{1}{2}}{\locallPoi t} \nonumber\\
&= 1 
   - \frac{2\log t}{\Delta+\sqrt{\locallPoi t}} 
   - 288 \frac{1}{\sqrt{\locallPoi t}} 
   - 576 \frac{t \bar{\theta} - k}{\locallPoi t} 
   - 288 \frac{1}{\locallPoi t} \nonumber\\   
&\geq 1 
      - \frac{2\log t}{\Delta+\sqrt{\locallPoi t}} 
      - 288 \frac{1}{\sqrt{\locallPoi t}} 
      - 864 \frac{t \bar{\theta} - k}{\locallPoi t} \tag{using $t \bar{\theta} - k \geq 1$} \\
&\geq 1 
      - \frac{290 \log t}{\sqrt{\locallPoi t}} 
      - 864 \frac{|k - t \bar{\theta}|}{\locallPoi t} \nonumber\\
      &=1 
      - \frac{C'_1 \log t}{\sqrt{\locallPoi t}} 
      -  \frac{C'_2|k - t \bar{\theta}|}{\locallPoi t} ,\label{eq:first-case-bti-bound}
\end{align}
where we define $C'_1:=290$ and $C'_2:=864$. Provided that provided that $C_1 \geq C'_1 $ and $C_2 \geq C'_2$, satisfy Eq~\eqref{ineq_poibin_tail_ratio}.

\vspace{1em}
\noindent\textbf{Case $\bm{k \in [t\bar{\theta} - 1, t\bar{\theta} + 1]}$:}

By the first case, for all integers 
$k' \in [t \bar{\theta} - 2, t \bar{\theta} - 1]$, 
we have
\begin{align*}
b_t(k', \bm{\theta}) 
\ge 1 
    - \frac{C'_1 \log t}{\sqrt{\locallPoi t}} 
    - \frac{C'_2 |k' - t \bar{\theta}|}{\locallPoi t} .
\end{align*}
Now, for any $k \in [t\bar{\theta} - 1, t\bar{\theta} + 1]$, we can and will take some $k' \in [t \bar{\theta} - 2, t \bar{\theta} - 1]$ such that $|k - k'| \leq 2$. Then we have
\begin{align*}
b_t(k, \bm{\theta}) 
&\geq b_t(k', \bm{\theta}) 
\tag{$b_t(k, \bm{\theta})$ is increasing in $k$} \\
&\geq 1 
      - \frac{C'_1 \log t}{\sqrt{\locallPoi t}} 
      - \frac{C'_2 |k' - t \bar{\theta}|}{\locallPoi t} \tag{Eq~\eqref{eq:first-case-bti-bound}}\\
&\geq 1 
      - \frac{C'_1 \log t}{\sqrt{\locallPoi t}} 
      - \frac{C'_2 \left(|k - t \bar{\theta}| + \vert k'-k\vert\right)}{\locallPoi t}  \\
      &\geq 1 
      - \frac{C'_1 \log t}{\sqrt{\locallPoi t}} 
      - \frac{C'_2 \left(|k - t \bar{\theta}| + 2\right)}{\locallPoi t}  \\
&\geq 1 
      - \frac{(C'_1 + \frac{1}{12} C'_2) \log t}{\sqrt{\locallPoi t}} 
      - \frac{C'_2 |k - t \bar{\theta}|}{\locallPoi t} 
\tag{since $\frac{\log t}{24 \sqrt{\locallPoi t}} \geq \frac{1}{\locallPoi t}$} .
\end{align*}

This again is of the form \eqref{ineq_poibin_tail_ratio} provided that $C_1 \geq C'_1 + \frac{C'_2}{12}$ and $C_2 \geq C'_2$.

\vspace{1em}
\noindent\textbf{Case $\bm{k \in  [t\bar{\theta} + 1, t \bar\theta-2 + \locallPoi t/24]}$:}

Using the fact that $b_t(k, \bm{\theta}) \geq 1$ for $k \geq m_R$, this case trivially goes through for any  $C_1, C_2 \ge 0$.

Finally, the choice $C_1 = 362$ and $C_2 = 864$ simultaneously handles all three cases.
\end{proof}

\section{ELF-X versus online extension of I-ELF} \label{app:elf-x}

Written in terms of scores $s_{i,t} \in [0, 1]$ or equivalently as losses $\loss_{i,t} := 1 - s_{i,t} \in [0, 1]$, I-ELF (extended to the online learning setting) predicts in round $t$ as follows. First, for any expert $i$ and round $t$, define $f_{i,t}$ as
\begin{align}
f_{i,t} 
= \frac{1}{N} + \frac{1}{N} \left( s_{i,t} - \frac{1}{N-1} \sum_{j \in [N] \setminus \{i\}} s_{j,t} \right)
= \frac{1}{N} \left( 1 - \loss_{i,t} + \frac{1}{K-1} \sum_{j \in [N] \setminus \{i\}} \loss_{j,t} \right) . \label{eqn:elf}
\end{align}
Next, let $i^*_t$ be a random variable that is equal to $i$ with probability $f_{i,t}$, and define the indicator random variable $Y_{i,t} = \ind{i = i^*_t}$. At the end of the game (in ``round'' $T+1$), ELF selects expert $I_{T+1}$, defined as
\begin{align*}
I_{T+1} = \argmax_{i \in [N]} \sum_{t=1}^T Y_{i,t} .
\end{align*}

It is straightforward to adapt I-ELF to the online learning setting, giving a method we may call ``Online I-ELF'': in round $t$, select expert $I_t$, defined as
\begin{align*}
I_t = \argmax_{i \in [N]} \sum_{s=1}^{t-1} Y_{i,s} .
\end{align*}
Unfortunately, Online I-ELF can get linear regret. To see this, suppose that $N = 2$ and the loss vector sequence is
\begin{align*}
\begin{pmatrix}
\loss_{1,1} & \cdots & \loss_{1,T} \\ 
\loss_{2,1} & \cdots & \loss_{2,T}
\end{pmatrix} 
= 
\begin{pmatrix}
1 & 0 & 1 & 0 & 1 & \cdots & 0 \\
0 & 1 & 0 & 1 & 0 & \cdots & 1 
\end{pmatrix} .
\end{align*}
Then for $t$ odd we have $f_{1,t} = 0$ and $f_{2,t} = 1$, which implies that $Y_{1,t} = 0$ and $Y_{2,t} = 1$. Similarly, for $t$ even we have $f_{1,t} = 1$ and $f_{2,t} = 0$, which implies that $Y_{1,t} = 1$ and $Y_{2,t} = 0$. Consequently, in any even round, we must have $I_t = 2$, while in any odd round (assuming times are broken uniformly at random) we have $I_t = 1$ with probability $\frac{1}{2}$. Overall, Online I-ELF gets regret $\frac{3 T}{4} - \frac{T}{2} = \frac{T}{4}$, which is bad.

It is likely this issue that led \citet{freeman2020no} to propose ELF-X, the online extension (as above) of a slight modification of I-ELF which involves defining $f_{i,t}$ as
\begin{align}
f_{i,t} 
= \frac{1}{N} \left( 1 - \loss_{i,t} + \frac{1}{N} \sum_{j=1}^N \loss_{j,t} \right) , \label{eqn:elf-x}
\end{align}
This algorithm avoids the previous issue by always ensuring that $f_{i,t}$ is bounded away from zero and one.

\end{document}